\newtheorem{theorem}{Theorem}
\newtheorem{assumption}{Assumption}
\newtheorem{proposition}{Proposition}
\newtheorem{lemma}{Lemma}
\newtheorem{remark}{Remark}
\DeclareMathOperator{\E}{E}
\newcommand{\lambdat}{\tilde{\lambda}}
\newcommand{\logn}{\log n}
\newcommand{\mbP}{\mathbb{P}}
\newcommand{\mbR}{\mathbb{R}}
\newcommand{\mcA}{\mathcal{A}}
\newcommand{\mcD}{\mathcal{D}}
\newcommand{\mcE}{\mathcal{E}}
\newcommand{\mcF}{\mathcal{F}}
\newcommand{\mcM}{\mathcal{M}}
\newcommand{\mcN}{\mathcal{N}}
\newcommand{\mcH}{\mathcal{H}}
\newcommand{\mcT}{\mathcal{T}}
\newcommand{\mcW}{\mathcal{W}}
\newcommand{\mcS}{\mathcal{S}}
\newcommand{\mcX}{\mathcal{X}}
\newcommand{\mcY}{\mathcal{Y}}
\title{Smoothness Adaptive Hypothesis Transfer Learning}
\author[1]{Haotian Lin}
\author[1,2]{Matthew Reimherr}
\affil[1]{Department of Statistics, The Pennsylvania State University}
\affil[2]{Amazon Science}
\date{}
\begin{document}
\maketitle

\begin{abstract}
Many existing two-phase kernel-based hypothesis transfer learning algorithms employ the same kernel regularization across phases and rely on the known smoothness of functions to obtain optimality. Therefore, they fail to adapt to the varying and unknown smoothness between the target/source and their offset in practice. In this paper, we address these problems by proposing \textit{\underline{S}moothness \underline{A}daptive \underline{T}ransfer \underline{L}earning} (SATL), a two-phase kernel ridge regression(KRR)-based algorithm. We first prove that employing the misspecified fixed bandwidth Gaussian kernel in target-only KRR learning can achieve minimax optimality and derive an adaptive procedure to the unknown Sobolev smoothness. Leveraging these results, SATL employs Gaussian kernels in both phases so that the estimators can adapt to the unknown smoothness of the target/source and their offset function. We derive the minimax lower bound of the learning problem in excess risk and show that SATL enjoys a matching upper bound up to a logarithmic factor. The minimax convergence rate sheds light on the factors influencing transfer dynamics and demonstrates the superiority of SATL compared to non-transfer learning settings.
While our main objective is a theoretical analysis, we also conduct several experiments to confirm our results.
\end{abstract}

\section{Introduction}\label{sec: inrtroduction}
Nonparametric regression is one of the most prevalent statistical problems studied in many communities in past decades due to its flexibility in modeling data. A large number of algorithms have been proposed like kernel regression, local regression, smoothing splines, and regression trees, to name only a few. However, the effectiveness of all the algorithms in these existing works is based on having sufficient samples drawn from the same target domain. When samples are scarce, either due to costs or other constraints, the performance of these algorithms can suffer empirically and theoretically. 

Hypothesis Transfer Learning (HTL) \citep{li2007bayesian,kuzborskij2013stability,du2017hypothesis}, which leverages models trained on the source domain and uses samples from the target domain to learn the model shift to the target model, is an appealing and promising mechanism. When the parameters of interest are infinite dimensional, \citet{lin2022transfer} employed the reproducing kernel Hilbert space (RKHS) distance as a metric for assessing similarity in functional regression frameworks, linking the transferred knowledge to the employed RKHS structure. They leveraged offset transfer learning (OTL),  which is one of the most popular HTL algorithms, to obtain target estimators in a two-phase manner, i.e. a trained source model is first learned on the large sample size source dataset and the offset model between the target and source is then estimated via target dataset and trained source models. However, a noteworthy observation is that both phases of estimating the source model and offset model utilize the same RKHS regularization. This raises questions regarding the adherence to the essence of transfer learning, where the offset should ideally possess a simpler structure than the target and source model to reflect the similarity. A similar limitation also appears in a series of two-phase HTL algorithms for finite-dimensional models (e.g. multivariate/high-dimensional linear regression) \citep{bastani2021predicting,li2022transfer}, which typically utilize the $\ell^{1}$ or $\ell^{2}$ norm of the offset parameters as the similarity measure. While $\ell^{1}$-norm can reflect sparsity and $\ell^{2}$ usually serves to control complexity, using the same norm in both phases is more defensible in finite-dimensional models than in infinite-dimensional ones (e.g., nonparametric models). 

In the realm of nonparametric regression, although OTL has shown great success in practice, there are only a few studies that provide theoretical analysis \citep{wang2015generalization,du2017hypothesis}, and these works can still be limited in terms of problem settings, estimation procedures, and theoretical bounds. For example, although \citet{wang2015generalization} noticed the nature of simple offsets, they didn't use any quantity (like Sobolev or H\"older smoothness) to formularize the difference of target/source models and their offset. Their KRR-based OTL algorithm also employed the same kernel to train the source model and the offset and thus has similar limitations as \citet{lin2022transfer} methodologically. \citet{du2017hypothesis} formalized the varying structures via different H\"older smoothness, but the theoretical results derived are under too ideal assumptions, unverifiable in practice. Besides, neither their approaches nor the statistical convergence rates were adaptive and their upper bound overlooked certain factors, failing to provide deeper insights into the influence of domain properties on transfer learning efficacy. This raises the following fundamental question that motivates our study: 
\begin{center}
    \textit{
    Can we develop an HTL algorithm so that the different structures (smoothness) of the target/source functions and their offset can be adaptively learned?  
    }
\end{center}

\paragraph{Main contributions.} This work answers the above question positively and makes the following contributions:

We propose \textit{\underline{S}moothness \underline{A}daptive \underline{T}ransfer \underline{L}earning} (SATL), building upon the prevalent two-phase offset transfer learning paradigm. Specifically, we study the setting where the target/source function lies in Sobolev space with order $m_0$ while the offset function lies in Sobolev space with order $m$ (where $m>m_{0}$). One key feature of SATL is its ability to adapt to the unknown and varying smoothness of the target, source, and offset functions.

We first begin by establishing the robustness of the Gaussian kernel in misspecified KRR, i.e. for regression functions belonging to certain fractional Sobolev spaces (or RKHSs that are norm equivalent to such Sobolev spaces), employing a fixed bandwidth Gaussian kernel in target-only KRR yields minimax optimal generalization error. Remarkably, the optimal order of the regularization parameters follows an exponential pattern, which differs from the variable bandwidth setting and we conduct comprehensive experiments to support the finding. Furthermore, we demonstrate that an estimator, developed through standard training and validation methods, achieves the same optimality up to a logarithmic factor (often called the price of adaptivity), without prior knowledge of the function's true smoothness.

Leveraging these new results of the Gaussian kernel, SATL employs Gaussian kernels in both learning phases, ensuring its adaptability to the diverse and unknown smoothness levels $m_{0}$ and $m$. We also establish the minimax statistical lower bound for the learning problem in terms of excess risk and show that SATL achieves minimax optimality since it enjoys a matching upper bound (up to logarithmic factors). Crucially, our results shed light on the impact of signal strength from both domains on the efficacy of OTL, which, to the best of our knowledge, has been largely overlooked in the existing literature. This insight enhances our understanding of the contributions of each phase in the transfer learning process.

\subsection{Related Literature}\label{subsec: literature review}

\textbf{Transfer Learning}: 
OTL (a.k.a. biases regularization transfer learning) has been extensively researched in supervised regression. The work in \citet{kuzborskij2013stability,kuzborskij2017fast} focused on OTL in linear regression, establishing generalization bounds through Rademacher complexity. \citet{wang2015generalization} derived generalization bounds for applying KRR on OTL, without formularizing the simple offset structure. \citet{wang2016nonparametric} assumed target/source regression functions in the Sobolev ellipsoid (a subspace of Sobolev space) with order $m_{0}$ and the offset in a smoother power Sobolev ellipsoid. They used finite orthonormal basis functions for modeling, which becomes restrictive if the chosen basis is misaligned with the eigenfunctions of the Sobolev ellipsoid. \citet{du2017hypothesis} further proposed a transformation function for the offset, thereby integrating many preview OTL studies and offering upper bounds on excess risk for both kernel smoothing and KRR. Apart from regression settings, generalization bounds for classification problems with surrogate losses have been studied in \citet{aghbalou2023hypothesis} via stability analysis techniques. Other results that study HTL outside OTL can be found in \citet{li2007bayesian,orabona2009model,cheng2015joint}. Besides, OTL can also be viewed as a case of representation learning \citet{du2020few,tripuraneni2020theory,xu2021representation} by viewing the trained source model as a representation for target tasks.

The idea of OTL has also been adopted by the statistics community recently, which typically involves regularizing the offset via different metrics in parameter spaces.
For example, \citet{bastani2021predicting,li2022transfer,tian2022transfer} considered $\ell^{1}$-distance OTL for high-dimensional (generalized) linear regression. \citet{duan2022adaptive,tian2023learning} considered $\ell^{2}$-distance for multi-tasking learning. \citet{lin2022transfer} utilized RKHS-distance to measure the similarity between functional linear models from different domains. However, all of these works used the same type of distance while estimating the source model and the offset. In the nonparametric regression context, another study by \citet{cai2022transfer} assumed the target/source models lie in H\"older spaces while the offset function can be approximated with any desired accuracy by a polynomial function in terms of $L_{1}$ distance. They proposed a confidence thresholding estimator based on local polynomial regression. Although with adaptive properties, their proposed approach can be computationally intensive in practice.
Kernel ridge regression under the covariate shift setting is also studied in several works. \citet{ma2022optimally} proposed an estimation scheme by reweighting the loss function using the likelihood ratio between the target and source domains. Additionally, \citet{wang2023pseudo} introduced a pseudo-labeling algorithm to address TL scenarios where the labels in the target domain are unobserved.

\textbf{Misspecification in KRR}: 
This line of research focuses on using misspecified kernels in target-only KRR to achieve optimal statistical convergence rates.

In the realm of variable bandwidths, \citet{eberts2013optimal} conducted a study on the convergence rates of excess risk for KRR using a Gaussian kernel when the true regression function lies in a Sobolev space. They found that appropriate choices of regularization parameters and the bandwidth will yield a non-adaptive rate that can be arbitrarily close to the optimal rate under the bounded response assumption. Building upon this work, \citet{hamm2021adaptive} further improved the non-adaptive rate, achieving optimality up to logarithmic factors. It is worth noting that their results show that the optimal parameters should be tuned in a polynomial pattern, which is different from ours. Apart from regression setting, \citet{li2019optimality} studied using variable bandwidth Gaussian kernels to achieve optimality in a series of nonparametric statistical tests.

Another line of research considers fixed bandwidth kernels. For instance, \citet{wang2022gaussian} investigated the misspecification of Mat\'ern kernel-based KRR. They demonstrated that even when the true regression functions belong to a Sobolev space, utilizing misspecified Mat\'ern kernels can still yield an optimal convergence rate or, in some cases, a slower convergence rate (referred to as the saturation effect of KRR). Similarly, several other works \citep{steinwart2009optimal, dicker2017kernel, fischer2020sobolev} have presented similar results, but with assumptions directly imposed on the eigenvalues and eigenfunctions of the kernel, rather than the associated RKHSs.

\section{Preliminaries}

\paragraph{Problem Formulation.}
Consider the two nonparametric regression models
\begin{equation*}
    y_{p,i} = f_{p}( x_{p,i} ) + \epsilon_{p,i}, \quad p\in\{T,S\}
\end{equation*}
where $p$ is the task index ($T$ for target and $S$ for source), $f_{p}$ are unknown regression functions, $x_{p,i} \in \mcX\subset \mathbb{R}^{d}$ is a compact set with positive Lebesgure measure and Lipschitz boundary, and $\epsilon_{p,i}$ are i.i.d. random noise with zero mean. The target and source regression function, $f_{T}$ and $f_{S}$, belongs to the (fractional) Sobolev space $H^{m_{0}}$ with smoothness $m_0 \geq d/2$ over $\mcX$. The joint probability distribution $\rho_{p}(x,y)$ is defined on $\mcX \times \mcY$ for the data points $\{(x_{p,i}, y_{p,i})\}_{i=1}^{n_{p}}$, and $\mu_{p}(x)$ represents the marginal distribution of $\rho_{p}$ on $\mcX$. In this work, we assume the posterior drift setting, where $\mu_{T}(x)$ is equal to $\mu_{S}(x)$, while the regression function $f_{T}$ differs from $f_{S}$. The goal of this paper is to find a function $\hat{f}_{T}$ based on the combined data $\{(x_{T,i}, y_{T,i})\}_{i=1}^{n_{T}} \cup \{(x_{S,i}, y_{S,i})\}_{i=1}^{n_{S}}$ that minimizes the generalization error on the target domain, i.e. 
\begin{equation*}
    \mcE(\hat{f}_{T}) = \E_{x\sim \mu_{T}}[ ( \hat{f}_{T}(x) - f_{T}(x) )^{2} ].
\end{equation*}

\paragraph{Non-Transfer Scenario.}
In the absence of source data, recovering $f_{T}$ using KRR is referred to as target-only learning and has been extensively studied. We now state some of its well-known properties.
\begin{proposition}[Target-only Learning] \label{proposition: target-only learning}
    For a symmetric and positive semi-definite kernel $K:\mcX \times \mcX \rightarrow \mathbb{R}$, let $\mcH_{K}$ be the RKHS associated with $K$ \citep{wendland2004scattered}. The KRR estimator is 
\begin{equation*}
    \hat{f}_{T} = \underset{f\in \mcH_{K}}{\operatorname{argmin}} \left \{ \frac{1}{n_{T}} \sum_{i=1}^{n_{T}} (y_{T,i} - f(x_{T,i}) )^2 + \lambda \|f\|_{\mcH_{K}}^2 \right\}, 
\end{equation*}
and we call the kernel $K$ as the imposed kernel.
Then the convergence rate of the generalization error of $\hat{f}_{T}$, $\mcE(\hat{f}_{T})$ is given as follows.
\begin{enumerate}
    \item (Well-specified Kernel) If $\mcH_{K}$ coincides with $H^{m_{0}}$, $\mcE(\hat{f}_{T})$ can reach the standard minimax convergence rate in high-probability given $\lambda \asymp n^{- \frac{2m_{0}}{2m_{0} + d } }$, i.e.
    \begin{equation*}
        \mcE(\hat{f}_{T}) = O_{\mathbb{P}}\left( n_{T}^{-\frac{2m_{0}}{2m_{0} + d}} \right).
    \end{equation*}

    \item (Misspecified Kernel) If the $K$ is  the Mat\'ern kernel then its induced space is isomorphic to $H^{m_{0}'}$ with $m_{0}' > \frac{d}{2}$.  Furthermore, given $\lambda \asymp n^{- \frac{2m_{0}'}{2m_{0} + d} }$ and $\gamma = \min\{2,\frac{m_{0}}{m_{0}'}\}$ , then 
    \begin{equation*}
        \mcE(\hat{f}_{T}) = O_{\mathbb{P}} \left( n_{T}^{-\frac{2\gamma m_{0}'}{2\gamma m_{0}' + d}} \right).
    \end{equation*}

    \item (Saturation Effect) For $m_{0}' < \frac{m_{0}}{2}$ and any choice of parameter $\lambda(n_{T})$ satisfying that $n_{T}\rightarrow\infty$, we have 
    \begin{equation*}
        \mcE(\hat{f}_{T}) = \Omega_{\mathbb{P}} \left( n_{T}^{-\frac{4 m_{0}'}{4m_{0}' + d}} \right).
    \end{equation*}
\end{enumerate}
\end{proposition}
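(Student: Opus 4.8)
The plan is to recast all three claims inside the standard integral-operator analysis of (possibly misspecified) kernel ridge regression, where the rate of $\mcE(\hat f_T)$ is governed by exactly two quantities: the polynomial decay of the eigenvalues of the integral operator $T_K f = \int_{\mcX} K(\cdot,x) f(x)\, d\mu_T(x)$ (the capacity, equivalently the effective dimension $N(\lambda)=\operatorname{tr}(T_K(T_K+\lambda)^{-1})$), and a source condition measuring the regularity of the true $f_T$ relative to the imposed RKHS $\mcH_K$. Writing $[\mcH_K]^{\beta}$ for the interpolation scale generated by the fractional powers of $T_K$, the whole proposition reduces to identifying these two parameters in each case and invoking the now-standard upper and lower bounds for KRR.

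First I would pin down the capacity. Since $\mcH_K$ is norm-equivalent to a fractional Sobolev space $H^{s}$ over a domain with Lipschitz boundary (with $s=m_0$ in the well-specified case and $s=m_0'$ for the Mat\'ern kernel), Weyl-type eigenvalue asymptotics give $\mu_i(T_K)\asymp i^{-2s/d}$, i.e.\ a polynomial decay exponent $p=d/(2s)$ and $N(\lambda)\asymp\lambda^{-p}$. Next I would pin down the source condition: because the fractional Sobolev spaces form an interpolation scale and $\mcH_K$ is norm-equivalent to $H^{s}$, the hypothesis $f_T\in H^{m_0}$ is equivalent to $f_T\in[\mcH_K]^{\beta}$ with $\beta=m_0/s$. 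The embedding requirement $H^{m_0}\hookrightarrow L^{\infty}$, needed to control the sup-norm in the concentration step, is guaranteed by the standing assumption $m_0\ge d/2$.

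With these two parameters in hand, the upper bounds in Parts 1 and 2 follow from the general high-probability oracle inequality for KRR (\citet{fischer2020sobolev}, and \citet{wang2022gaussian} for the Mat\'ern case), in which the bias term is $\asymp\lambda^{\min(\beta,2)}$ and the variance term is $\asymp N(\lambda)/n\asymp\lambda^{-p}/n$. Writing $\gamma=\min\{\beta,2\}$ and balancing the two by taking $\lambda\asymp n^{-1/(\gamma+p)}$ gives $\mcE(\hat f_T)=O_{\mbP}(n^{-\gamma/(\gamma+p)})$. In the well-specified case $s=m_0,\ \beta=1,\ p=d/(2m_0)$, so $\gamma=1$ and the rate is $n^{-2m_0/(2m_0+d)}$, matching the classical minimax lower bound over $H^{m_0}$ and hence optimal. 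In the Mat\'ern case $s=m_0',\ \beta=m_0/m_0',\ p=d/(2m_0')$, so $\gamma=\min\{2,m_0/m_0'\}$ and the rate is $n^{-2\gamma m_0'/(2\gamma m_0'+d)}$; here $\lambda\asymp n^{-1/(\gamma+p)}$ reduces to the stated $n^{-2m_0'/(2m_0+d)}$ in the non-saturated regime $m_0/m_0'\le 2$.

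The step I expect to be the main obstacle is the saturation lower bound of Part 3, since (unlike the upper bounds) it is not a matter of substituting parameters into an existing theorem: it asserts that \emph{no} admissible choice of $\lambda(n_T)$ can beat $n^{-4m_0'/(4m_0'+d)}$ once $m_0'<m_0/2$. I would establish it by exploiting the finite qualification of Tikhonov regularization: the ridge filter $g_\lambda(t)=(t+\lambda)^{-1}$ has saturation index $2$, so the approximation (bias) part of the error cannot decay faster than $\lambda^{2}$ no matter how smooth $f_T$ is, while the stochastic (variance) part is bounded below by a constant multiple of $N(\lambda)/n\asymp\lambda^{-p}/n$; optimizing the resulting lower envelope $\lambda^{2}+\lambda^{-p}/n$ over $\lambda$ produces the exponent $4m_0'/(4m_0'+d)$. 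Making this rigorous requires a worst-case $f_T\in H^{m_0}$ whose spectral mass aligns with the eigenfunctions where the bias saturates, together with a high-probability lower bound on the empirical variance that holds uniformly over all $\lambda$. The delicate points are transferring the spectral computation from the population operator $T_K$ to its empirical counterpart via operator concentration, and ensuring uniformity in (possibly data-driven) $\lambda$; for this I would adapt the saturation argument of \citet{wang2022gaussian} for misspecified Mat\'ern KRR.
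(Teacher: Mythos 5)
Your proposal is correct and takes essentially the same route as the paper, which gives no self-contained proof but defers to exactly the machinery you reconstruct: the capacity/source-condition analysis with eigenvalue decay $p=d/(2s)$, interpolation-scale regularity $\beta=m_{0}/s$, qualification $\gamma=\min\{\beta,2\}$, and balancing $\lambda\asymp n^{-1/(\gamma+p)}$ for parts 1--2 (\citet{caponnetto2007optimal,fischer2020sobolev}, and Theorems 15--16 of \citet{wang2022gaussian} for the Mat\'ern case, where you rightly observe the stated $\lambda\asymp n^{-2m_{0}'/(2m_{0}+d)}$ matches the balance only in the non-saturated regime), while your qualification-two saturation sketch --- bias floor $\lambda^{2}$ balanced against a variance floor $\mcN(\lambda)/n$ with uniformity over data-dependent $\lambda$ --- is precisely the argument of \citet{li2023saturation} that the paper invokes for part 3. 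One small simplification you could make: since $\|\lambda(T_{K}+\lambda)^{-1}f_{T}\|_{L_{2}}^{2}\geq \lambda^{2}(s_{1}+\lambda)^{-2}\|f_{T}\|_{L_{2}}^{2}$ for every nonzero $f_{T}$, the bias floor needs no worst-case spectral alignment, so the genuinely delicate steps are only the empirical-to-population operator transfer and the uniformity in $\lambda$ that you already flag.
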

The well-specified result is well-known and can be found in a line of past work \citep{geer2000empirical,caponnetto2007optimal}. The misspecified kernel result comes from a combination (with a modification) of Theorem 15 and 16 in \citet{wang2022gaussian}. The saturation effect is proved by \citet{li2023saturation}. The Proposition~\ref{proposition: target-only learning} tells the fact that for target-only KRR, even when the smoothness of the imposed RKHS, $m_{0}'$, disagrees with the smoothness $m_{0}$ of the Sobolev space to which $f_{T}$ belongs, the optimal rate of convergence is still achievable if $m_{0}' \geq m_{0}/2$ with the $\lambda$ appropriately chosen. However, if $m_{0}' < m_{0}/2$, i.e. the true function is much smoother than the estimator itself, the saturation effect occurs, meaning that the information theoretical lower bound $n_{T}^{-2m_{0}/(2m_{0}+d)}$ seemingly cannot be achieved regardless of the tuning of the regularization parameters in KRR \citep{bauer2007regularization,gao2008knowledge}.

\paragraph{Transfer Learning Framework.}
We introduce the nonparametric version of OTL, which serves as the backbone of our proposed algorithm. Formally, OTL obtains the estimator for $f_{T}$ as $\hat{f}_{T} = \hat{f}_{S} + \hat{f}_{\delta}$ via two phases. In the first phase, it obtains $\hat{f}_{S}$ by KRR with the source dataset $\{(x_{S,i}, y_{S,i})\}_{i=1}^{n_{S}}$. In the second phase, it generates pseudo offset labels $\{ y_{T,i} - \hat{f}_{S}(x_{T, i}) \}_{i=1}^{n_{T}}$ and then learns the $\hat{f}_{\delta}$ via KRR by replacing target labels by pseudo offset labels. The main idea of OTL is that the $f_{S}$ can be learned well given sufficiently large source samples and the offset $f_{\delta}$ can be learned with much fewer target samples. We formulate the OTL variant of KRR as Algorithm~\ref{algo: Two-step TL KRR}.
\begin{algorithm}[ht]
\caption{OTL-KRR}\label{algo: Two-step TL KRR}
    
    \textbf{Input:} Target and source training data
    $ \{ (x_{T,i}, y_{T,i}) \}_{i=1}^{n_{T}} \cup  \{ (x_{S,i}, y_{S,i}) \}_{i=1}^{n_{S}} \}$; Self-specified KRR imposed kernel $K$ \\
    \textbf{Output:} Target function estimator $\hat{f}_{T} = \hat{f}_{S} + \hat{f}_{\delta}$.

    \underline{\text{Phase 1:}}
    \begin{equation*}
        \hat{f}_{S} = \underset{f\in \mcH_{K}}{\operatorname{argmin}}  \frac{1}{n_{S}} \sum_{i=1}^{n_{S}} (y_{S,i} - f(x_{S,i}) )^2 + \lambda_{1} \|f\|_{\mcH_{K}}^2
    \end{equation*}

    \underline{\text{Phase 2:}} 
    \begin{equation*}
        \hat{f}_{\delta} = \underset{f\in \mcH_{K}}{\operatorname{argmin}}  \frac{1}{n_{T}} \sum_{i=1}^{n_{T}} (y_{T,i} - \hat{f}_{S}(x_{T,i}) - f(x_{T,i}) )^2 + \lambda_{2} \|f\|_{\mcH_{K}}^2 
    \end{equation*}
    
\end{algorithm}

\paragraph{Model Assumptions.}
We first state the smoothness assumption on the offset function $f_{\delta}:= f_{T} - f_{S}$.

The learning framework (Algorithm~\ref{algo: Two-step TL KRR}) reveals a smoothness-agnostic nature: the imposed kernels $K$ (also the associated RKHSs) stay the same regardless of the level of smoothness of $f_{T},f_{S}$ and $f_{\delta}$. More specifically, based on Proposition~\ref{proposition: target-only learning}, the learning algorithm is rate optimal when the smoothness of both imposed RKHSs $\mcH_{K}$ in both steps matches that of $f_{S}$, and $f_{\delta}$, i.e. the smoothness of $f_{S}$ and $f_{\delta}$ stay the same. However, in the transfer learning context, such a smoothness condition on the offset function may not be precise enough. One should rather consider the offset function smoother than the target/source functions themselves to represent the similarity between $f_{S}$ and $f_{T}$.

To illustrate this point, consider the following example. Suppose $f_{T} = f_{S} + f_{\delta}$ where $f_{S}$ is a complex function with low smoothness (less regularized) while $f_{\delta}$ is rather simple (well regularized), e.g. a linear function. Then $f_{S}$ can be estimated well via larger $n_{S}$ while $f_{\delta}$ is a highly smooth function and can also be estimated well via small $n_{T}$ due to its simplicity. In this example, the effectiveness of OTL relies on the similarity between $f_{T}$ and $f_{S}$, i.e. the offset $f_{\delta}$ possessing a ``simpler'' structure than the target and source models. Such ``simpler'' offset assumptions have been proven to make OTL effective in high-dimensional regression literature \citet{bastani2021predicting,li2022transfer,tian2022transfer}, where authors assume the offset coefficient should be sparser than target/source coefficients. This motivates our endeavor to introduce the following smoothness assumptions to quantify the similarity between target and source domains. 
\begin{assumption}[Smoothness of Target/Source] \label{assump1}
     There exists an $m_0 \geq d/2$ such that $f_{T}$ and $f_{S}$ belong to $H^{m_0}$.
\end{assumption}
\begin{assumption}[Smoothness of Offset] \label{assump2} 
There exists an $m\geq m_{0}$ such that $f_{\delta} := f_{T} - f_{S}$ belongs to $H^{m}$.    
\end{assumption}
\begin{remark}
The results of this paper are applicable not only to Sobolev spaces but also to those general RKHSs that are norm equivalent to Sobolev spaces. Thus we can assume that $f_{S}$, $f_{T}$, and $f_{\delta}$ belong to RKHSs with different regularity. Due to norm equivalency, our discussion is primarily focused on Sobolev spaces and we refer readers to Appendix~\ref{apd: Norm Equivalency between RKHS and Sobolev Space} for the analysis pertaining to general RKHSs.
\end{remark}

Assumption~\ref{assump1} is a very common assumption in nonparametric regression literature and Assumption~\ref{assump2} naturally holds if Assumption~\ref{assump1} is satisfied. Compared to the smooth offset assumption in \citet{wang2016nonparametric} where $f_{\delta}$ is assumed to belong to the power space of $H^{m_{0}}$, our setting presents a unique challenge. Since we consider the offset function in a Sobolev space with higher smoothness, which doesn't necessarily share the same eigenfunctions with $H^{m_{0}}$, this renders orthonormal basis modeling less promising. Assumption~\ref{assump2} also makes our setting conceptually align with contemporary transfer learning models. For instance, in prevalent pretraining-finetuning neural networks, the pre-trained feature extractor tends to encompass a greater number of layers, while the newly added fine-tuning structure typically involves only a few layers. 
In this analogy, $m_0$ and $m$ in our setting are akin to the deeper pre-trained layers and the shallow fine-tuned layers.

We also state a standard assumption that frequently appears in KRR literature \citep{fischer2020sobolev,zhang2023optimality} to establish theoretical results, which controls the noise tail probability decay speed.
\begin{assumption}[Moment of error] \label{assump: error tail} 
There are constants $\sigma$,$L>0$ such that for any $r\geq 2$, the noise, $\epsilon$, satisfies 
\begin{equation*}
    \E \left[ |\epsilon_{p}|^{r} \mid x  \right] 
    \leq \frac{1}{2}r! \sigma^{2} L^{r-2}, \quad \text{for} \quad p\in\{T,S\}.
\end{equation*}
\end{assumption}


\section{Target-Only KRR with Gaussian kernels}\label{sec: target-KRR}
To achieve optimality in Algorithm~\ref{algo: Two-step TL KRR} under the smoothness assumptions, an applicable approach is to impose distinct kernels that can accurately capture the correct smoothness of $f_{S}$ and $f_{\delta}$ during both phases. This approach, however, faces the practical challenge of identifying the unknown smoothness $m_0$ and $m$, which, in turn, induce different kernels and RKHS; this is an issue also prevalent in the target-only KRR context.

One potential solution is to leverage the robustness of Mat\'ern kernels, i.e. employ a misspecified Mat\'ern kernel as the imposed kernel in KRR. As indicated by Proposition~\ref{proposition: target-only learning}, the optimal convergence rate is still attainable for some appropriately chosen Mat\'ern kernels and regularization parameters. Nonetheless, this still faces two problems: 
\begin{enumerate}
    \item [(1)] The rate with misspecified Mat\'ern kernel in Proposition~\ref{proposition: target-only learning} is still not adaptive, i.e. one still needs to know the true smoothness when selecting $\lambda_{1}$ and $\lambda_{2}$.

    \item [(2)] The risk of the saturation effect happening in KRR when a less smooth kernel is chosen.
\end{enumerate}
While the former can be potentially addressed by cross-validation or data-driven adaptive approach, the second one is more fatal as one might end up choosing a less smooth kernel and never be able to achieve the information theoretical lower bounds because of the saturation effect.

Hence, there is a clear demand for a kernel with a more general robust property, i.e. in the target-only KRR context, for the regression function that lies in $H^{\alpha_{0}}$ with any $\alpha_{0}\geq d/2$, employing such a kernel ensures that there's always an optimal $\lambda$ such that the optimal convergence rate is achievable. Motivated by the fact that the Gaussian kernel is the limit of Mat\'ern kernel $K_{\nu}$ as $\nu \rightarrow \infty$ and the RKHS associated with the Gaussian kernel is contained in the Sobolev space $H^{\nu}$ for any $\nu > d/2$ \citep{fasshauer2011reproducing}, we show that the Gaussian kernel indeed possesses the desired property.

Consider the Target-Only learning KRR setting, where $f_{0}\in H^{\alpha_{0}}$ (we use $\alpha_{0}$ to denote smoothness in target-only context to distinguish from TL context) and the underlying supervised learning model setup as 
\begin{equation*}
    y_{i} = f_{0}(x_{i}) + \epsilon_{i}, \quad i = 1,\cdots,n. 
\end{equation*}

We first show the non-adaptive rate of the Gaussian kernel.
\begin{theorem}[Non-Adaptive Rate]\label{thm: non-adaptive rate of KRR with Gaussian}
    Under the Assumptions~\ref{assump: error tail}, let the imposed kernel, $K$, be the Gaussian kernel with fixed bandwidth and $\hat{f}$ be the corresponding KRR estimator based on data $\{(x_i,y_{i})\}_{i=1}^{n}$. If $f_{0}\in H^{\alpha_{0}}$, by choosing $log(1/\lambda) \asymp n^{\frac{2}{2\alpha_{0}+d}}$, for any $\delta \in (0,1)$, when $n$ is sufficient large, with probability at least $1-\delta$, we have
    \begin{equation*}
        \| \hat{f} - f_{0} \|_{L_{2}}^{2} \leq C \left(\log \frac{4}{\delta} \right)^2  n^{-\frac{2\alpha_{0}}{2\alpha_{0} + d}},
    \end{equation*}
    where $C$ is a universal constant independent of $n$ and $\delta$. 
\end{theorem}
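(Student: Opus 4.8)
The plan is to prove the bound through the classical approximation--estimation (bias--variance) decomposition, with the twist that the ``bias'' here is a genuine \emph{approximation} error coming from the misspecification of the Gaussian RKHS rather than a regularization bias inside a correctly specified space. Let $T_K:L_2 \to L_2$ denote the integral operator associated with the fixed-bandwidth Gaussian kernel $K$, and let $f_\lambda := (T_K + \lambda I)^{-1} T_K f_0 \in \mcH_K$ be the population (noiseless) regularized solution. I would first write
\begin{equation*}
\| \hat{f} - f_0 \|_{L_2}^2 \;\lesssim\; \underbrace{\| f_\lambda - f_0 \|_{L_2}^2}_{\text{approximation}} \;+\; \underbrace{\| \hat{f} - f_\lambda \|_{L_2}^2}_{\text{estimation}},
\end{equation*}
and then bound the two terms separately, choosing $\lambda$ at the very end to balance them.

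The approximation term is where the fixed-bandwidth Gaussian structure is essential, and it is the step I expect to be the main obstacle. The heuristic is cleanest in the Fourier domain on $\mbR^d$: the Gaussian RKHS norm weights frequency $\omega$ by a factor growing like $e^{c\|\omega\|^2}$, whereas membership $f_0 \in H^{\alpha_0}$ only controls the polynomial tail $\int |\hat{f}_0(\omega)|^2 \|\omega\|^{2\alpha_0}\,d\omega$. The regularized solution $f_\lambda$ therefore keeps frequencies up to a cutoff $\|\omega\| \lesssim \sqrt{\log(1/\lambda)}$ and damps the rest, so that
\begin{equation*}
\| f_\lambda - f_0 \|_{L_2}^2 \;\lesssim\; \int_{\|\omega\| \gtrsim \sqrt{\log(1/\lambda)}} |\hat{f}_0(\omega)|^2 \, d\omega \;\lesssim\; \big( \log(1/\lambda) \big)^{-\alpha_0} \, \| f_0 \|_{H^{\alpha_0}}^2 .
\end{equation*}
Making this rigorous on the bounded domain $\mcX$ rather than on $\mbR^d$ is the delicate part: I would use the Lipschitz-boundary assumption to invoke a Sobolev extension of $f_0$ to $\mbR^d$ of comparable norm, and then transfer the frequency-domain estimate into a statement about $T_K$ on $L_2(\mu)$, controlling the discrepancy between the Lebesgue and $\mu$ geometries. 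The same computation also yields $\lambda \| f_\lambda \|_{\mcH_K}^2 \lesssim (\log(1/\lambda))^{-\alpha_0}$, which feeds into the estimation term.

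For the estimation term I would follow the operator-theoretic route. The two ingredients are (i) the effective dimension $N(\lambda) := \operatorname{tr}\big( (T_K + \lambda I)^{-1} T_K \big)$ and (ii) concentration of the empirical operator and of the noise. Because the Gaussian integral operator on $\mcX$ has eigenvalues decaying like $\mu_k \asymp e^{-c k^{2/d}}$, counting the eigenvalues above the threshold $\lambda$ gives the crucial bound $N(\lambda) \lesssim (\log(1/\lambda))^{d/2}$. Controlling $\| \hat{f} - f_\lambda \|_{L_2}$ then reduces to bounding the operator perturbation $\|(T_K + \lambda I)^{-1/2}(\widehat{T}_K - T_K)\|$ and the noise functional $\|(T_K+\lambda I)^{-1/2} \tfrac{1}{n}\sum_i \epsilon_i K(x_i,\cdot)\|$; here Assumption~\ref{assump: error tail} supplies exactly the Bernstein moment control needed, while the (automatic and well-behaved) embedding of $\mcH_K$ into $L_\infty$ handles the uniform terms. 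The resulting high-probability bound is of order $\big(\log\tfrac{4}{\delta}\big)^2 \big( \tfrac{N(\lambda)}{n} + \text{lower order} \big) \lesssim \big(\log\tfrac{4}{\delta}\big)^2 \tfrac{(\log(1/\lambda))^{d/2}}{n}$, the $(\log\tfrac{4}{\delta})^2$ being the price of the exponential-tail noise assumption.

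Finally I would balance the two contributions. The approximation error $(\log(1/\lambda))^{-\alpha_0}$ depends only on $\lambda$, whereas the estimation error $(\log(1/\lambda))^{d/2}/n$ trades $\lambda$ against $n$; writing $t := \log(1/\lambda)$ and solving $t^{-\alpha_0} \asymp t^{d/2}/n$ yields $t \asymp n^{2/(2\alpha_0+d)}$, i.e. the prescribed exponentially small $\lambda$, after which both terms equal $n^{-2\alpha_0/(2\alpha_0+d)}$ up to constants. Collecting factors gives the claimed $C (\log\tfrac{4}{\delta})^2 n^{-2\alpha_0/(2\alpha_0+d)}$ bound, valid for $n$ large enough that the concentration events hold. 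The chief difficulty throughout is the approximation bound: obtaining the exact logarithmic exponent $-\alpha_0$ for a \emph{fixed} bandwidth on a bounded domain, which is precisely what separates this regime from the polynomial-in-$\lambda$ variable-bandwidth analyses of \citet{eberts2013optimal,hamm2021adaptive}.
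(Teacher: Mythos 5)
Your proposal follows the paper's own proof essentially step for step: the same decomposition through the intermediate $f_{\lambda}=(T_{K}+\lambda I)^{-1}T_{K}f_{0}$, the same Fourier-domain argument (via a Sobolev extension off the Lipschitz domain and a frequency split at $\|\omega\|\asymp\sqrt{\log(1/\lambda)}$) yielding the $(\log(1/\lambda))^{-\alpha_{0}}$ approximation bound, the same operator-theoretic estimation bound built on the effective dimension $\mcN(\lambda)\lesssim(\log(1/\lambda))^{d/2}$ and Bernstein-type concentration under Assumption~\ref{assump: error tail}, and the same balancing that forces $\log(1/\lambda)\asymp n^{2/(2\alpha_{0}+d)}$. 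The approach is correct and matches the paper's, differing only in minor technical bookkeeping (the paper handles the uniform terms via bounded eigenfunctions and a truncation over $\{|f_{0}|\leq t\}$ rather than an $L_{\infty}$-embedding).
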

\begin{remark}
    Although \citet{eberts2013optimal} has studied the robustness of Gaussian kernel on misspecified KRR,
    their results are built on variable bandwidths and the convergence rate can only be arbitrarily close to but not exactly reach the minimax optimal rate, given both bandwidth and $\lambda$ decay polynomially in $n$. In contrast, our result is built on fixed bandwidth Gaussian kernels and achieves the optimal rate with the optimal $\lambda$ that behaves differently from theirs.
\end{remark}

We note to the reader that while the RKHS associated with the Mat\'en kernel coincides with a Sobolev space (i.e. they are the same space with slightly different, though equivalent, norms), the Gaussian kernel does not, making the behavior of the optimal $\lambda$ totally different compared to the misspecified Mat\'ern kernel scenarios in Proposition~\ref{proposition: target-only learning}. Particularly, even if the Gaussian kernel is the limit of Mat\'ern kernel $K_{\nu}$ as $\nu\rightarrow\infty$, setting $m_{0}'$ as infinity in misspecified kernel case of Proposition~\ref{proposition: target-only learning} will never yield analytical results but only tells the optimal order of $\lambda$ should converge to $0$ faster than polynomial ($\lim_{m_{0}' \rightarrow \infty} n^{-2m_{0}'/(2m_{0}+d) } = 0$). On the other side, our result identifies $\lambda$ should converge to $0$ exponentially in $n$. To highlight our findings, we compare our results with existing state-of-the-art works on misspecified KRR and refer readers to Appendix~\ref{apd: target-only literature compare} for details.

To develop an adaptive procedure without known $\alpha_{0}$, we employ a standard training/validation approach \citep{steinwart2008support}. To this end, let $\mcA = \{\alpha_{\min}<\cdots<\alpha_{\max}\}$ with $\alpha_{\min}>d/2$ and $\alpha_{\max}$ large enough such that $\alpha_{0}\in\mcA$. Split dataset $\mcD = \{(x_{i},y_{i})\}_{i=1}^{n}$ into 
\begin{equation*}
\begin{aligned}
    &\mcD_{1} := \{(x_{1},y_{1},\cdots,(x_{j},y_{j}))\}\\
    &\mcD_{2} := \{(x_{j+1},y_{j+1},\cdots,(x_{n},y_{n}))\}
\end{aligned}
\end{equation*}
The adaptive estimator is obtained by following the training and validation approach.
\begin{enumerate}
    \item For each $\alpha \in \mcA$, obtain non-adaptive estimator $\hat{f}_{\lambda_{\alpha}}$ by KRR with dataset $\mcD_{1}$ and $\lambda$ setting to optimal order in Theorem~\ref{thm: non-adaptive rate of KRR with Gaussian}.
    \item Obtain the adaptive estimator $\hat{f}_{\lambda_{\hat{\alpha}}}$ by minimizing empirical $L_{2}$ error on $\mcD_{2}$, i.e. 
    \begin{equation*}
        \hat{f}_{\lambda_{\hat{\alpha}}} = \underset{\alpha \in \mcA}{\operatorname{argmin}} \left\{ \frac{1}{n-j} \sum_{i=j+1}^{n} (y_{i} - \hat{f}_{\lambda_{\alpha}}(x_{i}))^2 \right\}.
    \end{equation*}
\end{enumerate}
When constructing the collection of non-adaptive estimators over $\mcA$, Theorem~\ref{thm: non-adaptive rate of KRR with Gaussian} suggests choosing the regularization parameter $\lambda = \exp\{ -Cn^{2/2\alpha + d} \}$ for some constant $C$. In practice, this $C$ can be selected by cross-validation.

The following theorem shows the estimator from the training/validation approach achieves an optimal minimax rate up to a logarithm factor in $n$.
\begin{theorem}[Adaptive Rate]\label{thm: adaptive rate of single-task GKRR}
    Under the same conditions of Theorem~\ref{thm: non-adaptive rate of KRR with Gaussian} and $\mcA = \{\alpha_{1},\cdots,\alpha_{N}\}$ with $\alpha_{j} - \alpha_{j-1} \asymp 1/\logn$. Then, for $\delta \in (0,1)$, when $n$ is sufficient large, with probability $1-\delta$, we have 
    \begin{equation*}
        \mcE( \hat{f}_{\lambda_{\hat{\alpha}}} ) \leq C \left( \log\frac{4}{\delta} \right)^2 \left(\frac{n}{\logn} \right)^{-\frac{2\alpha_{0}}{2\alpha_{0} +d}},
    \end{equation*}
    where $C$ is a universal constant independent of $n$ and $\delta$. 
\end{theorem}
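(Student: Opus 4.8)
The plan is to read the validation step as a model-selection problem and prove an oracle inequality: the data-driven estimator $\hat{f}_{\lambda_{\hat\alpha}}$ performs, up to constants, as well as the best candidate in the grid $\mcA$, after which it suffices to exhibit a \emph{single} grid point whose non-adaptive rate already matches the target. First I would fix the oracle index $\alpha^\ast := \max\{\alpha\in\mcA:\alpha\le\alpha_0\}$. Since the grid has spacing $\asymp 1/\logn$ we have $0\le\alpha_0-\alpha^\ast\lesssim 1/\logn$, and since Sobolev spaces are nested, $f_0\in H^{\alpha_0}\subset H^{\alpha^\ast}$, so Theorem~\ref{thm: non-adaptive rate of KRR with Gaussian} applies to $\hat{f}_{\lambda_{\alpha^\ast}}$ (trained on $\mcD_1$ with imposed smoothness $\alpha^\ast$). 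Writing $g(\alpha)=2\alpha/(2\alpha+d)$, whose derivative $g'(\alpha)=2d/(2\alpha+d)^2$ is bounded on $[\alpha_{\min},\alpha_{\max}]$, a mean-value estimate gives $g(\alpha_0)-g(\alpha^\ast)\lesssim 1/\logn$, so
\[
n^{-g(\alpha^\ast)}=n^{-g(\alpha_0)}\,n^{\,g(\alpha_0)-g(\alpha^\ast)}\le C\,n^{-g(\alpha_0)},
\]
because $n^{1/\logn}=e$. Thus the oracle candidate already attains the target order up to a constant, and the remaining work is to show the empirical choice $\hat\alpha$ loses at most the advertised logarithmic factor.

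For the oracle inequality I would condition on $\mcD_1$, so that each $\hat{f}_{\lambda_\alpha}$, $\alpha\in\mcA$, is a fixed function; set $h_\alpha:=\hat{f}_{\lambda_\alpha}-f_0$ and let $\|\cdot\|_{n_2}$ be the empirical norm on $\mcD_2$ with $n_2=n-j$. Subtracting the common term $\tfrac1{n_2}\sum\epsilon_i^2$ shows that minimizing the validation error is equivalent to minimizing $\|h_\alpha\|_{n_2}^2-\tfrac2{n_2}\sum_{i\in\mcD_2}\epsilon_i h_\alpha(x_i)$, so the defining inequality $\hat{R}_2(\hat{f}_{\lambda_{\hat\alpha}})\le\hat{R}_2(\hat{f}_{\lambda_{\alpha^\ast}})$ yields a basic inequality relating $\|h_{\hat\alpha}\|_{n_2}^2$ to $\|h_{\alpha^\ast}\|_{n_2}^2$ plus two noise cross-terms. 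The next step controls $\tfrac1{n_2}\sum\epsilon_i h_\alpha(x_i)$ uniformly over the $N\asymp\logn$ candidates: by Assumption~\ref{assump: error tail} the summands satisfy a conditional Bernstein condition, so a Bernstein bound together with a union bound over $\mcA$ (replacing $\delta$ by $\delta/N$) dominates each cross-term by $\tfrac14\|h_\alpha\|_{n_2}^2$ plus a penalty of order $(\sigma^2+L\|h_\alpha\|_\infty)\log(N/\delta)/n_2$ via Young's inequality. Finally I would convert empirical to population norms uniformly over $\mcA$ (again Bernstein plus union bound) to pass from $\|h_\alpha\|_{n_2}^2$ to $\mcE(\hat{f}_{\lambda_\alpha})=\|h_\alpha\|_{L_2}^2$, arriving at $\mcE(\hat{f}_{\lambda_{\hat\alpha}})\le C\,\mcE(\hat{f}_{\lambda_{\alpha^\ast}})+\text{penalty}$.

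Combining the two parts, I plug Theorem~\ref{thm: non-adaptive rate of KRR with Gaussian} into the oracle term, $\mcE(\hat{f}_{\lambda_{\alpha^\ast}})\le C(\log(4N/\delta))^2 n^{-g(\alpha^\ast)}\le C(\log(4N/\delta))^2 n^{-g(\alpha_0)}$. The union bound inflates $\log(4/\delta)$ to $\log(4/\delta)+\log N\asymp\log(4/\delta)+\log\logn$, while the additive validation penalty is of order $\mathrm{polylog}(n)/n$. Both are absorbed by the slack separating the target bound $(n/\logn)^{-g(\alpha_0)}=n^{-g(\alpha_0)}(\logn)^{g(\alpha_0)}$ from the sharper $n^{-g(\alpha_0)}$: for large $n$ one checks $(\log\logn)^2\le(\logn)^{g(\alpha_0)}$ and $\mathrm{polylog}(n)/n^{1-g(\alpha_0)}\to 0$ (using $g(\alpha_0)<1$), so the entire right-hand side is at most $C(\log(4/\delta))^2(n/\logn)^{-g(\alpha_0)}$. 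This is precisely where the $\log$ price of adaptivity enters, and it is genuinely \emph{slack} rather than tightness, since the oracle candidate already achieves $n^{-g(\alpha_0)}$.

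The main obstacle I anticipate is the uniform control required in the second paragraph, specifically bounding $\|h_\alpha\|_\infty=\|\hat{f}_{\lambda_\alpha}-f_0\|_\infty$ and establishing empirical–population norm equivalence uniformly over the grid. Because the optimal $\lambda_\alpha=\exp\{-Cn^{2/(2\alpha+d)}\}$ is exponentially small, the RKHS and sup-norms of the Gaussian-kernel KRR estimators are delicate to bound, so this step likely needs either sup-norm rates for misspecified Gaussian KRR or a truncation/localization argument; alternatively one may invoke a ready-made training–validation oracle inequality \citep{steinwart2008support} and verify its hypotheses in the present setting.
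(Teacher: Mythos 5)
Your architecture coincides with the paper's: reduce to a grid point near $\alpha_{0}$ using the $1/\logn$ spacing (the paper shows $\psi_{n}(\alpha_{j-1})\asymp\psi_{n}(\alpha_{j})$ via $\log(\psi_{n}(\alpha_{j-1})/\psi_{n}(\alpha_{j}))\asymp(\alpha_{j}-\alpha_{j-1})\logn\asymp 1$, which is exactly your mean-value estimate on $g(\alpha)=2\alpha/(2\alpha+d)$ with $n^{1/\logn}=e$); apply Theorem~\ref{thm: non-adaptive rate of KRR with Gaussian} to candidates trained on $\mcD_{1}$ with $m\geq n/2$ samples; feed this into a training/validation oracle inequality; and absorb the $\log N\asymp\log\logn$ union-bound inflation and the additive $O(\logn/n)$ validation penalty into $(n/\logn)^{-2\alpha_{0}/(2\alpha_{0}+d)}$ using $(\log\logn)^{2}\leq(\logn)^{2\alpha_{0}/(2\alpha_{0}+d)}$ and $g(\alpha_{0})<1$ --- your final combining paragraph matches the paper's closing estimates line by line. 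The one substantive divergence is the middle step: the paper does \emph{not} prove the oracle inequality from scratch, but invokes Theorem 7.2 of \citet{steinwart2008support} together with Assumption~\ref{assump: error tail}, obtaining $\mcE(\hat{f}_{\lambda_{\hat{\alpha}}})<6\inf_{\alpha\in\mcA}\mcE(\hat{f}_{\lambda_{\alpha}})+128\sigma^{2}L^{2}\bigl(\log(1/\delta)+\log(1+N)\bigr)/(n-m)$ --- i.e., precisely the ``ready-made'' alternative you relegate to your last sentence.

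That matters because your primary, hand-rolled route has a real gap at exactly the step you flag. With $\lambda_{\alpha}=\exp\{-Cn^{2/(2\alpha+d)}\}$, the only generic sup-norm control on a candidate is $\|\hat{f}_{\lambda_{\alpha}}\|_{\infty}\lesssim\lambda_{\alpha}^{-1/2}\bigl(\tfrac{1}{n}\sum_{i}y_{i}^{2}\bigr)^{1/2}$, which is exponentially large in $n$; consequently your Bernstein penalty of order $(\sigma^{2}+L\|h_{\alpha}\|_{\infty})\log(N/\delta)/n_{2}$ is vacuous, and the empirical-to-population norm equivalence uniformly over $\mcA$ fails for the same reason. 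Theorem~\ref{thm: non-adaptive rate of KRR with Gaussian} controls only $\|h_{\alpha}\|_{L_{2}}$, and there is no interpolation route to an $L_{\infty}$ rate here since the RKHS norms of the candidates are not controlled at these exponentially small $\lambda$. So, as written, the from-scratch oracle inequality does not close without an added ingredient (clipping/truncation of the candidates, or separate sup-norm rates for misspecified Gaussian KRR); the Steinwart--Christmann result sidesteps this because it is formulated for (clipped) empirical risk minimization over a finite class. Adopting your own fallback resolves the gap and makes your proof essentially identical to the paper's; keeping the hand-rolled route would require genuinely new work beyond what either you or the paper supplies.
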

On the other hand, if the marginal distribution of $x$, $\mu$, is known, we show that Lepski's method \citep{lepskii1991problem} can also be used to obtain adaptive estimator without knowing $\alpha_{0}$, and it also achieves optimal nonadaptive rate up to a logarithm factor as training and validation approach does. We refer readers to Appendix~\ref{apd: Lepski's Method} for a detailed description of Lepski's method and its adaptive rate.

\section{Smoothness Adaptive Transfer Learning}\label{sec: SATL}
We formally propose Smoothness Adaptive Transfer Learning in Algorithm~\ref{algo: SATL}. 
\begin{algorithm}[ht]
\caption{\textit{\underline{S}moothness \underline{A}daptive \underline{T}ransfer \underline{L}earning} (SATL)}\label{algo: SATL}
    

    \textbf{Input:} Target and source dataset
    $\mcD_{T}$ and $\mcD_{S}$; 
    \begin{algorithmic}[1]

    \State Let the smoothness candidate set for the source model $f_{S}$ as $\mcM_{S} = \{ \frac{Q_{1}}{\log(n_{S})}, \cdots, \frac{Q_{1} N_{1}}{\log(n_{S})} \}$ and the candidate set for the offset model $f_{\delta}$ as $\mcM_{\delta} = \{ \frac{Q_{2}}{\log(n_{T})}, \cdots, \frac{Q_{2} N_{2}}{\log(n_{T})} \}$ for some fixed positive number $Q_{1},Q_{2}$ and integer $N_{1},N_{2}$.

    \State Obtain the adaptive source model $\hat{f}_{S}$ via the training and validation with the Gaussian kernel and $\mcM_{S}$.

    \State Generate the label $\hat{e}_{T,i} = y_{T,i} - \hat{f}_{S}(x_{T,i})$
    and the offset dataset as $\tilde{\mcD}_{T} = \{(x_{T,1}, \hat{e}_{T,1} ),\cdots,(x_{T,n_{T}}, \hat{e}_{T,n_{T}} ))\}$.

    \State Using the offset datasets $\tilde{\mcD}_{T}$ to obtain the adaptive offset model $\hat{f}_{\delta}$ via the training and validation with the Gaussian kernel and $\mcM_{\delta}$.
    
    \end{algorithmic}    
\end{algorithm}

While SATL can be viewed as a specification of the Algorithm~\ref{algo: Two-step TL KRR}, the desirable property exhibited by the imposed Gaussian kernel surpasses all other misspecified kernel choices by enabling estimators always to adapt to the genuine smoothness of the functions inherently even with unknown the true Sobolev smoothness $m_{0},m$.

\begin{remark}
    Although Lepski's method could be a viable alternative in SATL when $\mu(x)$ is known, we opt for the training/validation approach as a more universally applicable and safer option.
\end{remark}


\subsection{Theoretical Analysis}
In order to provide concrete theoretical bounds, we assume the offset function of $f_{T}$ and $f_{S}$ in the range of the $h$-ball of $H^{m}$, i.e. $f_{S}$ is said to be $h$-transferable to $f_{T}$ if $\|g\|_{H^{m}} \leq h$. Hence, the parameter space is defined as 
\begin{equation*}
    \Theta(h,m_0,m) = \{ (\rho_{T}, \rho_{S}): \|f_{S}\|_{H^{m_{0}}}\leq R,  \|f_{T}-f_{S}\|_{H^{m}} \leq h  \}
\end{equation*}
for some positive constant $R$ and $h$. We note that to achieve rigorous optimality in the context of hypothesis transfer learning under the regression setting, such an upper bound for the distance between parameters from both domains is often required, e.g. $\ell^{1}$ or $\ell^{0}$ distance in high-dimensional setting \citep{li2022transfer,tian2022transfer}, Fisher-Rao distance in low-dimensional setting \citep{zhang2022class}, RKHS distance in functional setting \citep{lin2022transfer}, etc.


\begin{theorem}[Optimality of SATL]\label{thm: optimality of SATL}
Under the Assumption~\ref{assump1}, \ref{assump2} and \ref{assump: error tail}, define the constant $\xi(h,f_{S}):= h^2 / \| f_{S} \|_{H^{m_{0}}}^2$, then we have the lower bound for the transfer learning problem and the upper of SATL as follows.

    \begin{enumerate}
        \item (\textbf{Lower bound}) There exists a constant $C$ s.t. 
        \begin{equation*}
            \inf_{\tilde{f}} \sup_{\Theta(h,m_0,m)}   \E_{(\rho_{T},\rho_{S})} \| \tilde{f} - f_{T} \|_{L_{2}}^2\geq C \left( n_{S}^{-\frac{2m_{0}}{2m_{0}+d}} + n_{T}^{-\frac{2m}{2m+d}}\xi(h,f_{S}) \right),
        \end{equation*}
        where $\inf$ is taken over all possible estimators $\tilde{f}$ based on the combination of target and source data.

        \item (\textbf{Upper bound}) Suppose that $\hat{f}_{T}$ is the output of SATL. For $\delta \in (0,1)$, when $n_{S}$ and $n_{T}$ are sufficiently large but still in transfer learning regime, with probability $1-2\delta$, we have 
        \begin{equation*}
            \| \hat{f}_{T} - f_{T} \|_{L_{2}}^{2}  \leq C \left(\log \frac{4}{\delta}\right)^2 \left\{ \left(\frac{n_{S}}{\log n_{S}}\right)^{-\frac{2m_{0}}{2m_{0}+d}} + \left(\frac{n_{T}}{\log n_{T}}\right)^{-\frac{2m}{2m+d}} \xi(h,f_{S})\right\},
        \end{equation*}
        where $C$ is a universal constant independent of $n_{S}$, $n_{T}$, $h$ and $\delta$.
    \end{enumerate}
\end{theorem}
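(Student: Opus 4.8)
The plan is to treat the two parts separately: establish the lower bound by reducing the joint problem to two single-task testing problems, and establish the upper bound by carefully decomposing the two-phase SATL error and invoking the adaptive Gaussian-KRR guarantee (Theorem~\ref{thm: adaptive rate of single-task GKRR}) in each phase. For the upper bound I would first exploit that the source and target samples are independent, so I may condition on the Phase-1 estimator $\hat{f}_{S}$ and treat it as a fixed function. Under this conditioning the Phase-2 pseudo-labels satisfy $\hat{e}_{T,i} = f_{\delta}(x_{T,i}) + (f_{S}-\hat{f}_{S})(x_{T,i}) + \epsilon_{T,i}$, i.e. they are noisy observations of the function $g := f_{\delta} + (f_{S}-\hat{f}_{S})$ with genuine zero-mean noise $\epsilon_{T,i}$. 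Since $\hat{f}_{T}-f_{T} = \hat{f}_{\delta} - g$ exactly, the whole target error equals the Phase-2 estimation error of $g$.

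The crucial move is to avoid estimating the rough function $g$ directly (which would only deliver an $H^{m_{0}}$ rate), and instead to use the linearity of KRR in its labels to split, at any fixed kernel, $\hat{f}_{\delta} = \hat{f}_{\delta}^{(0)} + \hat{f}_{\delta}^{(S)}$, where $\hat{f}_{\delta}^{(0)}$ is built from the clean offset labels $f_{\delta}(x_{T,i})+\epsilon_{T,i}$ and $\hat{f}_{\delta}^{(S)}$ is the noiseless KRR image of the contamination vector $(f_{S}-\hat{f}_{S})(x_{T,i})$. This yields the exact identity
\[
\hat{f}_{T}-f_{T} = \underbrace{\bigl(\hat{f}_{\delta}^{(0)}-f_{\delta}\bigr)}_{\text{Term A}} + \underbrace{\bigl(\hat{f}_{\delta}^{(S)}-(f_{S}-\hat{f}_{S})\bigr)}_{\text{Term B}} .
\]
I would then run the validation oracle inequality for the Phase-2 training/validation step against $g$ and reduce to the oracle choice $\alpha^{*}=m$ (the closest grid point of $\mcM_{\delta}$ to the offset smoothness). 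At $\alpha^{*}=m$, Term~A is a clean offset KRR problem with signal $f_{\delta}\in H^{m}$, $\|f_{\delta}\|_{H^{m}}\le h$, so Theorem~\ref{thm: adaptive rate of single-task GKRR} gives $\|\text{Term A}\|_{L_{2}}^{2}\lesssim h^{2}(n_{T}/\log n_{T})^{-2m/(2m+d)}$, which is exactly $\xi(h,f_{S})\,(n_{T}/\log n_{T})^{-2m/(2m+d)}$ once the common $\|f_{S}\|_{H^{m_{0}}}^{2}$ scale (bounded by $R^{2}$) is absorbed into $C$. Term~B is a purely deterministic noiseless-recovery error, which I would bound by $\lesssim\|f_{S}-\hat{f}_{S}\|_{L_{2}}^{2}$ uniformly in the regularization and then control via the Phase-1 guarantee, $\|f_{S}-\hat{f}_{S}\|_{L_{2}}^{2}\lesssim(n_{S}/\log n_{S})^{-2m_{0}/(2m_{0}+d)}$. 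A union bound over the two high-probability events (Phase-1 and Phase-2) accounts for the $1-2\delta$ confidence.

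For the lower bound I would apply the standard reduction-to-testing machinery twice, keeping both families inside the single parameter space $\Theta(h,m_{0},m)$. To isolate the source term, fix $f_{\delta}\equiv 0$ (trivially in the $H^{m}$ ball) and let $f_{T}=f_{S}$ range over a Varshamov--Gilbert packing of the $H^{m_{0}}$-ball of radius $R$; both source and target samples then inform $f_{T}$, so in the transfer regime $n_{S}\gtrsim n_{T}$ the effective sample size is $\asymp n_{S}$ and Fano's inequality (with KL controlled through the sub-exponential noise of Assumption~\ref{assump: error tail}) yields the $n_{S}^{-2m_{0}/(2m_{0}+d)}$ term. To isolate the offset term, fix a single source law with regression $f_{S}^{*}$ (chosen so $\|f_{S}^{*}\|_{H^{m_{0}}}$ realizes the normalizing scale in $\xi$) and perturb only $f_{T}=f_{S}^{*}+f_{\delta}$ over a packing of the $H^{m}$-ball of radius $h$; now the source data carry no information about $f_{\delta}$, so only the $n_{T}$ target samples matter, giving $\xi(h,f_{S})\,n_{T}^{-2m/(2m+d)}$. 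Since both families lie in $\Theta$, the minimax risk dominates each, and $\max\{a,b\}\ge(a+b)/2$ produces the stated sum.

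I expect the main obstacle to be Term~B together with the adaptivity of Phase~2 under contaminated labels. The subtlety is that the validation step selects the kernel smoothness from the rough labels $g$, whose leading behavior is governed by the $H^{m_{0}}$ contamination rather than the smooth signal $f_{\delta}$; the resolution is the oracle-inequality reduction to $\alpha^{*}=m$ combined with the linear splitting, so that the possibly rough selected kernel never enters Term~A and the contamination is routed entirely into the benign, deterministic Term~B. Making Term~B rigorous requires a bias bound showing that the noiseless KRR image of $f_{S}-\hat{f}_{S}$ does not inflate its $L_{2}$ norm, uniformly in the bandwidth and regularization, even though $f_{S}-\hat{f}_{S}$ need not belong to the Gaussian RKHS. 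Tracking the $h$- and $\|f_{S}\|_{H^{m_{0}}}$-dependence cleanly through both phases, so that the final constant is genuinely independent of $n_{S},n_{T},h$ and $\delta$, is the remaining bookkeeping burden.
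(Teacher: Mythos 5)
Your proposal is correct in substance and, on the lower bound, essentially coincides with the paper's argument (Appendix~\ref{apd: proof of lower bound of SATL}): the paper likewise keeps both families inside $\Theta(h,m_0,m)$, takes $f_\delta \equiv 0$ to get a source-rate term with effective sample size $n_S + n_T$ (then invokes the transfer regime $n_S \gg n_T$ to write $n_S$, exactly as you do), takes a fixed source law with an $H^m$-packing of radius $h$ to get $h^2\, n_T^{-2m/(2m+d)}$, and combines via Proposition~\ref{prop: Lower bound for target-only KRR} (Fano plus Varshamov--Gilbert). Your choice of a nonzero $f_S^{*}$ whose norm realizes the normalization in $\xi(h,f_S)$ is in fact slightly more careful than the paper's choice $f_S = 0$, under which $\xi$ is only restored by ``adjusting constants.''

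Where you genuinely diverge is Phase~2 of the upper bound. The paper (Appendix~\ref{apd: proof of upper bound of SATL}) uses the plain triangle inequality $\|\hat f_T - f_T\|_{L_2} \le \|\hat f_S - f_S\|_{L_2} + \|\hat f_\delta - f_\delta\|_{L_2}$, introduces the intermediate $f_{\delta,\lambda} = (T_K + \lambda_2 \mathbf{I})^{-1} T_K(f_\delta)$, reuses Proposition~\ref{prop: bounds for approximation error} with $\|f_\delta\|_{H^m} \le h$, and then simply asserts that replacing the labels by the pseudo-labels $y_{T,i} - \hat f_S(x_{T,i})$ ``will not change the conclusion'' of Theorem~\ref{thm: bounds for estimation error} --- the contamination $(f_S - \hat f_S)(x_{T,i})$ is absorbed implicitly. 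You instead condition on $\hat f_S$ (legitimate by independence of the two samples), use the exact identity $\hat f_T - f_T = \hat f_\delta - g$ with $g = f_\delta + (f_S - \hat f_S)$, and split $\hat f_\delta$ by label-linearity into a clean-offset estimator (Term~A, handled at the oracle grid point $\alpha^{*} = m$ via the validation oracle inequality, which is exactly the mechanism in the proof of Theorem~\ref{thm: adaptive rate of single-task GKRR}) and a noiseless KRR image of the contamination (Term~B). This makes explicit the contamination accounting that the paper leaves implicit, and your bookkeeping of the $h$- and $\|f_S\|_{H^{m_0}}$-dependence matches the paper's ($C \lesssim \|f_S\|_{H^{m_0}}$ in Phase~1, $C \lesssim h$ in Phase~2, then factor out $\|f_S\|_{H^{m_0}}^2 \le R^2$ to produce $\xi$).

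The one soft spot is precisely the step you flag: the uniform bias bound $\|\hat f_\delta^{(S)} - (f_S - \hat f_S)\|_{L_2} \lesssim \|f_S - \hat f_S\|_{L_2}$. This is not automatic here: $f_S - \hat f_S$ does not lie in the Gaussian RKHS, and the optimal $\lambda$ is exponentially small in $n_T$, so the estimator nearly interpolates the contamination at the sample points and the empirical-to-population transfer must go through the operator-concentration event (the $A_2 \le 3$ bound via Proposition~\ref{prop: bounds for A2}) and the effective-dimension estimate $\mcN(\lambda) \asymp n_T^{d/(2m+d)}$ of Lemma~\ref{lemma: bound for effective dimension}, plus some control sharper than the plain $L_2$ bound on the Phase-1 error --- essentially a rerun of the machinery behind Propositions~\ref{prop: bounds for A3} and~\ref{prop: bounds on I1}. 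Since the paper's own proof buries the same difficulty inside its one-line substitution claim (the pseudo-label mean is $T_K(f_\delta + f_S - \hat f_S)$, not $T_K(f_\delta)$, and the resulting drift of size $O(\|f_S - \hat f_S\|_{L_2})$ is dominated by the Phase-1 term but never displayed), your route is defensible and arguably more honest; to be complete, though, you must either carry out the Term-B analysis along those lines or follow the paper in bounding $\|\hat f_\delta - f_\delta\|_{L_2}$ directly and exhibiting the drift term explicitly.
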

\begin{remark}
    Theorem~\ref{thm: optimality of SATL} indicates the excess risk of SATL consists of two terms, where the first term is the source error and the second term is the offset error. The first term can be viewed as the error of regressing $f_{T}$ with the source sample size, while the second term is the error of regressing the offset function with the target sample size. The upper bound is tight up to logarithm factors, which is a price paid for adaptivity. Note that the upper bound is exactly tight when the Sobolev smoothness $m_{0}$ and $m$ are known.
\end{remark}

In comparison to the convergence rate of the target-only baseline estimator, $n_{T}^{-{2m_{0}}/{(2m_{0} + d)}}$, our results indicate that the transfer learning efficacy depends jointly on the sample size in source domain $n_{S}$, and the factor $\xi(h,f_{S})$. The factor $\xi(h,f_{S})$ represents the relative task signal strength between the source and target domains. Geometrically, one can interpret $\xi(h,f_{S})$ as the factor controlling the angle (thus similarity) between $f_{T}$ and $f_{S}$ within the RKHS. 


\begin{figure}[ht]
    \centering
    \includegraphics[width = 0.33\textwidth]
    {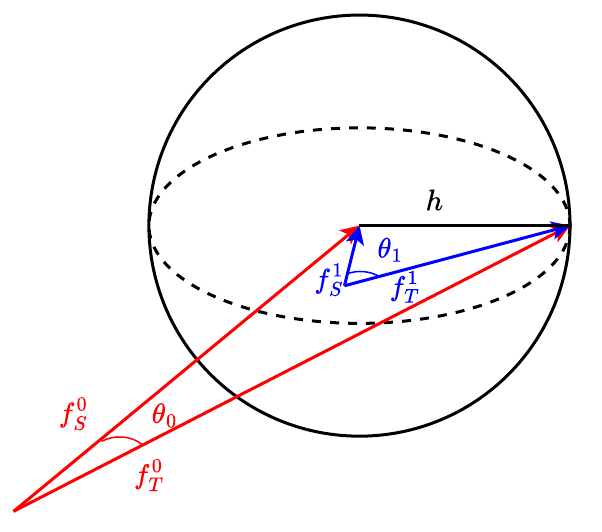}
    \caption{Geometric illustration for how $\xi(h,\mcS)$ will affect the OTL dynamic. The circle represents an RKHS ball centered around $f_{S}$ with radius $h$. Two sets of $f_{S}$ and $f_{T}$ (denoted by red and blue) possess the same offset with the same signal strength $h$ while the source models' signal strength are different, leading to different angle $\theta_{0}$ and $\theta_{1}$ between $f_{S}$ and $f_{T}$.} 
    \label{fig: angle figure}
\end{figure}

Specifically, when $f_{T}$ and $f_{S}$ possess high similarity leading to a small $\xi(h,\mcS)$ (so thus the angle), then the source error will be the dominant term. Thus the convergence rate of $\hat{f}_{T}$ is much faster than the target-only baseline given $n_{S} \gg n_{T}$. On the other hand, $f_{S}$ being adversarial to $f_{T}$ leads to large $\xi(h,f_{S})$, making the offset error the dominant term but still not worse than the target-only baseline up to a constant.

Our results not only recover the analysis presented in \citet{wang2016nonparametric,du2017hypothesis}, but also provide insights into how signal strength from different domains will affect the OTL dynamics. Besides, although statistical rates in \citet{li2022transfer,tian2022transfer} claimed that the OTL is taking effect when the magnitude of signal strength of the offset is small, i.e. small $h$, our results identify it should depend on the angle between $f_{S}$ and $f_{T}$, i.e. $\xi(h,f_{S})$. In Figure~\ref{fig: angle figure}, even signal strength of the offset (i.e. $h$) of $f_{T}^{1}$ and $f_{S}^{1}$ is the same as the offset of $f_{T}^{0}$ and $f_{S}^{0}$, the angle $\theta_{0}$ and $\theta_{1}$ between the target and source functions are different. It indicates the similarity between $f_{S}^{0}$ and $f_{T}^{0}$ is higher and makes the OTL more effective.

Finally, it is also worth highlighting how the refinement of a ``simple'' offset provides a better statistical rate. Based on the saturation effort of KRR, using the same Mat\'ern kernel with smoothness $m_{0}$ for both steps will lead the offset error to be $n_{T}^{- 2\gamma m_{0}/(2\gamma m_{0} + d)  }$, where $\gamma =  \min\{ 2, m/m_{0} \}$, which is never faster than $n_{T}^{-2m/(2m+d)}$.

\section{Experiments}
In this section, we aim to confirm our theoretical results on target-only and transfer learning contexts. 

\subsection{Experiments for Target-Only KRR}\label{subsec: simulation for Adaptivity of Gaussian KRR}
Let $\mcX = [0,1]$ and the marginal distribution of $x$ be the uniform distribution over $[0,1]$. Our objective is to empirically confirm the adaptability of Gaussian kernels in target-only KRR when $f_{0}\in H^{\alpha}([0,1])$ for different smoothness $\alpha$. Specifically, we explore cases where $f_{0}$ belongs to $H^{2}$ and $H^{3}$. To generate such $f_{0}$ with the desired Sobolev smoothness, we set $f_{0}$ to be the sample path that is generated from the Gaussian process with isotropic Mat\'ern covariance kernels $K_{\nu}$ \citep{stein1999interpolation}. We set $\nu = 2.01$ and $3.01$ to generate the corresponding $f_{0}$ with smoothness $2$ and $3$, see Corollary 4.15 in \citet{kanagawa2018gaussian} for detail discussion about the connection between $\nu$ and $\alpha$. Formally, we consider the following data generation procedure: $y_{i} = f_{0}(x_{i}) + \sigma \epsilon_{i}$,
where $\epsilon_{i}$ are i.i.d. standard Gaussian noise, $\{x_{i}\}_{i=1}^{n} \stackrel{i.i.d.}{\sim} U([0,1])$ and $\sigma = 0.5$.

We verify both the nonadaptive and adaptive rate presented in Theorem~\ref{thm: non-adaptive rate of KRR with Gaussian} and~\ref{thm: adaptive rate of single-task GKRR}. The sample size ranges from $1000$ to $3000$ in intervals of $100$. For different $\alpha$, we set $\lambda = \exp\{-Cn^{\frac{2}{2\alpha + 1}}\}$ with a fixed $C$. To evaluate the adaptivity rate, we set the candidate smoothness as $[1,2,3,4,5]$ and split the dataset equally in size to implement training and validation. The generalization error $\|\hat{f} - f\|_{L_{2}}$ is obtained by Simpson's rule. For each combination of $n$ and $\alpha$, we repeat the experiments $100$ times and report the average generalization error. To demonstrate the convergence rate of the error is sharp, we regress the logarithmic average generalization error, i.e. $\log(\|\hat{f} - f\|_{L_{2}})$, on $\log(n)$ and compare the regression coefficient to its theoretical counterpart $-\frac{2\alpha}{2\alpha + 1}$.

\begin{figure}[ht]
    \centering
    \includegraphics[width = \textwidth]{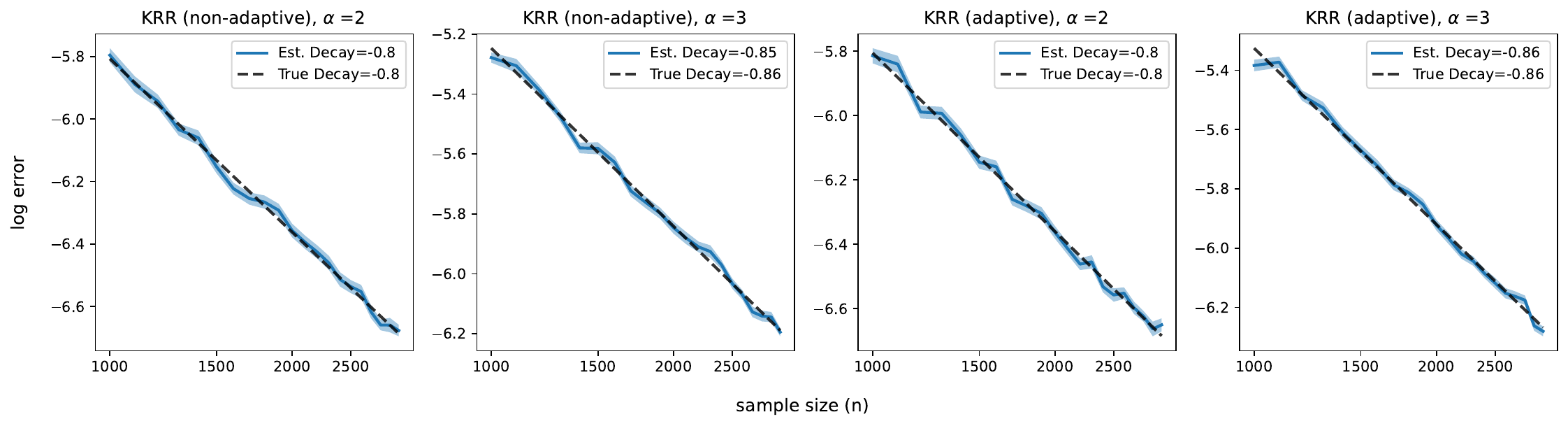}
    \caption{Error decay curves of target-only KRR based on Gaussian kernel, both axes are in log scale. The blue curves denote the average generalization errors over 100 trials. The dashed black lines denote the theoretical decay rates.
    }
    \label{fig: adaptive and nonadaptive rate for target-only KRR with the best slope}
\end{figure}

We try different values of $C$ lies in $[0.05, \cdots,4]$, and report the optimal curve in Figure~\ref{fig: adaptive and nonadaptive rate for target-only KRR with the best slope} under the best choice of $C$. Remarkably, for both nonadaptive and adaptive rates, the theoretical lines align closely with the empirical data points. The estimated regression coefficients also closely agree with the theoretical counterparts. Additionally, we also report the generalization error decay estimation results for other values of $C$ and refer to Appedix~\ref{apd: simulation} for more details.

\subsection{Experiments for Transfer Learning}

We now illustrate our theoretical analysis of SATL through two experiments with synthetic data. We generate the target/source functions and the offset function as follows: $(i)$ The target function $f_{T}$ is a sample path of the Gaussian process with Mat\'ern kernel $K_{1.01}$ such that $f_{T} \in H^{1}$; $(ii)$ The offset function $f_{\delta}$ is a sample path of Gaussian process with Mat\'ern kernel $K_{\nu}$ with $\nu = 2.01, 3.01, 4.01$ such that $f_{\delta}$ belongs to $H^{2}, H^{3}, H^{4}$ respectively. Hence, we consider the following data generation procedure: 
\begin{gather*}
    \{x_{i,T}\}_{i=1}^{n_{T}},  \{x_{i,S}\}_{i=1}^{n_{S}} \stackrel{i.i.d.}{\sim} U([0,1]) \\
    y_{i,T} = f_{T}(x_{i,T}) + \sigma \epsilon_{i,T}, \quad i = 1, \cdots, n_{T} \\
    y_{i,S} = f_{T}(x_{i,S}) + f_{\delta}(x_{i,S}) + \sigma \epsilon_{i,S}, \quad  i = 1, \cdots, n_{S} 
\end{gather*}
where $\epsilon_{i,p}$ are i.i.d. standard Gaussian noise and $\sigma = 0.5$.

To demonstrate the transfer learning effect, we consider two different settings: (1) we fix $n_{T}$ as $50$ and vary $n_{S}$. (2) We set $n_{S} = n_{T}^{3/2}$ while varying $n_{T}$, i.e. the source sample size grows in a polynomial order of target sample size. In the first scenario, it is expected that the generalization error first decreases and then remains unchanged as $n_{S}$ increases since the offset error (a constant for fixed $n_{T}$) eventually dominates. In the second scenario, the generalization error satisfies $ \mcE(\hat{f}_{T}) = O ( n_{T}^{-\frac{3m_{0}}{2m_{0} + 1}} + n_{T}^{-\frac{2m}{2m+1}} \xi(h,f_{S}) ) = O(n_{T}^{-\frac{2m}{2m+1}})$. We also consider the finite basis expansion (FBE) TL algorithm proposed in \citet{wang2016nonparametric} as a competitor. The basis that the authors originally used in their paper was the Fourier basis which produces weak results in our setting, and we compared it to a modification of their algorithm with Bspline as the final competitor. We refer to Appendix~\ref{apd: simulation} for additional results on the Fourier basis and the other experiments' details.

\begin{figure}[ht]
\centering
\begin{subfigure}{.5\textwidth}
  \centering
  \includegraphics[width=\linewidth]{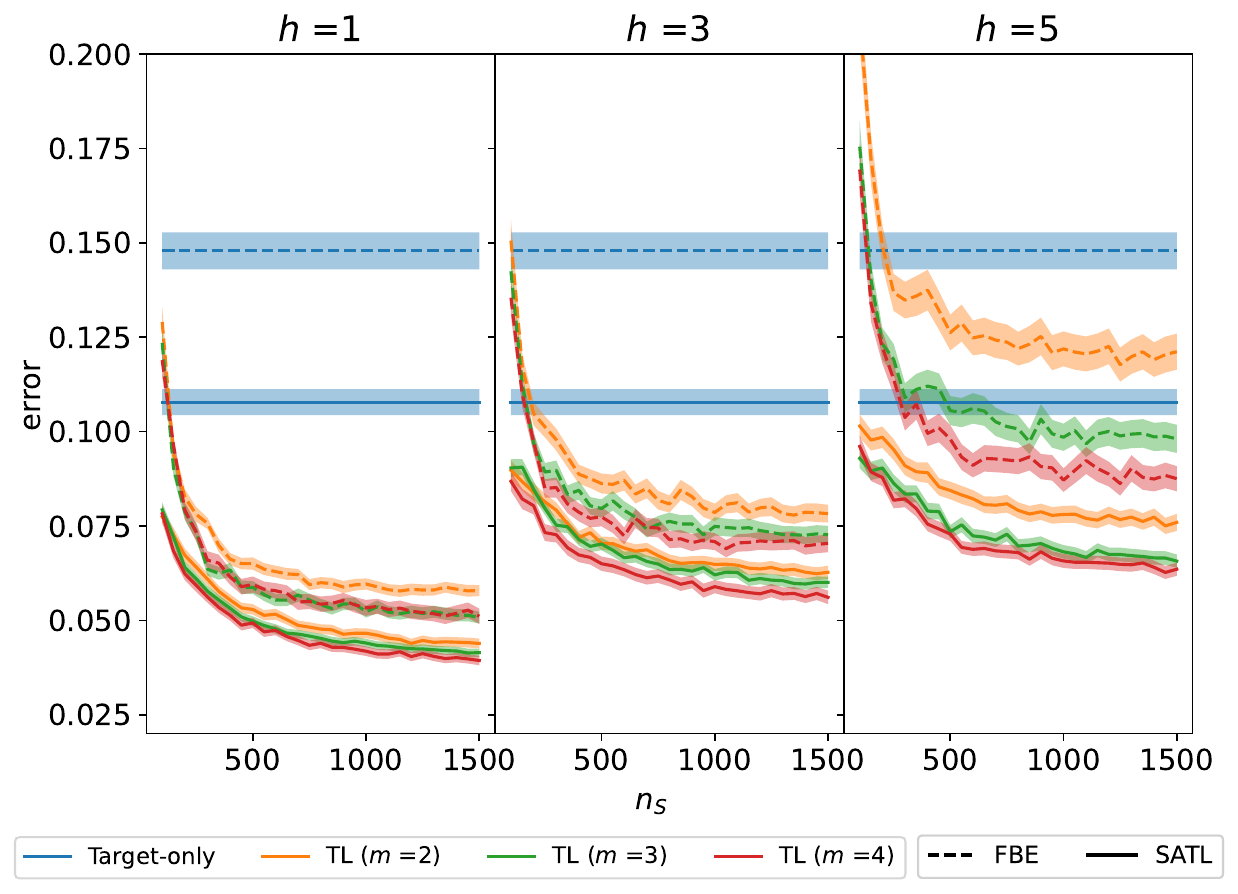}
  \caption{}
  \label{fig:sub1}
\end{subfigure}%
\begin{subfigure}{.5\textwidth}
  \centering
  \includegraphics[width=\linewidth]{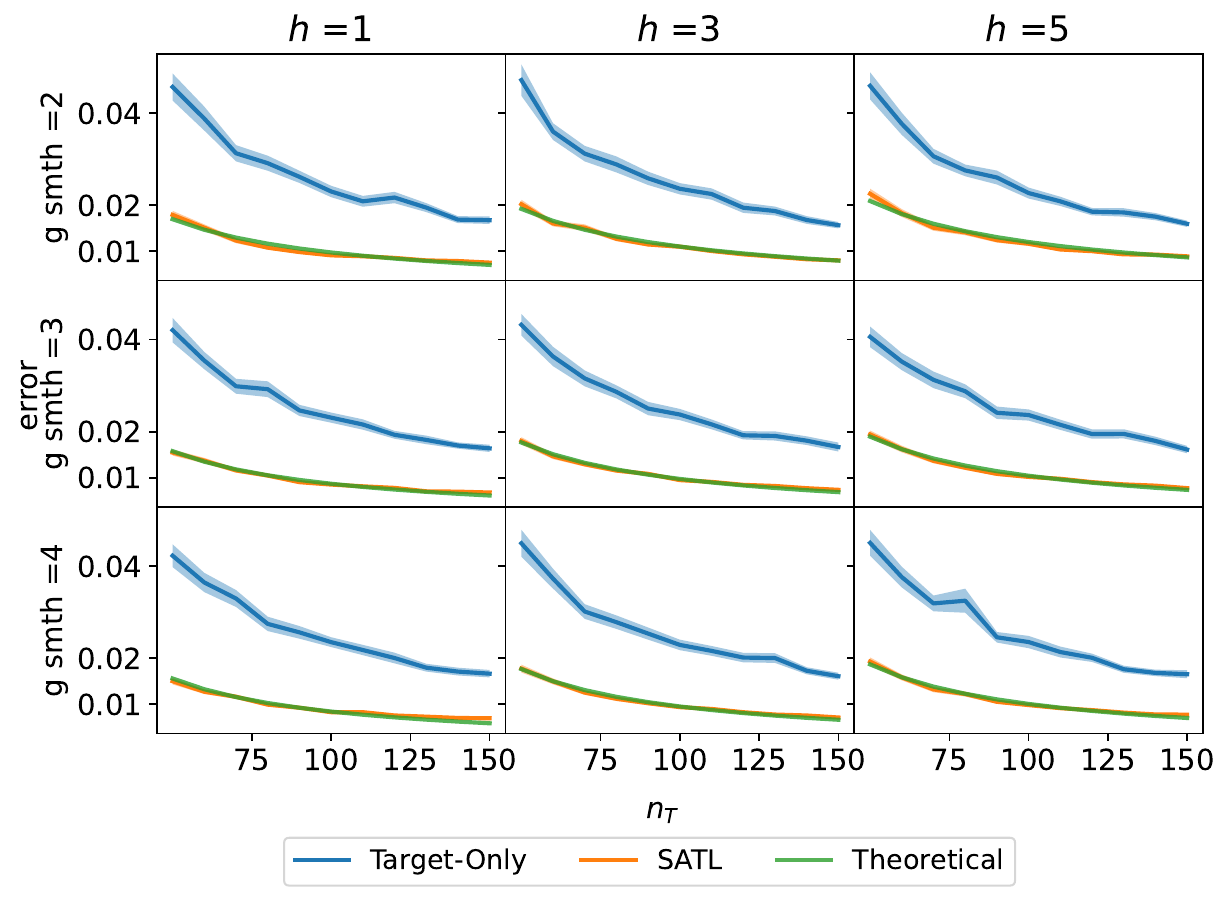}
  \caption{}
  \label{fig:sub2}
\end{subfigure}
\caption{Generalization error under different $h$ and smoothness of $f_{\delta}$. Each curve denotes the average error over 100 trails and the shadow regions denote one standard error of the mean. The left figure contains results for fixed $n_{T}$ scenario while the right figure is for varying $n_{T}$ scenario.}
\label{fig:test}
\end{figure}

Figure~\ref{fig:sub1} presents the generalization error for the fixed $n_{T}$ scenario. As $n_{S}$ increases, the generalization error initially decreases and then gradually levels off, consistent with our expectations. Furthermore, if the smoothness of the offset function $f_{\delta}$ is higher, a smaller error is obtained, which agrees mildly with our theoretical analysis. Finally, compared to the FBE approach, SATL achieves overall smaller errors. Figure~\ref{fig:sub2} presents the generalization error for $n_{S} = n_{T}^{3/2}$ with varying $n_{T}$ setting. Here the error term is expected to be upper bounded by $n_{T}^{-2m/(2m+1)}$. One can see our empirical error is consistent with the theoretical upper bound asymptotically in all settings. Besides, the SATL outperforms the target-only learning KRR baseline in all settings.

\section{Discussion}
We presented SATL, a kernel-based OTL that uses Gaussian kernels as imposed kernels. This enables the estimators to adapt to the varying and unknown smoothness in their corresponding functions. SATL achieves minimax optimality (up to a logarithm factor) as the upper bound of SATL matched the lower bound of the OTL problem.
Notably, our Gaussian kernels' result in target-only learning also serves as a good supplement to misspecified kernel learning literature.

Focusing on future work, when the source model possesses different smoothness compared to the target model, it is of interest to see when the source helps in learning the target, especially for a rougher source model. Besides, in developing the optimality of target-only KRR with Gaussian kernels, we use the Fourier transform technique to control the approximation error, which is feasibly applied to RKHS that are norm-equivalent to fractional Sobolev spaces. Although this makes our results quite broadly applicable when one is primarily interested in the smoothness of the functions, it certainly doesn't cover all possible structures of interest (e.g. periodic functions etc). It is of interest to develop other tools to extend the results to more general RKHS.

\bibliographystyle{plainnat}
\bibliography{ref}

\newpage
\appendix
{\noindent \LARGE \textbf{Appendix}} \par 
\noindent 


\section*{Table of Contents}
\startcontents[sections]
\printcontents[sections]{}{1}{}

\newpage











\section{Notation}\label{apd: notation}
The following notations are used throughout the rest of this work and follow standard conventions. For asymptotic notations: $f(n) = O(g(n))$ means for all $c$ there exists $k>0$ such that $f(n)\leq c g(n)$ for all $n\geq k$; $f(n)\asymp g(n)$ means $f(n) = O(g(n))$ and $g(n) = O(f(n))$; $f(n) = \Omega(g(n))$ means for all $c$ there exists $k>0$ such that $f(n)\geq c g(n)$ for all $n\geq k$. We use the asymptotic notations in probability $O_{\mathbb{P}}(\cdot)$. That is, for a positive sequence $\{a_{n}\}_{n\geq 1}$ and a non-negative random variable sequence $\{X_{n}\}_{n\geq 1}$, we say $X_{n} = O_{\mathbb{P}}(a_n)$ if for any $\delta> 0$, there exist $M_{\delta}$ and $N_{\delta}$ such that $\mathbb{P}(X_{n}\leq M_{\delta}a_{n})\geq 1-\delta$, $\forall n\geq N_{\delta}$. The definition of $\Omega_{\mbP}$ follows similarly.


For a function $f\in L_{1}(\mathbb{R}^{d})$, its Fourier transform is denoted as 
\begin{equation*}
    \mcF(f)(\omega)=(2 \pi)^{-d / 2} \int_{\mathbb{R}^d} f(x) e^{-i x^T \omega} d x.
\end{equation*}

Since we assume $\mu_{T} = \mu_{T}$, we use $L_{2}(\mcX, d\mu_{p})$ for $p\in \{T,S\}$ to represent the Lebesgue $L_{2}$ space and abbreviate it as $L_{2}$ for simplicity when there is no confusion.

\section{Foundation of RKHS}\label{apd: the foundation of RKHS}

\subsection{Basic Concept}\label{apd: RKHS basic concept}
In this section, we will present some facts about the RKHS that are useful in our proof and refer readers to \citet{wendland2004scattered} for a more detailed discussion. 

Assume $K: \mcX \times \mcX \rightarrow \mathbb{R}$ is a continuous positive definite kernel function defined on a compact set $\mcX \subset \mathbb{R}^{d}$ (with positive Lebesgure measure and Lipschitz boundary). Indeed, every positive definite kernel can be associated with a reproducing kernel Hilbert space (RKHS). The RKHS, $\mcH_{K}$, of $K$ are usually defined as the closure of linear space $\text{span}\{ K(\cdot, x), x\in \mcX \}$. In a special case where the kernel function $K(x,y)$ is equal to a translation invariant (stationary) function $\Phi(x-y) = K(x,y)$ with $\Phi:\mbR^{d}\rightarrow \mbR$, we can characterize the RKHS of $K$ in terms of Fourier transforms, i.e. 
\begin{equation*}
    \mcH_{K}(\mbR^{d}) = \left\{ f \in L_{2}(\mathbb{R}^{d}) \cap C(\mathbb{R}^{d}) : \frac{\mcF(f)}{\sqrt{\mcF(K)}} \in L_{2}(\mathbb{R}^{d}) \right\}.
\end{equation*}
When $\mcX$ is a subset of $\mbR^{d}$, such a definition still captures the regularity of functions in $\mcH_{K}(\mcX)$ via a norm equivalency result that holds as long as X has a Lipschitz boundary.

For an integer $m$, we introduce the integer-order Sobolev space, $\mcW^{m, p}(\mcX)$. For vector $\alpha = (\alpha_{1},\cdots,\alpha_{d})$, define $|\alpha| = \alpha_{1} + \cdots + \alpha_{d}$ and $D^{(\alpha)} = \frac{\partial^{|\alpha|}}{\partial x_{1}^{\alpha_{1}} \cdots \partial x_{d}^{\alpha_{d}}}$ denote the multivariate mixed partial weak derivative. Then 
\begin{equation*}
    \mcW^{m, p}(\mcX)  = \left\{ f\in L^{p}(\mcX): D^{(\alpha)}f \in L_{p}(\mbR^{d}), \forall |\alpha| \leq m \right\},
\end{equation*}
where $m$ is the smoothness order of the Sobolev space. In this paper, we only consider $p=2$ and abbreviate $\mcW^{m,2}(\mcX):= H^{m}(\mcX)$. Later, in Appendix~\ref{apd: Norm Equivalency between RKHS and Sobolev Space}, we will see one can define the $H^{m}$ via Fourier transform of the reproducing kernel instead of weak derivative.

We now introduce the power space of an RKHS. For the reproducing kernel $K$, we can define its integral operator $T_{K}: L_{2} \rightarrow L_{2}$ as 
\begin{equation*}
    T_{K} (f)(\cdot) = \int_{\mcT} K(s,\cdot)f(s) ds.
\end{equation*}
$L_{K}$ is self-adjoint, positive-definite, and trace class (thus Hilbert-Schmidt and compact). By the spectral theorem for self-adjoint compact operators, there exists an at most countable index set $N$, a non-increasing summable positive sequence $\{\tau_{j}\}_{j\geq 1}$ and an orthonormal basis of $L_{2}$, $\{e_{j}\}_{j\geq1}$ such that the integrable operator can be expressed as 
\begin{equation*}
    T_{K}(\cdot) = \sum_{j\in N} \tau_{j} \langle \cdot, e_{j} \rangle_{L_{2}} e_{j}.
\end{equation*}
The sequence $\{\tau_{j}\}_{j\geq 1}$ and the basis $\{e_{j}\}_{j\geq1}$ are referred as the eigenvalues and eigenfunctions. The Mercer's theorem shows that the kernel $K$ itself can be expressed as 
\begin{equation*}
    K(x,x') = \sum_{j\in N} \tau_{j} e_{j}(x) e_{j}(x'), \quad \forall x,x' \in \mcT,
\end{equation*}
where the convergence is absolute and uniform.

We now introduce the fractional power integral operator and the composite integral operator of two kernels. For any $s\geq 0$, the fractional power integral operator $L_{K}^{s}:L_{2} \rightarrow L_{2}$ is defined as 
\begin{equation*}
    T_{K}^{s}(\cdot) = \sum_{j\in N} \tau_{j}^{s} \langle \cdot, e_{j} \rangle_{L_{2}} e_{j}.
\end{equation*}
Then the power space $[\mcH_{K}]^{s}$ is defined as 
\begin{equation*}
    [\mcH_{K}]^{s} := \left\{ \sum_{j\in N} a_{j} \tau_{j}^{\frac{s}{2}} e_{j}: (a_{j})\in \ell^{2}(N)  \right\}
\end{equation*}
and equipped with the inner product 
\begin{equation*}
    \langle f, g \rangle_{[\mcH_{K}]^{s}} = \left \langle T_{K}^{-\frac{s}{2}}(f), T_{K}^{-\frac{s}{2}}(g) \right\rangle_{L_{2}}.
\end{equation*}
For $0 < s_{1} < s_{2}$, the embedding $[\mcH_{K}]^{s_{2}} \hookrightarrow [\mcH_{K}]^{s_{1}}$ exists and is compact. A higher $s$ indicates the functions in $[\mcH_{K}]^{s}$ have higher regularity. When $\mcH_{K} = H^{m}$ with $m > d/2$, the real interpolation indicates $ [H^{m}]^{s} \cong H^{m s}, \forall s > 0$.

\subsection{Norm Equivalency between RKHS and Sobolev Space}\label{apd: Norm Equivalency between RKHS and Sobolev Space}
Now, we state the result that connects the general RKHS and Sobolev space.
\begin{lemma}\label{lemma: equivalence between RKHS and Sobolev}
    Let $K(x,x')$ be the translation-invariant kernel and $\tilde{K}\in L^{1}(\mbR^{d})$. Suppose $\mcX$ has a Lipschitiz boundary, and the Fourier transform of $K$ has the following spectral density of $m$, for $m\geq d/2$, 
    \begin{equation}\label{eqn: Fourier Transform of reproducing kernel}
        c_{1}(1 + \|\cdot\|_{2}^{2})^{m} \leq \mcF(K)(\cdot) \leq c_{2}(1 + \|\cdot\|_{2}^{2})^{m}.
    \end{equation}
    for some constant $0 < c_{1} \leq c_{2}$. Then, the associated RKHS of $K$, $\mcH_{K}(\mcX)$, is norm-equivalent to the Sobolev space $H^{m}(\mcX)$.
\end{lemma}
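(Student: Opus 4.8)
The plan is to establish the norm equivalence by working through the Fourier-analytic characterization of both spaces, first on all of $\mathbb{R}^d$ and then transferring to $\mcX$ via an extension argument that exploits the Lipschitz boundary. Recall that the fractional Sobolev norm on $\mathbb{R}^d$ admits the Fourier representation $\|f\|_{H^m(\mbR^d)}^2 \asymp \int_{\mbR^d} (1+\|\omega\|_2^2)^m |\mcF(f)(\omega)|^2 \, d\omega$, while the translation-invariant RKHS is characterized (as stated in the excerpt) by $\mcH_K(\mbR^d) = \{ f : \mcF(f)/\sqrt{\mcF(K)} \in L_2(\mbR^d)\}$ with norm $\|f\|_{\mcH_K(\mbR^d)}^2 = \int_{\mbR^d} |\mcF(f)(\omega)|^2 / \mcF(K)(\omega)\, d\omega$. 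The core of the argument is then immediate from the two-sided spectral bound~\eqref{eqn: Fourier Transform of reproducing kernel}: dividing the hypothesis $c_1 (1+\|\omega\|_2^2)^m \leq \mcF(K)(\omega) \leq c_2(1+\|\omega\|_2^2)^m$ into the integrands shows that the weight $1/\mcF(K)(\omega)$ is sandwiched between $c_2^{-1}(1+\|\omega\|_2^2)^{-m}$ and $c_1^{-1}(1+\|\omega\|_2^2)^{-m}$, so that
\begin{equation*}
    c_2^{-1} \|f\|_{H^m(\mbR^d)}^2 \leq \|f\|_{\mcH_K(\mbR^d)}^2 \leq c_1^{-1} \|f\|_{H^m(\mbR^d)}^2
\end{equation*}
up to the constants implicit in the Sobolev Fourier representation. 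This proves norm equivalence on the full space $\mbR^d$, and in particular $\mcH_K(\mbR^d) = H^m(\mbR^d)$ as sets.

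First I would state the Fourier characterization of $H^m(\mbR^d)$ and verify that the hypothesis $\tilde K \in L^1(\mbR^d)$ together with $m \geq d/2$ guarantees $\mcF(K)$ is well-defined, continuous, and integrable against the relevant weights so that all integrals above converge and the kernel is genuinely positive definite. Then I would carry out the elementary sandwich estimate displayed above to obtain equivalence of the two norms on $\mbR^d$. The second half of the proof is the transfer from $\mbR^d$ to the domain $\mcX$: one must show that the restriction $\mcH_K(\mcX)$ (defined intrinsically via the reproducing kernel on $\mcX$) is norm-equivalent to $H^m(\mcX)$. This is where the Lipschitz boundary hypothesis enters, through a bounded linear extension operator $E: H^m(\mcX) \to H^m(\mbR^d)$ (Stein's extension theorem, valid precisely for Lipschitz domains) paired with the trivial restriction operator, which together give norm equivalence of the restricted spaces. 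The standard machinery here is the restriction/extension characterization of RKHS on subdomains, for which I would cite Wendland's treatment.

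The main obstacle I anticipate is not the spectral sandwich — that is a one-line computation — but rather the rigorous handling of the restriction step: showing that the intrinsic RKHS norm of $K$ on $\mcX$ is comparable to the infimum-of-extensions norm inherited from $\mcH_K(\mbR^d)$, and that this in turn is comparable to $\|\cdot\|_{H^m(\mcX)}$. The delicate point is that the RKHS on $\mcX$ is defined by the kernel restricted to $\mcX \times \mcX$, and one needs that every function in $\mcH_K(\mcX)$ extends to a function in $\mcH_K(\mbR^d)$ with controlled norm while every restriction to $\mcX$ of a function in $\mcH_K(\mbR^d)$ lands in $\mcH_K(\mcX)$ with comparable norm. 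Both directions rely on the Lipschitz-boundary extension theorem for Sobolev spaces combined with the global equivalence already established; I would organize this as a short lemma invoking Stein extension so that the only genuinely nontrivial analytic input is cited rather than reproved. Everything else reduces to tracking constants $c_1, c_2$ and the extension operator norm.
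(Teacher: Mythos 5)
Your proposal is correct, and it is essentially the route the paper itself relies on: the paper gives no self-contained proof of this lemma, instead invoking exactly the two ingredients you assemble — the Fourier-domain characterization of the RKHS of a translation-invariant kernel (from Wendland) together with a Stein-type extension theorem for Lipschitz domains to pass from $\mbR^{d}$ to $\mcX$ via the infimum-of-extensions norms on both sides — so your write-up is the paper's implicit argument made explicit, including correctly identifying the restriction/extension step as the only nontrivial analytic content. One caution on signs: the spectral bound as literally stated, $c_{1}(1+\|\omega\|_{2}^{2})^{m}\leq \mcF(K)(\omega)\leq c_{2}(1+\|\omega\|_{2}^{2})^{m}$, is incompatible with $\tilde{K}\in L^{1}(\mbR^{d})$ (which forces $\mcF(K)$ to be bounded) and, fed through your sandwich, would yield equivalence to $H^{-m}$ rather than $H^{m}$; the intended hypothesis — confirmed by the paper's Mat\'ern example, whose Fourier transform decays like $(1+\|\omega\|_{2}^{2})^{-(\nu+d/2)}$ — has exponents $-m$, i.e. $c_{1}(1+\|\omega\|_{2}^{2})^{-m}\leq \mcF(K)(\omega)\leq c_{2}(1+\|\omega\|_{2}^{2})^{-m}$, and your displayed two-sided inequality for $\|f\|_{\mcH_{K}(\mbR^{d})}^{2}$ is exactly right under that corrected hypothesis, so you should state the correction rather than silently carry the paper's typo.
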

Hence, we can naturally define the Sobolev space of order $m$ ($m>\frac{d}{2}$) as 
\begin{equation*}
    H^{m}(\mathbb{R}^{d}) = \left \{ f \in L_{2}(\mathbb{R}^{d}) \cap C(\mathbb{R}^{d}) : \mcF(f)(\cdot) (1 + \|\cdot\|_{2}^{2})^{m} \in L_{2}(\mathbb{R}^{d})    \right\}.
\end{equation*}
One advantage of this definition over the classical way that involves weak
derivatives is it does not require $m$ to be an integer, and thus one can consider the fractional Sobolev space, i.e. $m\in \mbR^{+}$. Such equivalence also holds on $\mcX$ by applying the extension theorem \citep{devore1993besov}. As an implication, let $K_{m,\nu}$ denotes the isotropic Mat\'ern kernel \citep{stein1999interpolation}, i.e. 
\begin{equation*}
    K_{m,\nu}(x ;\rho) = \frac{2^{1-\nu}}{\Gamma(\nu) }\left( \sqrt{2\nu} \frac{\|x\|_2}{\rho}\right)^\nu K_\nu\left( \sqrt{2\nu} \frac{\|x\|_2}{\rho}\right),
\end{equation*}
then the Fourier transform of $K_{m,\nu}$ satisfies Equation~\ref{eqn: Fourier Transform of reproducing kernel} with $m = \nu + \frac{d}{2}$, and thus the RKHS associated with $K_{m,\nu}$ is norm equivalent to Sobolev space $H^{\nu + \frac{d}{2}}$ \citep{wendland2004scattered}. 

For a reproducing kernel that satisfies (\ref{eqn: Fourier Transform of reproducing kernel}), we call it a kernel with Fourier decay rate $m$ and denote it as $K_{m}$. We further denote its associated RKHS as $\mcH_{K_{m}}(\mcX)$. The Fourier decay rate $m$ captures the regularity of $\mcH_{K_{m}}(\mcX)$.

Now, we are ready to define the function space of $f_{S}$, $f_{T}$ and $f_{\delta}$ via the kernel regularity
\begin{assumption}[Smoothness of Target/Source]\label{assump: general assump1}
     There exists an $m_0 \geq d/2$ such that $f_{T}$ and $f_{S}$ belong to $\mcH_{K_{m_{0}}}$.
\end{assumption}
\begin{assumption}[Smoothness of Offset]\label{assump: general assump2}
There exists an $m\geq m_{0}$ such that $f_{\delta} := f_{T} - f_{S}$ belongs to $\mcH_{K_{m}}$.  
\end{assumption}

The proof of all the theoretical results in Section \ref{sec: target-KRR} and \ref{sec: SATL} is built on the assumptions that the true functions are in Sobolev space. Via the norm equivalency (Lemma~\ref{lemma: equivalence between RKHS and Sobolev}), the true functions also reside in RKHSs associated with kernel $K_{m_{0}}$ and $K_{m}$. Therefore, all the theoretical results still hold under Assumption~\ref{assump: general assump1} and \ref{assump: general assump2}.

\section{Target-Only KRR Learning Results}
\subsection{Comparision to Previous Work}\label{apd: target-only literature compare}
In Table~\ref{table: convergence rate comparison}, we compare our results with some state-of-the-art works (to the best of our knowledge) that consider general/Mat\'ern misspecified kernels and Gaussian kernels in target-only setting KRR. For a detailed review of the optimality of misspecified KRR, we refer readers to \citet{zhang2023optimality}. 

\citet{wang2022gaussian} considered the true function lies in $H^{\alpha_{0}}$ while the imposed kernels are misspecified Mat\'ern kernels. On the other hand, \citet{zhang2023optimality} considered the minimax optimality for misspecified KRR in general RKHS, i.e. the imposed kernel is $K$ while the true function $f_{0}\in [\mcH_{K}]^{s}$ for $s\in (0,2]$. However, when the RKHS is specified as the Sobolev space, the results in both papers are equivalent by applying the real interpolation technique in Appendix~\ref{apd: the foundation of RKHS}. Therefore, we place them in the same row.

Unlike the necessary conditions that the imposed RKHS must fulfill $\alpha_{0}' > \alpha_{0}$ to achieve optimality \citep{wang2022gaussian,zhang2023optimality}, our results circumvent this requirement, thereby being more robust. Compared to other works on Gaussian kernel-based KRR, our result shows that the optimality can be achieved only via a fixed bandwidth Gaussian kernel.

We also want to highlight the technical challenge that lies in handling the approximation error. When the true function $f_{0}$ belongs to $H^{\alpha_{0}}$, one can expand the intermediate term $f_{\lambda}$ (an element in the imposed RKHS $H^{\alpha_{0}'}$) and $f_{0}$ under the same basis and controls the approximation error in the form of $\lambda^{\alpha_{0}/\alpha_{0}'}\|f_{0}\|_{H^{\alpha_{0}}}$. This makes the optimal decay order of $\lambda$ in $n$ take a polynomial pattern like the misspecified kernel case in Proposition~\ref{proposition: target-only learning}. However, since the imposed kernel $K$ is the Gaussian kernel and the intermediate term $f_{\lambda}$ is an element in $\mcH_{K}$ (see the definition below), such techniques are no longer applicable. To address this, we leverage the Fourier transform of the true and imposed kernel to control the approximation error.

\begin{table*}[ht]
\centering
\renewcommand*{\arraystretch}{1.5}
\caption{Comparison of generalization error convergence rate (non-adaptive) between our result and the prior literature. Here, we assume the mean function $f_0$ belongs to Sobolev space $H^{\alpha_{0}}$, imposed RKHS means the RKHS that $\hat{f}$ belongs to. ``$-$'' in column $\gamma$ means the bandwidth is fixed during training and does not have an optimal order in $n$. $\mcH_{K}$ means the RKHS associated with the Gaussian kernel while $H^{\alpha_{0}'}$ means the Sobolev space with smoothness order $\alpha_{0}'$.}
\begin{tabular}{|c|c|c|c|c|} 
\hline
  \multicolumn{1}{|c|}{Paper} & \multicolumn{1}{|c|}{Imposed RKHS}  &  \multicolumn{1}{|c|}{Rate} & \multicolumn{1}{|c|}{$\lambda$} & \multicolumn{1}{|c|}{$\gamma$} \\
\hline 
\citet{wang2022gaussian}, \citet{zhang2023optimality}& $H^{\alpha_{0}'}, \alpha_{0}'>\frac{\alpha_{0}}{2}$ & $n^{-\frac{2\alpha_{0}}{2\alpha_{0} + d}}$ & $n^{-\frac{2\alpha_{0}'}{2\alpha_{0} + d}}$ & $-$  \\
\hline 
\citet{eberts2013optimal}& $\mcH_{K}$ & $n^{-\frac{2\alpha_{0}}{2\alpha_{0} + d}+ \xi}, \forall \xi> 0$ & $n^{-1}$ & $n^{-\frac{1}{2\alpha_{0} + d} }$ \\
\hline
\citet{hamm2021adaptive}& $\mcH_{K}$ & $n^{-\frac{2\alpha_{0}}{2\alpha_{0} + d}}\log^{d+1}(n)$ & $n^{-1}$ & $n^{-\frac{1}{2\alpha_{0} + d} }$ \\
\hline
This work & $\mcH_{K}$ & $n^{-\frac{2\alpha_{0}}{2\alpha_{0} + d}}$ & $ \exp\{-Cn^{\frac{2}{2\alpha_{0} + d}}\} $ & $-$  \\
\hline
\end{tabular}

\label{table: convergence rate comparison}

\end{table*}

\subsection{Proof of Proof of Non-adaptive Rate (Theorem~\ref{thm: non-adaptive rate of KRR with Gaussian})}\label{apd: proof of non-adaptive rate}
In the following proof, we will use $C$, $C_1$, and $C_2$ to represent universal constants that could change from place to place. We also omit the $\mcX$ in the norms or in the inner product notation unless specifically specified. We use $\|\cdot\|_{op}$ to denote the operator norm of a bounded linear operator. For the given imposed Gaussian kernel $K$, we define its corresponding integral operator $T_{K}:L_{2}(\mcX) \rightarrow L_{2}(\mcX)$ as 
\begin{equation*}
    T_{K}(f)(\cdot) = \int_{\mcX} f(x) K(x,\cdot) d\rho(x)
\end{equation*}
where $\rho(x)$ is the probability measure over $\mcX$. Note that the integral operator $T_{K}$ can also be seen as a bounded linear operator on $\mcH_{K}$. Its empirical version is 
\begin{equation*}
    T_{K,n} (f) (\cdot) = \frac{1}{n} \sum_{i=1}^{n} \langle f, K_{x_{i}} \rangle_{\mcH_{K}} K_{x_i}
\end{equation*}
where $\mcH_{K}$ is the RKHS of imposed kernel $K$. Besides, for a $x\in \mcX$, we define the sampling operator as $K_{x}: \mcH_{K} \rightarrow \mbR, f \mapsto \langle f, K_{x} \rangle_{\mcH_{K}}$ and its adjoint operator $K_{x}^{*}: \mbR \rightarrow \mcH_{K}, y \mapsto yK_{x}$. Then 
\begin{equation*}
    T_{K,x} = K_{x}^{*} K_{x}, \quad \text{and} \quad T_{K,n} = \frac{1}{n}\sum_{i=1}^{n}K_{x_{i}}^{*} K_{x_{i}}.
\end{equation*}
In addition, we denote the effective dimension
\begin{equation*}
    \mcN(\lambda) = tr( (T_{K} + \lambda)^{-1}T_{K} ) = \sum_{j=1}^{\infty} \frac{s_{j}}{s_{j}+\lambda}.
\end{equation*}
With the notation, the KRR estimator can be written as 
\[
\hat{f} = (T_{K,n} + \lambda \mathbf{I})^{-1} ( \frac{1}{n}\sum_{i=1}^{n}K_{x_{i}}y_{i} ) := (T_{K,n} + \lambda \mathbf{I})^{-1} g_{n}
\]
where $g_n = \frac{1}{n}\sum_{i=1}^{n} K_{x_i}y_{i}$ and $\mathbf{I}$ is the identical operator.
We further define the intermediate term as follows,
\begin{equation*}
     f_{\lambda} : = 
    \underset{  f \in \mcH_{K}}{\operatorname{argmin}} \left \{  \|  ( f_{0} - f )  \|_{L_{2}}^{2} + \lambda \|f\|_{\mcH_{K}}^2  \right\} 
\end{equation*}
and it is not hard to show $f_{\lambda} = (T_{K}+ \lambda \mathbf{I})^{-1} T_{K}(f_{0}) = (T_{K}+ \lambda \mathbf{I})^{-1} g$.

\subsubsection{Proof of the approximation error}
For the approximation error, we can directly apply Proposition~\ref{prop: bounds for approximation error}, which leads to 
\begin{equation*}
    \|f_{\lambda} - f_{0}\|_{L_{2}}^{2} \leq log(\frac{1}{\lambda})^{-\alpha_{0}} \|f_{0}\|_{H^{\alpha_{0}}}^{2}. 
\end{equation*}
Then selecting $\log(1/\lambda) \asymp n^{\frac{2}{2\alpha_{0}+d}}$ leads to 
\begin{equation*}
        \|f_{\lambda} - f_{0}\|_{L_{2}}^{2} \leq n^{-\frac{2\alpha_{0}}{2\alpha_{0}+d}} \|f_{0}\|_{H^{\alpha_{0}}}^{2}. 
\end{equation*}

\subsubsection{Proof of the estimator error}
\begin{theorem}\label{thm: bounds for estimation error}
    Suppose the Assumption (A1) to (A3) hold and $\|f_{0}\|_{L_{q}}\leq C_{q}$ for some $q$. Then by choosing $\log(1/\lambda)\asymp n^{\frac{2}{2\alpha_{0}+d}}$, for any fixed $\delta \in (0,1)$, when $n$ is sufficient large, with probability $1-\delta$, we have 
    \begin{equation*}
        \left \| \hat{f} - f_{\lambda}  \right\|_{L_{2}} \leq ln(\frac{4}{\delta}) C n^{-\frac{\alpha_{0}}{2\alpha_{0}+d}}
    \end{equation*}
    where $C$ is a constant proportional to $\sigma$. 
\end{theorem}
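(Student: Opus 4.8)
The plan is to decompose the estimation error $\hat{f} - f_\lambda$ using the operator-theoretic identities already set up in the excerpt, and then bound each piece via concentration inequalities for the empirical integral operator and the empirical gradient term. Recall that $\hat{f} = (T_{K,n} + \lambda\mathbf{I})^{-1} g_n$ and $f_\lambda = (T_K + \lambda\mathbf{I})^{-1} g$, where $g_n = \frac{1}{n}\sum_i K_{x_i} y_i$ and $g = T_K(f_0)$. First I would write the difference as
\begin{equation*}
    \hat{f} - f_\lambda = (T_{K,n}+\lambda\mathbf{I})^{-1}\big( g_n - (T_{K,n}+\lambda\mathbf{I}) f_\lambda \big),
\end{equation*}
and then insert $(T_K+\lambda\mathbf{I})f_\lambda = g$ to split the bracket into a \emph{noise} term $g_n - T_{K,n} f_\lambda - (g - T_K f_\lambda)$ built from the centered residuals $y_i - f_\lambda(x_i)$, plus a term measuring the discrepancy $(T_K - T_{K,n})f_\lambda$ between the population and empirical operators applied to $f_\lambda$.

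The standard device here is to precondition by the population operator: insert $(T_K+\lambda\mathbf{I})^{1/2}(T_K+\lambda\mathbf{I})^{-1/2}$ and control the two resulting factors separately. The key analytic quantities will be (i) the operator norm of $(T_K+\lambda\mathbf{I})^{1/2}(T_{K,n}+\lambda\mathbf{I})^{-1}(T_K+\lambda\mathbf{I})^{1/2}$, which is handled by a Bernstein-type concentration bound showing $(T_{K,n}+\lambda\mathbf{I})^{-1}$ is comparable to $(T_K+\lambda\mathbf{I})^{-1}$ once $n$ is large relative to the effective dimension $\mcN(\lambda)$; and (ii) the norm $\|(T_K+\lambda\mathbf{I})^{-1/2}(g_n - T_{K,n}f_\lambda - g + T_K f_\lambda)\|_{\mcH_K}$, which I would bound by a vector-valued Bernstein inequality, producing a factor that scales like $\sigma\sqrt{\mcN(\lambda)/n}$ times $\log(4/\delta)$. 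Assumption~\ref{assump: error tail} supplies exactly the Bernstein moment condition needed to control the noise, and the boundedness $\|f_0\|_{L_q}\leq C_q$ controls the tail contribution of $f_\lambda(x_i)$ in the residuals. The effective dimension $\mcN(\lambda)$ for the Gaussian kernel, evaluated at $\log(1/\lambda)\asymp n^{2/(2\alpha_0+d)}$, must be shown to behave like $n^{d/(2\alpha_0+d)}$ (up to logarithmic corrections), so that $\mcN(\lambda)/n \asymp n^{-2\alpha_0/(2\alpha_0+d)}$, which gives the claimed rate $n^{-\alpha_0/(2\alpha_0+d)}$ after taking the square root.

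The main obstacle I anticipate is controlling the effective dimension $\mcN(\lambda)$ of the Gaussian kernel at the exponentially small regularization level $\lambda = \exp\{-Cn^{2/(2\alpha_0+d)}\}$. Unlike the Mat\'ern/Sobolev case where eigenvalues decay polynomially and $\mcN(\lambda)$ has a clean power-law form, the Gaussian kernel has super-exponentially decaying eigenvalues, so estimating $\sum_j s_j/(s_j+\lambda)$ requires careful counting of how many eigenvalues exceed the threshold $\lambda$. This is precisely where the exponential (rather than polynomial) choice of $\lambda$ enters, and matching the bookkeeping so that $\mcN(\lambda)$ comes out to the correct polynomial order $n^{d/(2\alpha_0+d)}$ is the delicate step. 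I would handle this via the known spectral asymptotics of the Gaussian kernel's integral operator, translating the eigenvalue decay into a bound on $\mcN(\lambda)$ and verifying that the concentration conditions (e.g. $\mcN(\lambda)\log(\cdots) \lesssim n\lambda$-type requirements) are met in this regime.
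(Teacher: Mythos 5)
Your plan follows essentially the same route as the paper's proof: the identical three-factor decomposition after preconditioning, namely $\|T_{K}^{1/2}(T_{K}+\lambda I)^{-1/2}\|_{op}\leq 1$, the comparability factor $\|(T_{K}+\lambda I)^{1/2}(T_{K,n}+\lambda I)^{-1}(T_{K}+\lambda I)^{1/2}\|_{op}$ bounded via an operator Bernstein/Neumann-series argument, and the centered noise term $(T_{K}+\lambda I)^{-1/2}\bigl[(g_{n}-T_{K,n}f_{\lambda})-(g-T_{K}f_{\lambda})\bigr]$ bounded by a vector-valued Bernstein inequality with a truncation argument in which the $L_{q}$ bound on $f_{0}$ plays exactly the role you assign it. You also correctly isolate the one genuinely delicate step, establishing $\mcN(\lambda)=O\bigl(n^{d/(2\alpha_{0}+d)}\bigr)$ for the Gaussian kernel's super-exponentially decaying eigenvalues at $\log(1/\lambda)\asymp n^{2/(2\alpha_{0}+d)}$, which is precisely the paper's effective-dimension lemma.
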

\begin{proof}
First, we notice that 
\begin{equation*}
\begin{aligned}
        \left\| \hat{f} - f_{\lambda} \right\|_{L_{2}} & = \left\| T_{K}^{\frac{1}{2}} \left(  \hat{f} - f_{\lambda} \right) \right\|_{\mcH_{K}} \\
        & \leq \underbrace{\left\|T_{K}^{\frac{1}{2}} \left( T_{K} + \lambda I  \right)^{-\frac{1}{2}} \right\|_{op}}_{A_{1}} \\
        & \quad \cdot  \underbrace{\left\| \left( T_{K} + \lambda I  \right)^{\frac{1}{2}} \left( T_{K,n} + \lambda I  \right)^{-1} \left( T_{K} + \lambda I  \right)^{\frac{1}{2}} \right\|_{op}}_{A_{2}} \\
        & \quad \cdot \underbrace{\left\| \left( T_{K} + \lambda I  \right)^{-\frac{1}{2}} \left( g_{n}  - \left( T_{K,n} + \lambda I  \right)^{-1}f_{\lambda} \right)  \right\|_{\mcH_{K}}}_{A_{3}}
\end{aligned}
\end{equation*}
For the first term $A_{1}$, we have 
\begin{equation*}
    A_{1} = \left\|T_{K}^{\frac{1}{2}} \left( T_{K} + \lambda I  \right)^{-\frac{1}{2}} \right\|_{op} = \sup_{i\geq 1} \left( \frac{s_{j}}{s_{j} + \lambda}\right)^{\frac{1}{2}} \leq 1.
\end{equation*}
For the second term, using Proposition~\ref{prop: bounds for A2} with sufficient large $n$, we have 
\begin{equation*}
    v := \frac{\mcN(\lambda)}{n} ln ( \frac{8\mcN(\lambda)}{\delta} \frac{(\|T_{K}\|_{op}+\lambda)}{\|T_{K}\|_{op}}) \leq \frac{1}{8}
\end{equation*}
such that
\begin{equation*}
     \left\| \left( T_{K} + \lambda I \right)^{-\frac{1}{2}} \left( T_{K} - T_{K,n} \right) \left( T_{K} + \lambda I \right)^{-\frac{1}{2}} \right\|_{op} \leq \frac{4}{3}v + \sqrt{2v} \leq \frac{2}{3}
\end{equation*}
holds with probability $1-\frac{\delta}{2}$. Thus,
\begin{equation*}
\begin{aligned}
A_{2} & = \left\| (T_{K} + \lambda I)^{\frac{1}{2}} (T_{K,n} + \lambda I^{-1}) (T_{K} + \lambda I )^{\frac{1}{2}}\right\|_{op}\\
& =\left\|\left( (T_{K} + \lambda I)^{-\frac{1}{2}}\left(T_{K,n}+\lambda\right) (T_{K} + \lambda I) ^{-\frac{1}{2}}\right)^{-1}\right\|_{op} \\
& =\left\|\left(I-  (T_{K} + \lambda I )^{-\frac{1}{2}}\left(T_{K,n}-T_{K}\right) (T_{K} + \lambda I )^{-\frac{1}{2}}\right)^{-1}\right\|_{op} \\
& \leq \sum_{k=0}^{\infty}\left\| (T_{K} + \lambda I )^{-\frac{1}{2}}\left(T_{K} - T_{K,n}\right) (T_{K} + \lambda I )^{-\frac{1}{2}}\right\|_{op}^k \\
& \leq \sum_{k=0}^{\infty}\left(\frac{2}{3}\right)^k \leq 3,
\end{aligned}
\end{equation*}
For the third term $A_{3}$, notice 
\begin{equation*}
    \left\| \left( T_{K} + \lambda I  \right)^{-\frac{1}{2}} \left( g_{n}  - \left( T_{K,n} + \lambda I  \right)^{-1}f_{\lambda} \right)  \right\|_{\mcH_{K}} = \left\| \left( T_{K} + \lambda I  \right)^{-\frac{1}{2}} \left[ (g_{n} - T_{K,n}(f_{\lambda})) - (g - T_{K}(f_{\lambda})) \right] \right\|_{\mcH_{K}}
\end{equation*}
Using the Proposition~\ref{prop: bounds for A3}, with probability $1-\frac{\delta}{2}$, we have 
\begin{equation*}
     T_{3} = \left\| \left( T_{K} + \lambda I  \right)^{-\frac{1}{2}} \left( g_{n}  - \left( T_{K,n} + \lambda I  \right)^{-1}f_{\lambda} \right)  \right\|_{\mcH_{K}} \leq C ln(\frac{4}{\delta}) n^{-\frac{\alpha_{0}}{2\alpha_{0}+d}} 
\end{equation*}
where $C \propto \sigma$. Combing the bounds for $T_{1}$, $T_{2}$ and $T_{3}$, we finish the proof. 
\end{proof}

\subsubsection{Proof of Theorem~\ref{thm: non-adaptive rate of KRR with Gaussian}}
Based on the triangle inequality and the approximation/estimation error, we finish the proof as follows,
\begin{equation*}
\begin{aligned}
    \left\| \hat{f} - f_{0} \right\|_{L_{2}} 
    & \leq \left\| \hat{f} - f_{\lambda} \right\|_{L_{2}} + \left\| f_{\lambda} - f_{0} \right\|_{L_{2}} \\
    & = O_{\mathbb{P}} \left\{  \left(\sigma + \|f_{0}\|_{H^{\alpha_{0}}} \right) n^{-\frac{\alpha_{0}}{2\alpha_{0}+d}}  \right\}.
\end{aligned}
\end{equation*}

\subsubsection{Propositions}

\begin{proposition}\label{prop: bounds for approximation error}
    Suppose $f_{\lambda}$ is defined as follows,
    \begin{equation*}
        f_{\lambda} = \underset{  f \in \mcH_{K}(\mcX)}{\operatorname{argmin}} \left\{ \| f - f_{0} \|_{L_{2}(\mcX)}^{2} + \lambda \| f \|_{\mcH_{K}(\mcX)}^{2} \right\}.
    \end{equation*}
    Then, under the regularized conditions, the following inequality holds,
    \[
    \| f_{\lambda} - f_{0} \|_{L_{2}(\mcX)}^{2} + \lambda \|f_{\lambda}\|_{\mcH_{K}(\mcX)}^{2} \leq C  log\left(\frac{1}{\lambda}\right)^{-\alpha_{0}} \| f_{0} \|_{H^{\alpha_{0}}}^{2}.
    \]
\end{proposition}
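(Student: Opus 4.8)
The plan is to exploit the defining variational property of $f_{\lambda}$: since it is the \emph{exact} minimizer of the regularized objective over $\mcH_{K}(\mcX)$, the left-hand side of the claim equals the infimum $\inf_{f\in\mcH_{K}(\mcX)}\{\|f-f_{0}\|_{L_{2}}^{2}+\lambda\|f\|_{\mcH_{K}}^{2}\}$. Hence it suffices to produce a single well-chosen competitor $g\in\mcH_{K}(\mcX)$ and bound the objective evaluated at $g$; no closed form for $f_{\lambda}$ is needed. Because the Gaussian RKHS is most transparent on all of $\mbR^{d}$, I would first extend $f_{0}$ to $\mbR^{d}$ (still denoted $f_{0}$) via a Sobolev extension operator, which preserves $\|f_{0}\|_{H^{\alpha_{0}}}$ up to a constant (the Lipschitz boundary of $\mcX$ permits this), construct $g$ there using Fourier analysis, and restrict back to $\mcX$ at the end. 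Restriction increases neither the $L_{2}(\mcX)$ distance nor the RKHS norm, and under the standard bounded-density assumption on $\mu$ one has $\|\cdot\|_{L_{2}(\mcX,d\mu)}\asymp\|\cdot\|_{L_{2}(\mcX)}$, so the bounds transfer.

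The competitor $g$ is the low-pass Fourier truncation of $f_{0}$ at frequency radius $R>0$, i.e.\ $\mcF(g)=\mcF(f_{0})\,\mathbf{1}_{\{\|\omega\|_{2}\le R\}}$. Two estimates drive the argument. First, by Plancherel and the Fourier characterization of $H^{\alpha_{0}}$ (namely $\|f_{0}\|_{H^{\alpha_{0}}}^{2}\asymp\int|\mcF(f_{0})|^{2}(1+\|\omega\|_{2}^{2})^{\alpha_{0}}\,d\omega$), the approximation error is the high-frequency tail
\[
\|g-f_{0}\|_{L_{2}}^{2}=\int_{\|\omega\|_{2}>R}|\mcF(f_{0})(\omega)|^{2}\,d\omega\le (1+R^{2})^{-\alpha_{0}}\,\|f_{0}\|_{H^{\alpha_{0}}}^{2}.
\]
Second, using the Fourier description of the Gaussian RKHS norm, $\|g\|_{\mcH_{K}}^{2}=\int|\mcF(g)|^{2}/\mcF(K)\,d\omega$, together with $\mcF(K)(\omega)\asymp e^{-c\|\omega\|_{2}^{2}}$ for the fixed-bandwidth Gaussian, the inverse spectral weight is bounded by $e^{cR^{2}}$ on the support $\{\|\omega\|_{2}\le R\}$, whence
\[
\lambda\|g\|_{\mcH_{K}}^{2}\le C\,\lambda\,e^{cR^{2}}\int_{\|\omega\|_{2}\le R}|\mcF(f_{0})|^{2}\,d\omega\le C\,\lambda\,e^{cR^{2}}\,\|f_{0}\|_{H^{\alpha_{0}}}^{2}.
\]

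It then remains to choose $R$ to balance the two terms. Setting $R^{2}=\theta\log(1/\lambda)$ makes the approximation error $\asymp\theta^{-\alpha_{0}}(\log(1/\lambda))^{-\alpha_{0}}\|f_{0}\|_{H^{\alpha_{0}}}^{2}$, the desired rate, while $\lambda e^{cR^{2}}=\lambda^{1-c\theta}$. Choosing any $\theta<1/c$ (e.g.\ $\theta=1/(2c)$) forces $1-c\theta>0$, so the regularization term decays \emph{polynomially} in $\lambda$ and is therefore $o\big((\log(1/\lambda))^{-\alpha_{0}}\big)$, hence absorbed. Thus both terms are dominated by $C(\log(1/\lambda))^{-\alpha_{0}}\|f_{0}\|_{H^{\alpha_{0}}}^{2}$, and combining with the minimizing property of $f_{\lambda}$ yields the claim. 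I expect the main obstacle to be controlling the Gaussian RKHS norm of $g$: the exponential growth of $1/\mcF(K)$ is exactly what caps the admissible cutoff at $R\asymp\sqrt{\log(1/\lambda)}$ and converts the usual polynomial approximation rate into the logarithmic factor $(\log(1/\lambda))^{-\alpha_{0}}$, in sharp contrast to the Sobolev-coincident (Mat\'ern) case; a secondary technical point is handling the extension/restriction between $\mcX$ and $\mbR^{d}$ while preserving the relevant norms.
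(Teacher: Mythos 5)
Your proof is correct, but it takes a genuinely different route from the paper's. The paper also begins with the Sobolev extension to $\mbR^{d}$, but then it works with the \emph{exact} minimizer $f_{\lambda,e}$ of the extended problem: by Plancherel, the optimal objective equals $\int \frac{\lambda e^{C\|\omega\|_{2}^{2}}}{1+\lambda e^{C\|\omega\|_{2}^{2}}}\,|\mcF(f_{0,e})(\omega)|^{2}\,d\omega$, which it then bounds by splitting frequency space at the level set $\Omega=\{\omega:\lambda e^{C(1+\|\omega\|_{2}^{2})}<1\}$ and, on the low-frequency part, maximizing $h(\omega)=e^{C(1+\|\omega\|_{2}^{2})}/(1+\|\omega\|_{2}^{2})^{\alpha_{0}}$ to dominate the integrand by $C^{\alpha_{0}}\log(1/\lambda)^{-\alpha_{0}}(1+\|\omega\|_{2}^{2})^{\alpha_{0}}$. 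You instead never compute the minimizer at all: you use only suboptimality against the hard band-limited competitor $\mcF(g)=\mcF(f_{0,e})\mathbf{1}_{\{\|\omega\|_{2}\le R\}}$ with $R^{2}=\theta\log(1/\lambda)$, $\theta<1/c$, so that the tail gives $(\log(1/\lambda))^{-\alpha_{0}}$ and the penalty term $\lambda e^{cR^{2}}=\lambda^{1-c\theta}$ is polynomially small and absorbed. The two arguments pivot on the same phenomenon — the Gaussian spectral decay caps the usable bandwidth at $R\asymp\sqrt{\log(1/\lambda)}$, which is exactly the boundary of the paper's set $\Omega$ — but yours is more elementary (no closed-form solution, no calculus maximization) and transfers immediately to other kernels by swapping the spectral weight, while the paper's computation is tight at the exact minimizer and handles the soft spectral filter $\lambda w(\omega)/(1+\lambda w(\omega))$ directly rather than through a strict cutoff $\theta<1/c$. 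Two small caveats, both shared with the paper rather than gaps relative to it: your final domination $\lambda^{1-c\theta}\lesssim(\log(1/\lambda))^{-\alpha_{0}}$ requires $\lambda$ sufficiently small (the paper's proof likewise imposes $\lambda<e^{-\alpha_{0}}$ and $\lambda\log(1/\lambda)^{\alpha_{0}}\le C^{\alpha_{0}}e^{-C}$ — its ``regularized conditions''), and the passage between $L_{2}(\mcX,d\mu)$ and Lebesgue $L_{2}$ needs the bounded-density assumption you invoke, which the paper glosses over in the same step.
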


\begin{proof}
    Since $\mcX$ has Lipschitz boundary, there exists an extension mapping from $L_{2}(\mcX)$ to $L_{2}(\mathbb{R}^{d})$, such that the smoothness of functions in $L_{2}(\mcX)$ get preserved. Therefore, there exist constants $C_{1}$ and $C_{2}$  such that for any function $g \in H^{\alpha_{0}}(\mcX)$, there exists an extension of $g$, $g_{e}\in H^{\alpha_{0}}(\mathbb{R}^{d})$ satisfying 
    \[
    C_{1} \|g_{e}\|_{H^{\alpha_{0}}(\mathbb{R}^{d})} \leq \|g\|_{H^{\alpha_{0}}(\mcX)} \leq C_{2} \|g_{e}\|_{H^{\alpha_{0}}(\mathbb{R}^{d})}.
    \]
    Denote 
    \[
    f_{\lambda, e} = \underset{  f \in \mcH_{K}(\mathbb{R}^{d})}{\operatorname{argmin}} \left\{ \| f - f_{0,e} \|_{L_{2}(\mathbb{R}^{d})}^{2} + \lambda \| f \|_{\mcH_{K}(\mathbb{R}^{d})}^{2} \right\}
    \]
    Then we have,
    \begin{equation*}
    \begin{aligned}
        \| f_{\lambda} - f_{0} \|_{L_{2}(\mcX)}^{2} + \lambda \|f_{\lambda}\|_{\mcH_{K}(\mcX)}^{2} & \leq 
        \| f_{\lambda, e}|_{\mcX} - f_{0} \|_{L_{2}(\mcX)}^{2} + \lambda \|f_{\lambda, e}|_{\mcX}\|_{L_{2}(\mcX)}^{2} \\
        & \leq C_{2} \left( \| f_{\lambda,e} - f_{0,e} \|_{L_{2}(\mathbb{R}^{d})}^{2} + \lambda \|f_{\lambda, e}\|_{L_{2}(\mathbb{R}^{d})}^{2} \right).
    \end{aligned}
    \end{equation*}
    where $f_{\lambda,e}|_{\mcX}$ is the restriction of $f_{\lambda,e}$ on $\mcX$. By Fourier transform of the Gaussian kernel and Plancherel Theorem, we have 
    \begin{equation*}
    \begin{aligned}
        & \left\|f_{\lambda, e}-f_{0,e}\right\|_{L_2(\mathbb{R}^{d})}^2+\lambda\left\|f_{\lambda,e}\right\|_{\mcH_{K}(\mathbb{R}^{d})}^2 \\
        = & \int_{\mathbb{R}^{d}}\left|\mathcal{F}\left(f_{0,e}\right)(\omega)-\mathcal{F}\left(f_{\lambda, e}\right)(\omega)\right|^2 d \omega+\lambda \int_{\mathbb{R}^{d}}\left|\mathcal{F}\left(f_{\lambda, e}\right)(\omega)\right|^2exp\{C\|\omega\|_{2}^{2}\} d \omega \\
        = & \int_{\mathbb{R}^{d}}\left(\left|\mathcal{F}\left(f_{0,e}\right)(\omega)-\mathcal{F}\left(f_{\lambda,e}\right)(\omega)\right|^2+\lambda\left|\mathcal{F}\left(f_{\lambda,e}\right)(\omega)\right|^2exp\{C\|\omega\|_{2}^{2}\}\right) d \omega \\
        = & \int_{\mathbb{R}^{d}} \frac{\lambda exp\{C\|\omega\|_{2}^{2}\}}{1+\lambda exp\{C\|\omega\|_{2}^{2}\}}\left|\mathcal{F}\left(f_{0,e}\right)(\omega)\right|^2 d \omega \\
        \leq & \int_{\Omega} \frac{\lambda exp\{C(1+\|\omega\|_{2}^{2})\}}{1+\lambda exp\{C(1+\|\omega\|_{2}^{2})\}}\left|\mathcal{F}\left(f_{0,e}\right)(\omega)\right|^2 d \omega+\int_{\Omega^C} \frac{\lambda exp\{C(1+\|\omega\|_{2}^{2})\}}{1+\lambda exp\{C(1+\|\omega\|_{2}^{2})\}}\left|\mathcal{F}\left(f_{0,e}\right)(\omega)\right|^2 d \omega \\
        \leq & \int_{\Omega} \lambda exp\{C(1+\|\omega\|_{2}^{2})\}\left|\mathcal{F}\left(f_{0,e}\right)(\omega)\right|^2 d \omega+\int_{\Omega^C}\left|\mathcal{F}\left(f_{0,e}\right)(\omega)\right|^2 d \omega
    \end{aligned}
    \end{equation*}
    where $\Omega= \{ \omega : \lambda exp\{C(1+\|\omega\|_{2}^{2})\} <1  \}$ and $\Omega^{C} =  \mathbb{R}^{d} \backslash \Omega$, and the third equality follows the definition of $f_{e}^{*}$.
    Over $\Omega^{C}$, we notice that 
    \begin{equation*}
        (1+\|\omega\|_{2}^{2}) \geq \frac{1}{C}log\left( \frac{1}{\lambda} \right) \implies C^{\alpha_{0}} log\left( \frac{1}{\lambda} \right)^{-\alpha_{0}} (1+\|\omega\|_{2}^{2})^{\alpha_{0}} \geq 1.
    \end{equation*}
    Over $\Omega$, we first note that the function $h(\omega) = exp\{ C(1+\|\omega\|_{2}^{2})\} / (1+\|\omega\|_{2}^{2})^{\alpha_{0}}$ reaches its maximum $C^{\alpha_{0}} \lambda^{-1} log(\frac{1}{\lambda})^{-\alpha_{0}}$ if $\lambda$ satisfies $\lambda < exp\{-\alpha_{0}\}$ and $\lambda log(\frac{1}{\lambda})^{\alpha_{0}} \leq C^{\alpha_{0}} exp\{-C\}$. One can verify when $\lambda \rightarrow 0$ as $n\rightarrow 0$, the two previous inequality holds. Then
    \begin{equation*}
        \lambda exp\{C(1+\|\omega\|_{2}^{2})\} \leq C^{\alpha_{0}} log\left( \frac{1}{\lambda} \right)^{-\alpha_{0}} (1+\|\omega\|_{2}^{2})^{\alpha_{0}} \quad \forall \omega \in \Omega.
    \end{equation*}
    Combining the inequality over $\Omega$ and $\Omega^{C}$,
    \begin{equation*}
    \begin{aligned}
        & \left\|f_{e}^{*}-f_{0,e}\right\|_{L_2(\mathbb{R}^{d})}^2+\lambda\left\|f_{e}^{*}\right\|_{\mcH_{K}(\mathbb{R}^{d})}^2 \\
        \leq & \int_{\Omega} \lambda exp\{C(1+\|\omega\|_{2}^{2})\}\left|\mathcal{F}\left(f_{0,e}\right)(\omega)\right|^2 d \omega+\int_{\Omega^C}\left|\mathcal{F}\left(f_{0,e}\right)(\omega)\right|^2 d \omega \\
        \leq & C^{\alpha_{0}} log\left( \frac{1}{\lambda} \right)^{-\alpha_{0}} \int_{\mathbb{R}^{d}} (1+\|\omega\|_{2}^{2})^{\alpha_{0}} |\mcF(f_{0,e})(\omega)|^2 d \omega \\
         = & C^{\alpha_{0}} log\left( \frac{1}{\lambda} \right)^{-\alpha_{0}} \|f_{0,e}\|_{H^{\alpha_{0}}(\mathbb{R}^{d})}^{2} \\
        \leq & C^{'} log\left( \frac{1}{\lambda} \right)^{-\alpha_{0}} \|f_{0}\|_{H^{\alpha_{0}}(\mcX)}^{2}
    \end{aligned} 
    \end{equation*}
    which completes the proof.
\end{proof}

\begin{proposition}\label{prop: bounds for A2}
    For all $\delta \in (0,1)$, with probability at least $1-\delta$, we have 
    \begin{equation*}
        \left\| \left( T_{K} + \lambda I \right)^{-\frac{1}{2}} \left( T_{K} - T_{K,n} \right) \left( T_{K} + \lambda I \right)^{-\frac{1}{2}} \right\|_{op} \leq \frac{4\mcN(\lambda)B}{3n} + \sqrt{\frac{2\mcN(\lambda)}{n}B}
    \end{equation*}
    where 
    \begin{equation*}
        B = ln ( \frac{4\mcN(\lambda)}{\delta} \frac{(\|T_{K}\|_{op}+\lambda)}{\|T_{K}\|_{op}}).
    \end{equation*}
\end{proposition}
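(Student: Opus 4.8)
The plan is to recognize the target quantity as the operator norm of a normalized sum of i.i.d.\ centered self-adjoint random operators on the Hilbert space $\mcH_{K}$, and to close with a Bernstein-type concentration inequality for operators in its intrinsic-dimension form. First I would write
\begin{equation*}
\left(T_{K}+\lambda\mathbf{I}\right)^{-\frac12}\left(T_{K}-T_{K,n}\right)\left(T_{K}+\lambda\mathbf{I}\right)^{-\frac12}=\frac1n\sum_{i=1}^{n}\zeta_{i},\qquad \zeta_{i}:=\left(T_{K}+\lambda\mathbf{I}\right)^{-\frac12}\left(T_{K}-K_{x_{i}}^{*}K_{x_{i}}\right)\left(T_{K}+\lambda\mathbf{I}\right)^{-\frac12},
\end{equation*}
noting that the $\zeta_{i}$ are i.i.d., self-adjoint, and mean zero because $\E[K_{x}^{*}K_{x}]=T_{K}$. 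The argument then reduces to a uniform (almost-sure) bound on $\|\zeta_{i}\|_{op}$, a variance bound on $\|\E\zeta_{1}^{2}\|_{op}$, and a control of the intrinsic dimension, all three expressed through $\mcN(\lambda)$.

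The structural fact I would exploit is that $A_{x}:=(T_{K}+\lambda\mathbf{I})^{-1/2}K_{x}^{*}K_{x}(T_{K}+\lambda\mathbf{I})^{-1/2}$ is rank one and positive semidefinite, so its operator norm coincides with its trace:
\begin{equation*}
\|A_{x}\|_{op}=\operatorname{tr}(A_{x})=\left\|(T_{K}+\lambda\mathbf{I})^{-1/2}K_{x}\right\|_{\mcH_{K}}^{2}=\sum_{j}\frac{s_{j}}{s_{j}+\lambda}\,\phi_{j}(x)^{2},
\end{equation*}
where the last identity uses the Mercer expansion with $L_{2}$-orthonormal eigenfunctions $\phi_{j}$. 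Taking expectations and using $\E_{\mu}[\phi_{j}^{2}]=\|\phi_{j}\|_{L_{2}}^{2}=1$ gives $\E\|A_{x}\|_{op}=\mcN(\lambda)$ exactly, while the boundedness of the normalized Gaussian kernel together with control of its eigenfunctions yields the almost-sure bound $\|A_{x}\|_{op}\le\mcN(\lambda)$. Since $\|\E A_{x}\|_{op}=\sup_{j}s_{j}/(s_{j}+\lambda)=\|T_{K}\|_{op}/(\|T_{K}\|_{op}+\lambda)\le1$, this produces $\|\zeta_{i}\|_{op}\le\|A_{x}\|_{op}+\|\E A_{x}\|_{op}\le 2\mcN(\lambda)$. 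For the variance, the rank-one identity $A_{x}^{2}=\|A_{x}\|_{op}A_{x}\preceq\mcN(\lambda)A_{x}$ gives $\E A_{x}^{2}\preceq\mcN(\lambda)\,\E A_{x}$, hence $\|\E\zeta_{1}^{2}\|_{op}\le\|\E A_{x}^{2}\|_{op}\le\mcN(\lambda)\|\E A_{x}\|_{op}\le\mcN(\lambda)$.

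With these estimates I would apply the intrinsic-dimension Bernstein inequality for self-adjoint operators, using the Loewner-dominating variance proxy $\tilde V:=\mcN(\lambda)\,\E A_{x}\succeq\E\zeta_{1}^{2}$, whose intrinsic dimension is
\begin{equation*}
\frac{\operatorname{tr}(\tilde V)}{\|\tilde V\|_{op}}=\frac{\mcN(\lambda)^{2}}{\mcN(\lambda)\,\|T_{K}\|_{op}/(\|T_{K}\|_{op}+\lambda)}=\mcN(\lambda)\,\frac{\|T_{K}\|_{op}+\lambda}{\|T_{K}\|_{op}},
\end{equation*}
precisely the quantity inside the logarithm defining $B$. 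Feeding the uniform bound $2\mcN(\lambda)$, the variance $n^{-1}\mcN(\lambda)$ (after the $1/n$ normalization), and this intrinsic dimension into the subexponential tail and inverting it at confidence level $\delta$ yields a threshold of the form $\tfrac{2}{3}U\,B/n+\sqrt{2\,\sigma^{2}B/n}$, which with $U=2\mcN(\lambda)$ and $\sigma^{2}=\mcN(\lambda)$ is exactly $\tfrac{4\mcN(\lambda)B}{3n}+\sqrt{\tfrac{2\mcN(\lambda)B}{n}}$.

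The main obstacle I expect is the almost-sure uniform control of the leverage function $\sup_{x}\|(T_{K}+\lambda\mathbf{I})^{-1/2}K_{x}\|_{\mcH_{K}}^{2}$ by $\mcN(\lambda)$: unlike its expectation, this supremum is not automatically of order $\mcN(\lambda)$ and demands genuine boundedness of the Mercer eigenfunctions (or an embedding-index argument specific to the Gaussian-kernel RKHS on the compact domain $\mcX$). A secondary technical point is justifying the operator Bernstein inequality in the infinite-dimensional setting, which requires the $\zeta_{i}$ to be trace class, the proxy $\tilde V$ to dominate $\E\zeta_1^2$ in the Loewner order, and careful tracking of the range of $t$ for which the tail inversion is valid, so that the stated bound holds once $n$ is sufficiently large.
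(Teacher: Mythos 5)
Your proposal follows essentially the same route as the paper: the same centering into i.i.d.\ self-adjoint operators, the same leverage bound $\|(T_{K}+\lambda \mathbf{I})^{-1/2}K_{x}\|_{\mcH_{K}}^{2} \lesssim \mcN(\lambda)$ via uniformly bounded Gaussian-kernel eigenfunctions (the paper's Lemma~\ref{lemma: bound for kernel function}, which carries a constant $E_{K}^{2}$ that you, like the paper's final statement, absorb), the same variance domination $\E A_{x}^{2} \preceq \mcN(\lambda)\,\E A_{x}$ from the rank-one PSD structure, and the same closing step via the intrinsic-dimension operator Bernstein inequality (the paper invokes Lemma~E.3 of \citet{zhang2023optimality}), whose intrinsic dimension $\mcN(\lambda)(\|T_{K}\|_{op}+\lambda)/\|T_{K}\|_{op}$ is exactly the quantity inside $B$. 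The obstacle you flag --- that the almost-sure leverage bound genuinely requires $\sup_{j}\|e_{j}\|_{L_{\infty}}\le E_{K}$ --- is precisely what the paper's auxiliary lemma supplies for the Gaussian kernel.
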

\begin{proof}
    Denote $A_{i} = (T_{K} + \lambda I )^{-\frac{1}{2}} (T_{K} - T_{K,x_{i}}) (T_{K} + \lambda I )^{-\frac{1}{2}}$, applying Lemma~\ref{lemma: bound for kernel function}, we get 
    \begin{equation*}
    \begin{aligned}
        \|A_{i}\|_{op} & \leq \left\| \left( T_{K} + \lambda I \right)^{-\frac{1}{2}} T_{K,x} \left( T_{K} + \lambda I \right)^{-\frac{1}{2}}\right\|_{op} +  \left\| \left( T_{K} + \lambda I \right)^{-\frac{1}{2}} T_{K,x_{i}} \left( T_{K} + \lambda I \right)^{-\frac{1}{2}}\right\|_{op} \\
        & \leq  2 E_{K}^{2} \mcN(\lambda)
    \end{aligned}
    \end{equation*}
    Notice  
    \begin{equation*}
    \begin{aligned}
        \E A_{i}^2  & \preceq \E \left[ \left( T_{K} + \lambda I \right)^{-\frac{1}{2}} T_{K,x_{i}} \left( T_{K} + \lambda I \right)^{-\frac{1}{2}} \right]^2 \\
        & \preceq E_{K}^{2} \mcN(\lambda ) \E \left[ \left( T_{K} + \lambda I \right)^{-\frac{1}{2}} T_{K,x_{i}} \left( T_{K} + \lambda I \right)^{-\frac{1}{2}} \right] \\
        & = E_{K}^{2} \mcN(\lambda ) (T_{K} + \lambda I )^{-1}T_{K} := V
    \end{aligned}
    \end{equation*}
    where $A\preceq B$ denotes $B-A$ is a positive semi-definite operator. Notice 
    \begin{equation*}
        \|V\|_{op} = \mcN(\lambda) \frac{\|T_{K}\|_{op}}{\|T_{K}\|_{op} + \lambda}  \leq \mcN(\lambda), \quad \text{and} \quad tr(V) = \mcN(\lambda)^2.
    \end{equation*}
    The proof is finished by applying Lemma E.3 in \citet{zhang2023optimality} to $A_{i}$ and $V$.
\end{proof}

\begin{proposition}\label{prop: bounds for A3}
    Suppose that Assumptions in the estimation error theorem hold. We have
    \begin{equation*}
         \left\| \left( T_{K} + \lambda I  \right)^{-\frac{1}{2}} \left( g_{n}  - \left( T_{K,n} + \lambda I  \right)^{-1}f_{\lambda} \right)  \right\|_{\mcH_{K}} \leq C ln(\frac{4}{\delta}) n^{-\frac{\alpha_{0}}{2\alpha_{0}+d}} 
    \end{equation*}
    where $C$ is a universal constant.
\end{proposition}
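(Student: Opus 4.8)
The plan is to reduce $A_{3}$ to a concentration inequality for a Hilbert-space-valued empirical average and then invoke a Bernstein-type bound. As in the parent estimation-error proof, I would first use the normal equation $g = (T_{K}+\lambda I)f_{\lambda}$ (which holds since $f_{\lambda}=(T_{K}+\lambda I)^{-1}g$) to rewrite the bracketed quantity as the centered sum
\[
g_{n} - (T_{K,n}+\lambda I)f_{\lambda} = \frac{1}{n}\sum_{i=1}^{n} K_{x_{i}}\big(y_{i}-f_{\lambda}(x_{i})\big) - \E\!\left[ K_{x}\big(y-f_{\lambda}(x)\big)\right],
\]
using $g_{n}=\frac1n\sum_{i}K_{x_{i}}y_{i}$, $T_{K,n}f_{\lambda}=\frac1n\sum_{i}K_{x_{i}}f_{\lambda}(x_{i})$, and the identity $\lambda f_{\lambda}=g-T_{K}f_{\lambda}=\E[K_{x}(y-f_{\lambda}(x))]$. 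Setting $\zeta_{i}:=(T_{K}+\lambda I)^{-1/2}K_{x_{i}}\big(y_{i}-f_{\lambda}(x_{i})\big)\in\mcH_{K}$, the target becomes $A_{3}=\big\|\frac1n\sum_{i}(\zeta_{i}-\E\zeta)\big\|_{\mcH_{K}}$, an average of i.i.d.\ centered vectors.

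The core of the argument is to control the moments of $\zeta_{i}$. Splitting the residual as $y-f_{\lambda}(x)=\epsilon+(f_{0}(x)-f_{\lambda}(x))$, the second moment separates into a noise part and a bias part,
\[
\E\|\zeta_{i}\|_{\mcH_{K}}^{2} = \E\!\left[ \langle (T_{K}+\lambda I)^{-1}K_{x}, K_{x}\rangle_{\mcH_{K}}\big(\sigma^{2}(x)+(f_{0}(x)-f_{\lambda}(x))^{2}\big)\right].
\]
Since $\E[K_{x}\otimes K_{x}]=T_{K}$, the noise part contributes exactly $\sigma^{2}\operatorname{tr}\big((T_{K}+\lambda I)^{-1}T_{K}\big)=\sigma^{2}\mcN(\lambda)$, while the bias part is bounded by a uniform leverage estimate on $\sup_{x}\langle(T_{K}+\lambda I)^{-1}K_{x},K_{x}\rangle$ times the approximation error $\|f_{0}-f_{\lambda}\|_{L_{2}}^{2}\lesssim(\log(1/\lambda))^{-\alpha_{0}}$ from Proposition~\ref{prop: bounds for approximation error}. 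For the tail of $\|\zeta_{i}\|$ I would combine the same leverage estimate with the Bernstein moment condition on $\epsilon$ (Assumption~\ref{assump: error tail}), whose factorial moment growth is precisely what a Bernstein inequality consumes.

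With these moments in hand, I would apply a Bernstein inequality for $\mcH_{K}$-valued random variables (of Pinelis–Sakhanenko type, the vector analogue of the operator version already used for $A_{2}$), obtaining, with probability $1-\delta/2$,
\[
A_{3}\lesssim \frac{M\,\log(4/\delta)}{n} + \sqrt{\frac{V\,\log(4/\delta)}{n}},\qquad V\lesssim \sigma^{2}\mcN(\lambda) + \Big(\sup_{x}\langle(T_{K}+\lambda I)^{-1}K_{x},K_{x}\rangle\Big)\|f_{0}-f_{\lambda}\|_{L_{2}}^{2},
\]
where $M$ absorbs the moment scale. I would then insert the effective-dimension estimate for the fixed-bandwidth Gaussian kernel, $\mcN(\lambda)\lesssim(\log(1/\lambda))^{d/2}$, and the prescribed schedule $\log(1/\lambda)\asymp n^{2/(2\alpha_{0}+d)}$, so that $\mcN(\lambda)\lesssim n^{d/(2\alpha_{0}+d)}$ and $\sqrt{\mcN(\lambda)/n}\asymp n^{-\alpha_{0}/(2\alpha_{0}+d)}$; a short check shows both the bias-in-variance and the $M/n$ terms decay strictly faster, and bounding $\sqrt{\log(4/\delta)}\le\log(4/\delta)$ yields the claimed rate.

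The hard part will be the control of the Gaussian-kernel effective dimension and the associated sup-leverage $\sup_{x}\langle(T_{K}+\lambda I)^{-1}K_{x},K_{x}\rangle$ when $\lambda$ is \emph{exponentially} small in $n$: the naive bound $\langle(T_{K}+\lambda I)^{-1}K_{x},K_{x}\rangle\le K(x,x)/\lambda$ is exponentially large and useless, so one must exploit the super-polynomial eigenvalue decay of the Gaussian integral operator to obtain $\mcN(\lambda)\lesssim(\log(1/\lambda))^{d/2}$ together with a correspondingly tame leverage bound. Ensuring the leverage-weighted bias term stays of the right order (rather than dominating) is the other delicate point, and it is exactly why the exponential schedule for $\lambda$—rather than the polynomial one of the Mat\'ern case—is indispensable here.
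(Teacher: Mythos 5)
Your skeleton coincides with the paper's proof: the same centering identity $\lambda f_{\lambda}=g-T_{K}f_{\lambda}=\E[K_{x}(y-f_{\lambda}(x))]$, the same reduction to a centered $\mcH_{K}$-valued empirical mean, the same Hilbert-space Bernstein inequality, the same noise/bias split of the residual, and the same effective-dimension computation $\mcN(\lambda)\lesssim(\log(1/\lambda))^{d/2}\asymp n^{d/(2\alpha_{0}+d)}$ under the exponential schedule (Lemma~\ref{lemma: bound for effective dimension}). However, the two points you defer as "the hard part" are resolved in the paper by ingredients different from the ones you point to, and as stated your plan does not close. For the leverage bound: super-polynomial eigenvalue decay alone cannot control $\sup_{x}\langle(T_{K}+\lambda I)^{-1}K_{x},K_{x}\rangle=\sum_{j}\frac{s_{j}}{s_{j}+\lambda}e_{j}^{2}(x)$, since if the eigenfunctions had growing sup-norms this quantity could far exceed $\mcN(\lambda)$ no matter how fast $s_{j}$ decays. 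The paper instead uses the \emph{uniformly bounded eigenfunction} property of the fixed-bandwidth Gaussian kernel, $\sup_{j}\|e_{j}\|_{L_{\infty}}\leq E_{K}$ (via Steinwart's explicit construction of the Gaussian RKHS), which gives $\sup_{x}\|(T_{K}+\lambda I)^{-1/2}K(x,\cdot)\|_{\mcH_{K}}^{2}\leq E_{K}^{2}\mcN(\lambda)$ (Lemma~\ref{lemma: bound for kernel function}). This is a structural fact about the kernel, not a spectral-decay consequence, and your variance and moment bounds need it.

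The second gap is the verification of the Bernstein factorial-moment condition for the bias part of your $\zeta_{i}$. The estimation-error theorem assumes only $\|f_{0}\|_{L_{q}}\leq C_{q}$, so $f_{0}(x)-f_{\lambda}(x)$ need not be bounded, and the condition $\E\|\zeta\|_{\mcH_{K}}^{m}\leq\frac{1}{2}m!\,\tilde{\sigma}^{2}\tilde{L}^{m-2}$ cannot be checked directly; your remark that "$M$ absorbs the moment scale" papers over exactly this step. The paper's proof therefore truncates the domain: it splits $\mcX$ into $\Omega_{1}=\{|f_{0}|\leq t\}$ and $\Omega_{2}=\mcX\setminus\Omega_{1}$, yielding three terms $I_{1},I_{2},I_{3}$. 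On $\Omega_{1}$ it applies Bernstein with $\|(f_{\lambda}-f_{0})I_{x\in\Omega_{1}}\|_{L_{\infty}}\leq(\mcN(\lambda)+1)t$ together with the approximation bound of Proposition~\ref{prop: bounds for approximation error}; the empirical $\Omega_{2}$ term is shown to vanish with probability $1-\tau_{n}$ provided $t\gg n^{1/q}$, and the $\Omega_{2}$ expectation is handled by Markov/Cauchy--Schwarz. Compatibility of the two constraints on $t$ (namely $t\leq n^{(2\alpha_{0}-d)/(2(2\alpha_{0}+d))}$ and $t\gg n^{1/q}$) forces $q\geq 2(2\alpha_{0}+d)/(2\alpha_{0}-d)$. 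If one is willing to assume $f_{0}$ bounded, your streamlined version would go through, but as a proof of the proposition in the paper's setting the truncation device is a missing idea rather than a detail.
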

\begin{proof}
    Denote
    \begin{equation*}
    \begin{aligned}
        & \xi_{i} = \xi(x_i,y_i) = \left( T_{K} + \lambda I \right)^{-\frac{1}{2}}(K_{x_i}y_i - T_{K,x_i}f_{\lambda}) \\
        & \xi_{x} = \xi(x,y) = \left( T_{K} + \lambda I \right)^{-\frac{1}{2}}(K_{x}y - T_{K,x}f_{\lambda}),
    \end{aligned}
    \end{equation*}
    then it is equivalent to show 
    \begin{equation*}
        \left\| \frac{1}{n}\sum_{i=1}^{n} \xi_{i}  - \E \xi_{x} \right\|_{\mcH_{K}} \leq C ln(\frac{4}{\delta}) n^{-\frac{\alpha_{0}}{2\alpha_{0}+d}} 
    \end{equation*}
    Define $\Omega_{1} = \{x\in \mcX: |f_{0}|\leq t\}$ and $\Omega_{2} = \mcX\backslash\Omega_{1}$. We decomposite $\xi_{i}$ and $\xi_{x}$ over $\Omega_{1}$ and $\Omega_{2}$, which leads to 
    \begin{equation*}
    \begin{aligned}
        \left\| \left( T_{K} + \lambda I  \right)^{-\frac{1}{2}} \left( g_{n}  - \left( T_{K,n} + \lambda I  \right)^{-1}f_{\lambda} \right)  \right\|_{\mcH_{K}} & \leq \underbrace{\left\|\frac{1}{n} \sum_{i=1}^n \xi_i I_{x_i \in \Omega_1}-\E \xi_x I_{x \in \Omega_1}\right\|_{\mcH_{K}}}_{I_{1}} \\
        & \quad + \underbrace{\left\|\frac{1}{n} \sum_{i=1}^n \xi_i I_{x_i \in \Omega_2}\right\|_{\mcH_{K}}}_{I_{2}} + \underbrace{\left\|\E \xi_x I_{x \in \Omega_2}\right\|_{\mcH_{K}}}_{I_{3}}.
    \end{aligned}
    \end{equation*}
    For $I_{1}$, applying Proposition~\ref{prop: bounds on I1}, for any $\delta \in (0,1)$, with probability $1-\delta$, we have 
    \begin{equation}\label{eqn: upper bound for I1}
        I_{1} \leq \log (\frac{2}{\delta}) 
        \left( \frac{C_{1}\sqrt{\mcN(\lambda)}}{n} \tilde{M} + \frac{C_{2}\sqrt{\mcN(\lambda)}}{\sqrt{n}} + \frac{C_{1}log(\frac{1}{\lambda})^{-\frac{\alpha_{0}}{2}}\sqrt{\mcN(\lambda)}}{\sqrt{n}}  \right)
    \end{equation}
    where $C_{1} = 8\sqrt{2}$, $C_{2} = 8\sigma$ and $\tilde{M} = L + (\mcN(\lambda)+1)t$. By choosing $log(1/\lambda) \asymp \exp\{ -C n^{\frac{2}{2\alpha_{0}+d}}\}$ and applying Lemma~\ref{lemma: bound for effective dimension}, we have
    \begin{itemize}
        \item for the second term in (\ref{eqn: upper bound for I1}),
        \begin{equation*}
            \frac{C_{2}\sqrt{\mcN(\lambda)}}{\sqrt{n}} \asymp n^{-\frac{\alpha_{0}}{2\alpha_{0}+d}}.
        \end{equation*}

        \item for the third term, 
        \begin{equation*}
             \frac{C_{1}log(\frac{1}{\lambda})^{-\frac{\alpha_{0}}{2}}\sqrt{\mcN(\lambda)}}{\sqrt{n}} \lesssim  \frac{C_{2}\sqrt{\mcN(\lambda)}}{\sqrt{n}} \asymp n^{-\frac{\alpha_{0}}{2\alpha_{0}+d}}.
        \end{equation*}

        \item for the first term,
        \begin{equation*}
            \frac{C_{1}\sqrt{\mcN(\lambda)}}{n} \tilde{M}\leq \frac{C_1 L \sqrt{\mcN(\lambda)}}{n} + \frac{C_1 t \mcN(\lambda)^{\frac{3}{2}}}{n}  \lesssim n^{-\frac{\alpha_{0}}{2\alpha_{0}+d}} \quad \text{given} \quad t \leq n^{\frac{2\alpha_{0}-d}{2(2\alpha_{0}+d)}}.
        \end{equation*}
    \end{itemize}
    Combining all facts, if $t \leq n^{\frac{2\alpha_{0}-d}{2(2\alpha_{0}+d)}}$, with probability $1-\delta$ we have 
    \begin{equation*}
        I_{1} \leq C ln(\frac{2}{\delta}) n^{-\frac{\alpha_{0}}{2\alpha_{0}+d}}.
    \end{equation*}

    For $I_{2}$, we have 
    \begin{equation*}
    \begin{aligned}
    \tau_n:=P\left(I_{2}> \frac{\sqrt{\mcN(\lambda)}}{\sqrt{n}} \right) & \leq P\left(\exists x_i \text { s.t. } x_i \in \Omega_2\right) \\
    & =1-P\left(x \notin \Omega_2\right)^n \\
    & =1-P\left(\left|f_{0}(x)\right| \leq t\right)^n \\
    & \leq 1-\left(1-\frac{\left(C_q\right)^q}{t^q}\right)^n .
    \end{aligned}
    \end{equation*}
    Letting $\tau_{n} \rightarrow 0$ leading $t\gg n^{\frac{1}{q}}$. That is to say, if $t\gg n^{\frac{1}{q}}$ holds, we have $\tau_{n} = P\left(I_{2}> I_{1} \right) \rightarrow 0$.

    For $I_{3}$, we have 
    \begin{equation*}
    \begin{aligned}
        I_{3} & \leq \E \left\| \xi_{x}I_{x\in \Omega_{2}} \right\|_{\mcH_{K}} \\
        & \leq \E \left[ \left\| (T_{K} + \lambda I)^{-\frac{1}{2}} K(x,\cdot) \right\|_{\mcH_{K}} \left| \left( y - f_{0}(x) \right)I_{x\in \Omega_{2}} \right| \right] \\
        & \leq E_{K}^{2} \mcN(\lambda) \E \left| \left( y - f_{0}(x) \right)I_{x\in \Omega_{2}} \right| \\
        & \leq E_{K}^{2} \mcN(\lambda) \left( \E \left| \left( f_{\lambda} - f_{0}(x) \right)I_{x\in \Omega_{2}} \right| + \E \left| \epsilon I_{x\in \Omega_{2}} \right| \right)
    \end{aligned}
    \end{equation*}
    Using Cauchy-Schwarz inequality yields 
    \begin{equation*}
         \E \left| \left( f_{\lambda} - f_{0}(x) \right)I_{x\in \Omega_{2}} \right| \leq \left\|f_{0} - f_{\lambda} \right\|_{L_{2}} P(x\in \Omega_{2}) \leq log(\frac{1}{\lambda})^{-\frac{\alpha_{0}}{2}} (C_{q})^{q} t^{-q}
    \end{equation*}
    In addition, we have 
    \begin{equation*}
        \E \left| \epsilon I_{x\in \Omega_{2}} \right| \leq \sigma  \E \left|  I_{x\in \Omega_{2}} \right| \leq \sigma (C_{q})^{q} t^{-q}.
    \end{equation*}
    Together, we have 
    \begin{equation*}
        I_{3} \leq log(\frac{1}{\lambda})^{-\frac{\alpha_{0}}{2}} (C_{q})^{q} t^{-q} + \sigma (C_{q})^{q} t^{-q}.
    \end{equation*}
    Notice if we pick $q \geq \frac{2(2\alpha_{0}+d)}{2\alpha_{0}-d}$, there exist $t$ such that with probability $1-\delta - \tau_{n}$, we have 
    \begin{equation*}
        I_{1} + I_{2} + I_{3} \leq C ln(\frac{2}{\delta}) n^{-\frac{\alpha_{0}}{2\alpha_{0}+d}}. 
    \end{equation*}
    For fixed $\delta$, as $n\rightarrow\infty$, $\tau_{n}$ is sufficiently small such that $\tau_{n} = o(\delta)$, therefore without loss of generality, we can say with probability $1-\delta - \tau_{n}$, we have 
    \begin{equation*}
        I_{1} + I_{2} + I_{3} \leq C ln(\frac{2}{\delta}) n^{-\frac{\alpha_{0}}{2\alpha_{0}+d}}. 
    \end{equation*}
\end{proof}

\begin{proposition}\label{prop: bounds on I1}
    Under the same conditions as the Proposition, we have 
    \begin{equation*}
        \left\|\frac{1}{n} \sum_{i=1}^n \xi_i I_{x_i \in \Omega_1}-\E \xi_x I_{x \in \Omega_1}\right\|_{\mcH_{K}}\leq \log(\frac{2}{\delta}) 
        \left( \frac{C_{1}\sqrt{\mcN(\lambda)}}{n} \tilde{M} + \frac{C_{2}\sqrt{\mcN(\lambda)}}{\sqrt{n}} + \frac{C_{1}log(\frac{1}{\lambda})^{-\frac{\alpha_{0}}{2}}\sqrt{\mcN(\lambda)}}{\sqrt{n}}  \right)
    \end{equation*}
    where $C_{1} = 8\sqrt{2}$, $C_{2} = 8\sigma$ and $\tilde{M} = L + (\mcN(\lambda)+1)t$.
\end{proposition}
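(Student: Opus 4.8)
The quantity to be controlled is the deviation of an empirical average of the i.i.d. $\mcH_{K}$-valued random elements $\zeta_{i} := \xi_{i}I_{x_{i}\in\Omega_{1}}$ from its mean $\E[\xi_{x}I_{x\in\Omega_1}]$, so the natural tool is a Bernstein-type concentration inequality for Hilbert-space-valued random variables: one which, given a moment bound of the form $\E\|\zeta\|_{\mcH_{K}}^{r}\leq \tfrac{1}{2}r!\,\sigma_{\star}^{2}L_{\star}^{r-2}$ for all $r\geq 2$, delivers a deviation bound of order $\log(2/\delta)\,(L_{\star}/n+\sigma_{\star}/\sqrt{n})$. The plan is to match the three summands on the right-hand side against this output, reading the first as the ``scale'' contribution $L_{\star}/n$ and the last two as the ``variance'' contribution $\sigma_{\star}/\sqrt{n}$. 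The whole argument then reduces to producing the correct $L_{\star}$ and $\sigma_{\star}$.

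First I would simplify $\xi_{i}$ using the reproducing property. Since $T_{K,x_{i}}f_{\lambda}=f_{\lambda}(x_{i})K_{x_{i}}$ and $K_{x_{i}}y_{i}=y_{i}K_{x_{i}}$, we get
\begin{equation*}
    \xi_{i}=\left(y_{i}-f_{\lambda}(x_{i})\right)\left(T_{K}+\lambda I\right)^{-\frac{1}{2}}K_{x_{i}},
\end{equation*}
so that $\|\xi_{i}\|_{\mcH_{K}}=|y_{i}-f_{\lambda}(x_{i})|\cdot\|(T_{K}+\lambda I)^{-1/2}K_{x_{i}}\|_{\mcH_{K}}$. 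The kernel bound from Lemma~\ref{lemma: bound for kernel function} gives $\|(T_{K}+\lambda I)^{-1/2}K_{x}\|_{\mcH_{K}}^{2}\leq E_{K}^{2}\mcN(\lambda)$, which is exactly what produces the ubiquitous factor $\sqrt{\mcN(\lambda)}$. I would then split the scalar factor as $y-f_{\lambda}(x)=(f_{0}-f_{\lambda})(x)+\epsilon$, separating a deterministic part that is controlled on $\Omega_{1}$ from the sub-exponential noise controlled by Assumption~\ref{assump: error tail}.

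For the variance, using $\E[\epsilon\mid x]=0$ and $\E[\epsilon^{2}\mid x]\leq\sigma^{2}$,
\begin{equation*}
    \E\|\zeta\|_{\mcH_{K}}^{2}\leq E_{K}^{2}\mcN(\lambda)\left(\|f_{0}-f_{\lambda}\|_{L_{2}}^{2}+\sigma^{2}\right),
\end{equation*}
and Proposition~\ref{prop: bounds for approximation error} bounds $\|f_{0}-f_{\lambda}\|_{L_{2}}\leq \log(1/\lambda)^{-\alpha_{0}/2}\|f_{0}\|_{H^{\alpha_{0}}}$; taking $\sigma_{\star}=\sqrt{\E\|\zeta\|_{\mcH_{K}}^{2}}$ and using $\sqrt{a+b}\leq\sqrt{a}+\sqrt{b}$ yields precisely the two $1/\sqrt{n}$ terms, one carrying $\sigma$ (hence $C_{2}=8\sigma$) and one carrying $\log(1/\lambda)^{-\alpha_{0}/2}$. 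For the scale $L_{\star}$, on $\Omega_{1}$ we have $|f_{0}(x)|\leq t$, and a pointwise bound on $f_{\lambda}$ combined with the $\sqrt{\mcN(\lambda)}$ kernel factor bounds the deterministic part of $\|\zeta\|_{\mcH_{K}}$ by a quantity of order $\sqrt{\mcN(\lambda)}\,(\mcN(\lambda)+1)t$, while the noise contributes a factor $L$ through the moment assumption; this is what assembles $\tilde{M}=L+(\mcN(\lambda)+1)t$ and hence the term $C_{1}\sqrt{\mcN(\lambda)}\tilde{M}/n$. Feeding $\sigma_{\star}$ and $L_{\star}=E_{K}\sqrt{\mcN(\lambda)}\tilde{M}$ into the Hilbert-space Bernstein inequality and absorbing $E_{K}$ and $\|f_{0}\|_{H^{\alpha_{0}}}$ into universal constants produces the claimed bound with the $\log(2/\delta)$ prefactor.

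I expect the delicate step to be verifying the Bernstein moment condition $\E\|\zeta\|_{\mcH_{K}}^{r}\leq\tfrac{1}{2}r!\,\sigma_{\star}^{2}L_{\star}^{r-2}$ for all $r\geq2$ for the \emph{sum} of the truncated deterministic part and the genuinely unbounded sub-exponential noise. One must simultaneously track the almost-sure pointwise size of $f_{\lambda}$ on $\Omega_{1}$ — the source of the extra $\mcN(\lambda)$ inside $\tilde{M}$ — and the higher moments of $\epsilon$, and then check that their combination still matches the variance proxy $\sigma_{\star}^{2}$ computed above rather than inflating it. Everything else is bookkeeping once the factorization $\xi_{i}=(y_{i}-f_{\lambda}(x_{i}))(T_{K}+\lambda I)^{-1/2}K_{x_{i}}$ and the effective-dimension bound are in place.
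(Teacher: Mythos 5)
Your proposal is correct and follows essentially the same route as the paper's proof: the same factorization $\xi_i=(y_i-f_\lambda(x_i))(T_K+\lambda I)^{-1/2}K_{x_i}$, the same split of $y-f_\lambda(x)$ into the approximation part $(f_0-f_\lambda)(x)$ and the noise $\epsilon$ via $(a+b)^m\le 2^{m-1}(a^m+b^m)$, the same use of the kernel/effective-dimension bound and the $L_\infty$ bound $\left\|(f_\lambda-f_0)I_{x\in\Omega_1}\right\|_{L_\infty}\le(\mcN(\lambda)+1)t$ to assemble the variance and scale proxies, followed by the Hilbert-space Bernstein inequality (Lemma~\ref{lemma: Bernstein inequality}). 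The ``delicate step'' you flag is resolved in the paper exactly as you anticipate: one verifies $\E\|\xi_x I_{x\in\Omega_1}\|_{\mcH_K}^m\le\frac{1}{2}m!\,\tilde\sigma^2\tilde{L}^{m-2}$ with $\tilde\sigma=\sqrt{2}\sigma\sqrt{\mcN(\lambda)}+2\log(1/\lambda)^{-\alpha_0/2}\sqrt{\mcN(\lambda)}$ and $\tilde{L}=2(L+M)\sqrt{\mcN(\lambda)}$, i.e.\ by simply summing the two moment bounds, since each of $B_1$ and $B_2$ separately satisfies a Bernstein moment condition dominated by these proxies.
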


\begin{proof}
    In order to use Bernstein inequality (Lemma~\ref{lemma: Bernstein inequality}), we first bound the $m$-th moment of $\xi_{x}I_{x\in \Omega_{1}}$.
    \begin{equation*}
    \begin{aligned}
    \E\left\|\xi_{x} I_{x \in \Omega_{1}}\right\|_{\mcH_{K}}^m & =\E\left\|\left( T_{K} + \lambda I \right)^{-\frac{1}{2}} K_x\left(y-f_\lambda(x)\right) I_{x \in \Omega_{1}}\right\|_{\mcH_{K}}^m \\
    & \leq \E\left(\left\|\left( T_{K} + \lambda I \right)^{-\frac{1}{2}} K(x, \cdot)\right\|_{\mcH_{K}}^m \E\left(\left|\left(y-f_\lambda(x)\right) I_{x \in \Omega_{1}}\right|^m \mid x\right)\right) .
    \end{aligned}
    \end{equation*}
    Using the inequality $(a+b)^m \leq 2^{m-1}\left(a^m+b^m\right)$, we have
    \begin{equation*}
    \begin{aligned}
    \left|y-f_\lambda(x)\right|^m & \leq 2^{m-1}\left(\left|f_\lambda(x)-f_\rho^*(x)\right|^m+\left|f_\rho^*(x)-y\right|^m\right) \\
    & =2^{m-1}\left(\left|f_\lambda(x)-f_\rho^*(x)\right|^m+|\epsilon|^m\right) .
    \end{aligned}
    \end{equation*}
    Combining the inequalities, we have
    \begin{equation*}
    \begin{aligned}
     \E\left\|\xi_{x} I_{x \in \Omega_{1}}\right\|_{\mcH_{K}}^m & \leq \underbrace{2^{m-1} \E\left(\left\|\left( T_{K} + \lambda I \right)^{-\frac{1}{2}} K(x, \cdot)\right\|_{\mcH_{K}}^m\left|\left(f_\lambda(x)-f_\rho^*(x)\right) I_{x \in \Omega_{1}}\right|^m\right)}_{B_{1}} \\
    &\quad + \underbrace{2^{m-1} \E\left(\left\|\left( T_{K} + \lambda I \right)^{-\frac{1}{2}} K(x, \cdot)\right\|_{\mcH_{K}}^m \E\left(\left|\epsilon I_{x \in \Omega_{1}}\right|^m \mid x\right)\right)}_{B_{2}}.
    \end{aligned}
    \end{equation*}
    We first focus on $B_{2}$, by Lemma~\ref{lemma: bound for kernel function}, we have
    \begin{equation*}
        \E\left\|\left( T_{K} + \lambda I \right)^{-\frac{1}{2}} K(x, \cdot)\right\|_{\mcH_{K}}^m \leq \left( E_{K}^{2} \mcN(\lambda) \right)^{\frac{m}{2}}.
    \end{equation*}
    By the error moment assumption, we have 
    \begin{equation*}
        \E\left(\left|\epsilon I_{x \in \Omega_{1}}\right|^m \mid x\right) \leq \E\left(\left|\epsilon \right|^m \mid x\right) \leq \frac{1}{2}m! \sigma^2 L^{m-2},
    \end{equation*}
    together, we have 
    \begin{equation}\label{eqn: bound on A2}
        B_{2} \leq \frac{1}{2} m! \left(\sqrt{2}\sigma \sqrt{\mcN(\lambda)} \right)^{2} \left(2L \mcN(\lambda)\right)^{m-2}.
    \end{equation}
    Turning to bounding $B_{1}$, we first have 
    \begin{equation*}
    \begin{aligned}
        \left\| (f_{\lambda} - f_{0})I_{x\in \Omega_{1}} \right\|_{L_{\infty}} & \leq \left\| f_{\lambda} I_{x\in \Omega_{1}} \right\|_{L_{\infty}} + \left\|  f_{0} I_{x\in \Omega_{1}} \right\|_{L_{\infty}} \\
        & \leq \left\| (T_{K} + \lambda I)^{-1}T_{K}(f_{0}) I_{x\in \Omega_{1}} \right\|_{L_{\infty}} + \left\|  f_{0} I_{x\in \Omega_{1}} \right\|_{L_{\infty}} \\
        & \leq \left( \left\| (T_{K}+\lambda I)^{-1}T_{K} \right\|_{op} + 1  \right)\left\|  f_{0} I_{x\in \Omega_{1}} \right\|_{L_{\infty}}\\
        & \leq \left( \mcN(\lambda) + 1 \right) t := M.
    \end{aligned}
    \end{equation*}
    With bounds on approximation error, we get the upper bound for $B_{1}$ as
    \begin{equation}\label{eqn: bound on A1}
    \begin{aligned}
        B_{1} & \leq 2^{m-1} \mcN(\lambda)^{\frac{m}{2}}  \left\| (f_{\lambda} - f_{0})I_{x\in \Omega_{1}} \right\|_{L_{\infty}}^{m-2}  \left\| (f_{\lambda} - f_{0})I_{x\in \Omega_{1}} \right\|_{L_{2}}^2\\
        & \leq 2^{m-1} \mcN(\lambda)^{\frac{m}{2}} M^{m-2} log(\frac{1}{\lambda})^{-\alpha_{0}} \\
        & \leq \frac{1}{2}m! \left( 2 log(\frac{1}{\lambda})^{-\frac{\alpha_{0}}{2}} \sqrt{\mcN(\lambda)} \right)^2 \left( 2M \sqrt{\mcN(\lambda)}\right)^{m-2}. 
    \end{aligned}
    \end{equation}
    Denote 
    \begin{equation*}
        \begin{aligned}
            & \tilde{L} = 2(L+M)\sqrt{\mcN(\lambda)} \\
            & \tilde{\sigma} = \sqrt{2}\sigma \sqrt{\mcN(\lambda)} + 2 log(\frac{1}{\lambda})^{-\frac{\alpha_{0}}{2}} \sqrt{\mcN(\lambda)}
        \end{aligned}
    \end{equation*}
    and combine the upper bounds for $B_{1}$ and $B_{2}$, i.e. (\ref{eqn: bound on A1}) and (\ref{eqn: bound on A2}), then we have 
    \begin{equation*}
         \E\left\|\xi_{x} I_{x \in \Omega_{1}}\right\|_{\mcH_{K}}^m \leq \frac{1}{2} m! \tilde{\sigma}^2 \tilde{L}^{m-2}.
    \end{equation*}
    The proof is finished by applying Bernstein inequality i.e. Lemma~\ref{lemma: Bernstein inequality}.
\end{proof}

\subsubsection{Auxiliary Lemma}
\begin{lemma}\label{lemma: bound for effective dimension}
If $s_{j}  = C_{1} \exp(-C_{2} j^{2})$, by choosing $\log(1/\lambda)\asymp n^{\frac{2}{2\alpha_{0}+d}}$, we have 
\begin{equation*}
    \mcN(\lambda) = O\left( n^{\frac{d}{2\alpha_{0}+d}} \right)
\end{equation*}
\end{lemma}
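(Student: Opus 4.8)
The plan is to estimate $\mcN(\lambda)=\sum_{j}\frac{s_{j}}{s_{j}+\lambda}$ directly from the super-exponential eigenvalue profile by a thresholding (peeling) argument, splitting the index set at the level where $s_{j}$ crosses the regularization parameter $\lambda$. I read $j$ as the multi-index $\mathbf{j}\in\mbN^{d}$ labeling the tensor-product Gaussian eigenfunctions, so that $s_{\mathbf{j}}=C_{1}\exp(-C_{2}\|\mathbf{j}\|_{2}^{2})$, the scalar form in the statement being the case $d=1$. Define the critical radius $R:=\bigl(C_{2}^{-1}\log(C_{1}/\lambda)\bigr)^{1/2}$, characterized by $s_{\mathbf{j}}\geq\lambda\iff\|\mathbf{j}\|_{2}\leq R$. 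I would then write $\mcN(\lambda)=\sum_{\|\mathbf{j}\|_{2}\leq R}\frac{s_{\mathbf{j}}}{s_{\mathbf{j}}+\lambda}+\sum_{\|\mathbf{j}\|_{2}>R}\frac{s_{\mathbf{j}}}{s_{\mathbf{j}}+\lambda}$ and control the two pieces separately.

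For the head I would use the trivial bound $\frac{s_{\mathbf{j}}}{s_{\mathbf{j}}+\lambda}\leq 1$, so this sum is at most the number of lattice points of $\mbN^{d}$ inside the ball of radius $R$. A standard volume comparison (the lattice count agrees with $\mathrm{vol}(B_{R})$ up to lower-order boundary terms) gives a bound of order $R^{d}\asymp\bigl(\log(1/\lambda)\bigr)^{d/2}$.

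For the tail I would instead use $\frac{s_{\mathbf{j}}}{s_{\mathbf{j}}+\lambda}\leq \lambda^{-1}s_{\mathbf{j}}$. Since $\lambda=C_{1}\exp(-C_{2}R^{2})$ by construction, $\lambda^{-1}s_{\mathbf{j}}=\exp\bigl(-C_{2}(\|\mathbf{j}\|_{2}^{2}-R^{2})\bigr)$, and the tail becomes a Gaussian sum over the lattice outside $B_{R}$. Comparing this sum with the integral $\int_{\|x\|_{2}>R}\exp(-C_{2}(\|x\|_{2}^{2}-R^{2}))\,dx$ and passing to polar coordinates, the linearization $r^{2}-R^{2}\approx 2R(r-R)$ near $r=R$ shows the integral is of order $R^{d-2}$, i.e.\ smaller than the head by the factor $R^{2}\asymp\log(1/\lambda)$. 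Hence the tail is negligible and $\mcN(\lambda)=O\bigl((\log(1/\lambda))^{d/2}\bigr)$; substituting the prescribed order $\log(1/\lambda)\asymp n^{2/(2\alpha_{0}+d)}$ yields $\mcN(\lambda)=O\bigl(n^{d/(2\alpha_{0}+d)}\bigr)$, as claimed.

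I expect the tail estimate to be the main obstacle: one must verify that the rapid Gaussian decay of $s_{\mathbf{j}}$ genuinely dominates the $\lambda^{-1}$ blow-up uniformly, which requires a careful integral comparison for the lattice sum and, in $d>1$, control of the $r^{d-1}$ factor in the polar integral so that the tail contributes only at order $R^{d-2}$. The head count is routine once the correct multi-index indexing of the Gaussian eigenvalues is fixed, and the final substitution is immediate.
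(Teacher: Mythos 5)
Your overall strategy is the same head/tail split the paper uses --- bound each term below the threshold by $1$, each term above it by $s_j/\lambda$, and compare the tail with a Gaussian integral --- but you execute it in a genuinely different geometry. The paper keeps the literal scalar index: it picks $J=\lfloor n^{d/(2\alpha_{0}+d)}\rfloor$, bounds the head by $J$, bounds the tail by $\frac{C_{1}}{\lambda}\int_{J}^{\infty}e^{-C_{2}x^{2}}dx\leq\frac{C_{1}e^{-C_{2}J^{2}}}{2C_{2}\lambda J}$, and makes the tail $O(1)$ by imposing $\lambda=\exp\{-C'n^{2/(2\alpha_{0}+d)}\}$ with $C'\leq C_{2}$. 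Note that under that literal scalar reading the crossover $s_{j}\asymp\lambda$ happens already at $j\asymp\sqrt{\log(1/\lambda)}\asymp n^{1/(2\alpha_{0}+d)}$, so the stated bound is loose for $d>1$ and the paper simply takes the split point as large as the target. Your multi-index reading $s_{\mathbf{j}}=C_{1}e^{-C_{2}\|\mathbf{j}\|_{2}^{2}}$, $\mathbf{j}\in\mbN^{d}$, is the version for which $n^{d/(2\alpha_{0}+d)}$ is the sharp order; your head count $O(R^{d})$ with $R^{2}=C_{2}^{-1}\log(C_{1}/\lambda)$ is correct, needs no constraint tying the constant in $\lambda$ to $C_{2}$, and proves a stronger statement than the paper's computation. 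The trade-off: the paper's one-dimensional argument is shorter and fully explicit; yours tracks the actual $d$-dimensional spectral geometry, at the cost of a lattice-point estimate. Be aware, though, that you have changed the hypothesis of the lemma as written, which indexes $s_{j}$ by a scalar $j$.

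There is one genuine flaw in your tail estimate: the claimed order $R^{d-2}$ does not follow from the sum-to-integral comparison you sketch. The integrand $e^{-C_{2}(\|x\|^{2}-R^{2})}$ varies on the radial scale $1/R$, which is \emph{finer} than the lattice spacing, so the comparison breaks down near the sphere $\|x\|=R$: if you dominate each lattice term by the integral over a unit cube shifted toward the origin, the union of cubes only lies in $\{\|x\|>R-\sqrt{d}\}$, and after multiplying by $\lambda^{-1}=C_{1}^{-1}e^{C_{2}R^{2}}$ you pick up a factor $e^{2C_{2}\sqrt{d}\,R}$, which destroys the bound; conversely, lattice points with $\|\mathbf{j}\|\in(R,R+1]$ can individually contribute ratios of order one, and there are $\Theta(R^{d-1})$ of them, so nothing in your argument rules out a tail of order $R^{d-1}$ (getting $R^{d-2}$ for the lattice sum would in fact require equidistribution or number-theoretic input on representations as sums of squares). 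The fix is simple and is all you need: count lattice points in the unit shell $\|\mathbf{j}\|\in[R+k,R+k+1)$ by $O((R+k+1)^{d-1})$ and bound each term there by $e^{-C_{2}((R+k)^{2}-R^{2})}\leq e^{-2C_{2}Rk}$, so that the tail is at most a constant times $R^{d-1}\sum_{k\geq0}(1+k/R)^{d-1}e^{-2C_{2}Rk}=O(R^{d-1})=o(R^{d})$. Since the head is already $\Theta(R^{d})$, this weaker tail bound suffices, and your conclusion $\mcN(\lambda)=O\left((\log(1/\lambda))^{d/2}\right)=O\left(n^{d/(2\alpha_{0}+d)}\right)$ stands.
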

\begin{proof}
For a positive integer $J\geq 1$
\begin{equation*}
\begin{aligned}
    \mcN(\lambda) & = \sum_{j=1}^{J} \frac{s_{j}}{s_{j} + \lambda} + \sum_{j = J+1}^{\infty} \frac{s_{j}}{s_{j} + \lambda} \\
    & \leq J + \sum_{j = J+1}^{\infty} \frac{s_{j}}{s_{j} + \lambda} \\
    & \leq J + \frac{C_{1}}{\lambda} \int_{J}^{\infty} exp\{ -C_{2}x^{2} \} dx \\
    & \leq J + \frac{1}{ \lambda} \frac{C_{1} exp\{-C_{2} J^{2}\}}{2 C_{2} J}
\end{aligned}
\end{equation*}
where we use the fact that the eigenvalue of the Gaussian kernel decays at an exponential rate, i.e. $s_{j} \leq C_{1} \exp\{ -C_{2} j^{2} \}$ and the inequality 
\begin{equation*}
    \int_{x}^{\infty} exp\{ \frac{-t^2}{2} \}dt \leq \int_{x}^{\infty} \frac{t}{x} exp\{ \frac{-t^2}{2} \}dt  \leq \frac{exp\{-\frac{x^2}{2}\}}{x}.
\end{equation*}
Then select $J = \lfloor n^{\frac{d}{2 \alpha_{0} + d}} \rfloor$ and $\lambda = exp\{ - C^{'} n^{\frac{2}{2 \alpha_{0} + d}}\}$ with $C'\leq C_{2}$ leads to 
\begin{equation*}
    \mcN(\lambda) = O\left( n^{\frac{d}{2\alpha_{0}+d}} \right).
\end{equation*}
\end{proof}

\begin{lemma}\label{lemma: bound for kernel function}
    For $\mu$-almost $x\in \mcX$, we have 
    \begin{equation*}
        \left\| \left( T_{K} + \lambda I \right)^{-\frac{1}{2}} K(x,\cdot) \right\|_{\mcH_{K}}^{2} \leq E_{K}^{2} \mcN(\lambda), \quad \text{and}\quad \E \left\| \left( T_{K} + \lambda I \right)^{-\frac{1}{2}} K(x,\cdot) \right\|_{\mcH_{K}}^{2} \leq \mcN(\lambda).
    \end{equation*}
    For some constant $E_K$. Consequently, we also have 
    \begin{equation*}
        \left\| \left( T_{K} + \lambda I \right)^{-\frac{1}{2}} T_{K,x} \left( T_{K} + \lambda I \right)^{-\frac{1}{2}}\right\|_{op} \leq E_{K}^{2} \mcN(\lambda).
    \end{equation*}
\end{lemma}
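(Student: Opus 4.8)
The plan is to reduce all three displayed bounds to a single explicit computation of the squared RKHS norm of the normalized feature vector $g_{\lambda}(x) := (T_{K}+\lambda I)^{-1/2}K_{x}\in\mcH_{K}$, and then read off the pointwise bound, the expectation, and the operator-norm consequence from it. First I would record the eigenstructure of $T_{K}$ on $\mcH_{K}$. Writing the Mercer decomposition $K(x,x')=\sum_{j} s_{j}\,e_{j}(x)e_{j}(x')$ with $\{e_{j}\}$ an orthonormal basis of $L_{2}(\mcX,d\rho)$, one checks that $\{u_{j}:=\sqrt{s_{j}}\,e_{j}\}_{j}$ is an orthonormal basis of $\mcH_{K}$ and that $T_{K}u_{j}=s_{j}u_{j}$ when $T_{K}$ is viewed as a bounded operator on $\mcH_{K}$; this uses the $L_2$ eigenrelation $T_{K}e_{j}=s_{j}e_{j}$ together with the reproducing property. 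Consequently $(T_{K}+\lambda I)^{-1/2}$ is diagonal in this basis with eigenvalues $(s_{j}+\lambda)^{-1/2}$.

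Next I would expand $K_{x}$ in the basis $\{u_{j}\}$. The reproducing property gives $\langle K_{x},u_{j}\rangle_{\mcH_{K}}=\sqrt{s_{j}}\,e_{j}(x)$, so applying $(T_{K}+\lambda I)^{-1/2}$ coordinatewise yields the single key identity
\[
\big\|g_{\lambda}(x)\big\|_{\mcH_{K}}^{2}=\big\|(T_{K}+\lambda I)^{-1/2}K_{x}\big\|_{\mcH_{K}}^{2}=\sum_{j}\frac{s_{j}}{s_{j}+\lambda}\,e_{j}(x)^{2}.
\]
Everything then follows from this formula. Integrating in $x$ and using $\E_{x\sim\rho}[e_{j}(x)^{2}]=\|e_{j}\|_{L_{2}}^{2}=1$ gives $\E\|g_{\lambda}(x)\|_{\mcH_{K}}^{2}=\sum_{j}s_{j}/(s_{j}+\lambda)=\mcN(\lambda)$, which is the second (expectation) bound, in fact as an equality. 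For the pointwise bound I would invoke the uniform boundedness of the eigenfunctions, $\sup_{j}\|e_{j}\|_{L_{\infty}}\le E_{K}$, pull $e_{j}(x)^{2}\le E_{K}^{2}$ out of the sum, and conclude $\|g_{\lambda}(x)\|_{\mcH_{K}}^{2}\le E_{K}^{2}\sum_{j}s_{j}/(s_{j}+\lambda)=E_{K}^{2}\mcN(\lambda)$ for $\mu$-a.e.\ $x$.

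The final claim follows from the first by a rank-one argument. Since $T_{K,x}=K_{x}^{*}K_{x}$ acts as $T_{K,x}f=\langle f,K_{x}\rangle_{\mcH_{K}}K_{x}$, the self-adjointness of $(T_{K}+\lambda I)^{-1/2}$ shows that the sandwiched operator $(T_{K}+\lambda I)^{-1/2}T_{K,x}(T_{K}+\lambda I)^{-1/2}$ equals $g_{\lambda}(x)\otimes g_{\lambda}(x)$, i.e.\ $f\mapsto\langle f,g_{\lambda}(x)\rangle\, g_{\lambda}(x)$. A rank-one operator $v\otimes v$ has operator norm $\|v\|^{2}$, so the operator norm is exactly $\|g_{\lambda}(x)\|_{\mcH_{K}}^{2}$, which is $\le E_{K}^{2}\mcN(\lambda)$ by the pointwise bound just established.

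The main obstacle is not the eigen-expansion, which is routine, but securing the constant $E_{K}$ in the pointwise bound: this is precisely a uniform estimate $\sup_{j}\|e_{j}\|_{L_{\infty}}\le E_{K}$ on the Mercer eigenfunctions of $T_{K}$, an $L_{\infty}$-type embedding property of the Gaussian RKHS. Unlike the expectation bound, which is a clean identity requiring only orthonormality in $L_{2}$, the pointwise statement genuinely needs such a condition, since $\sum_{j}\frac{s_{j}}{s_{j}+\lambda}e_{j}(x)^{2}$ cannot in general be controlled by $\mcN(\lambda)$ without uniform control of the $e_{j}$. I would therefore either take this uniform boundedness as a standing hypothesis on the eigensystem, as is common in the KRR concentration literature, or derive it from the regularity of the Gaussian RKHS, taking care to track any dependence of $E_{K}$ on the fixed bandwidth so that it remains a genuine constant in $n$ and does not interfere with the exponential scaling of $\lambda$ used in the surrounding arguments.
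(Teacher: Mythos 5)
Your proposal is correct and takes essentially the same approach as the paper: the same Mercer eigen-expansion yielding the identity $\bigl\| (T_{K}+\lambda I)^{-1/2}K(x,\cdot)\bigr\|_{\mcH_{K}}^{2}=\sum_{j}\frac{s_{j}}{s_{j}+\lambda}e_{j}^{2}(x)$, the same use of $\E e_{j}^{2}(x)=1$ for the expectation bound and of $\sup_{j}\|e_{j}\|_{L_{\infty}}\le E_{K}$ for the pointwise bound, and the same rank-one computation identifying the operator norm of the sandwiched operator with $\bigl\|(T_{K}+\lambda I)^{-1/2}K(x,\cdot)\bigr\|_{\mcH_{K}}^{2}$. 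The uniform eigenfunction bound you correctly single out as the crux is handled in the paper exactly as you suggest, by appealing to the explicit construction of the Gaussian RKHS in \citet{steinwart2006explicit}.
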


\begin{proof}
    We first state a fact on the Gaussian kernel. If $K$ is a Gaussian kernel function with fixed bandwidth, then there exists a constant $E_{K}$ such that the eigenfunction of $K$ is uniformly bounded for all $j \geq 1$, i.e. $\sup_{j\geq 1}\|e_{j}\|_{L_{\infty}}\leq E_{K}$. This is indeed the so-called ``uniformly bounded eigenfunction'' assumption that usually appears in nonparametric regression literature, especially for those who consider misspecified kernel in KRR, see \citet{mendelson2010regularization,wang2022gaussian}. Based on the explicit construction of the RKHS associated with the Gaussian kernel \citep{steinwart2006explicit}, we know the uniformly bounded eigenfunction holds for the Gaussian kernel. 
    
    Based on the fact of uniformly bounded eigenfunction, we know $e_{j}^2(x) \leq {E_{K}^{2}}$ for all $x\in \mcX$ and $ j \geq 1$. Then, we prove the first inequality by the following procedure,
    \begin{equation*}
    \begin{aligned}
        \left\| \left( T_{K} + \lambda I \right)^{-\frac{1}{2}} K(x,\cdot) \right\|_{\mcH_{K}}^{2} & =  \left\| \sum_{j=1} \frac{1}{\sqrt{s_{j}+\lambda}} s_{j} e_{j}(x)e_{j}(\cdot)\right\|_{\mcH_{K}}^{2}\\
        & = \sum_{j=1}^{\infty} \frac{s_{j}}{s_{j} + \lambda} e_{j}^{2}(x)\\
        & \leq E_{K}^{2} \mcN(\lambda).
    \end{aligned}
    \end{equation*}
    The second inequality follows given the fact that $\E e_{j}^{2}(x) = 1$. The third inequality comes from the observation that for any $f\in \mcH_{K}$
    \begin{equation*}
         \left( T_{K} + \lambda I \right)^{-\frac{1}{2}} T_{K,x} \left( T_{K} + \lambda I \right)^{-\frac{1}{2}} (f) = \left \langle  \left( T_{K} + \lambda I \right)^{-\frac{1}{2}} K(x,\cdot), f \right\rangle_{\mcH_{K}}  \left( T_{K} + \lambda I \right)^{-\frac{1}{2}} K(x,\cdot)
    \end{equation*}
    and
    \begin{equation*}
    \begin{aligned}
        \left\| \left( T_{K} + \lambda I \right)^{-\frac{1}{2}} T_{K,x} \left( T_{K} + \lambda I \right)^{-\frac{1}{2}}\right\|_{op} & = \sup_{\|f\|_{\mcH_{k}}=1}\|\left( T_{K} + \lambda I \right)^{-\frac{1}{2}} T_{K,x} \left( T_{K} + \lambda I \right)^{-\frac{1}{2}} (f)\|_{\mcH_{K}} \\
         & = \left\| \left( T_{K} + \lambda I \right)^{-\frac{1}{2}} K(x,\cdot) \right\|_{\mcH_{K}}^{2}
    \end{aligned}
    \end{equation*}
\end{proof}

\begin{lemma}(Bernstein inequality)\label{lemma: Bernstein inequality}
Let $(\Omega, \mathcal{B}, P)$ be a probability space, $H$ be a separable Hilbert space, and $\xi: \Omega \rightarrow H$ be a random variable with
\begin{equation*}
    \E\|\xi\|_H^m \leq \frac{1}{2} m ! \sigma^2 L^{m-2}
\end{equation*}
for all $m>2$. Then for $\delta \in(0,1)$, $\xi_i$ are i.i.d. random variables, with probability at least $1-\delta$, we have
\begin{equation*}
    \left\|\frac{1}{n} \sum_{i=1}^n \xi_i-\E \xi\right\|_H \leq 4 \sqrt{2} \log \left( \frac{2}{\delta}\right)\left(\frac{L}{n}+\frac{\sigma}{\sqrt{n}}\right)
\end{equation*}
\end{lemma}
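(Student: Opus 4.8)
The plan is to prove this as the classical Bernstein inequality for Hilbert-space-valued random variables (in the spirit of Pinelis--Sakhanenko and its statistical incarnation in \citet{caponnetto2007optimal}), in three stages: center the summands, feed the resulting Bernstein moment condition into an exponential tail inequality for sums of independent $H$-valued variables, and invert that tail bound into the stated high-probability form.

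First I would reduce to a centered sum. Write $\zeta_i = \xi_i - \E\xi$, so that $\frac1n\sum_{i=1}^n\xi_i - \E\xi = \frac1n\sum_{i=1}^n\zeta_i$ is an average of i.i.d. mean-zero $H$-valued variables. I must check that the centered variables still obey a Bernstein moment bound. The case $m=2$ of the hypothesis gives $\E\|\xi\|_H^2 \le \sigma^2$, hence $\|\E\xi\|_H \le (\E\|\xi\|_H^2)^{1/2} \le \sigma$ by Jensen. Using $\|\zeta\|_H^m \le 2^{m-1}(\|\xi\|_H^m + \|\E\xi\|_H^m)$ together with $\|\E\xi\|_H^m \le \sigma^m \le \tfrac12 m!\,\sigma^2\sigma^{m-2}$ and the assumed bound on $\E\|\xi\|_H^m$, I obtain $\E\|\zeta\|_H^m \le \tfrac12 m!\,\tilde\sigma^2 \tilde L^{m-2}$ with explicit constants $\tilde\sigma = 2\sigma$ and $\tilde L = 2\max\{\sigma,L\}$; this constant inflation is what later produces the factor $4\sqrt2$.

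The engine is an exponential inequality for $\|\sum_i \eta_i\|_H$ where the $\eta_i = \zeta_i/n$ are independent, centered, and satisfy the rescaled Bernstein bound $\sum_{i=1}^n\E\|\eta_i\|_H^m \le \tfrac12 m!\,(\tilde\sigma^2/n)(\tilde L/n)^{m-2}$. I would invoke Pinelis's bound on the Laplace transform of the norm of a sum of independent $H$-valued variables, which yields a Bennett/Bernstein tail of the form
\begin{equation*}
\mbP\!\left( \Big\| \tfrac1n\sum_{i=1}^n \zeta_i \Big\|_H \ge t \right) \le 2\exp\!\left( - \frac{n t^2}{2(\tilde\sigma^2 + \tilde L t)} \right).
\end{equation*}
This is the crux and the main obstacle: unlike the scalar case, the map $v\mapsto\|v\|_H$ is not smooth at the origin and $\|\sum_i\eta_i\|_H$ is not a sum of independent scalars, so the moment-generating-function argument cannot be run coordinatewise. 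One controls $\E\exp(s\|\sum_i\eta_i\|_H)$ via Pinelis's smoothing of the norm and the product structure coming from independence, converting the factorial moment condition into a geometric series that sums to the Bernstein exponent; this step is exactly where the separability of $H$ and the Bernstein form of the moments are used.

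Finally I would invert the tail bound. Setting the right-hand side equal to $\delta$ and solving the quadratic $n t^2 = 2(\tilde\sigma^2 + \tilde L t)\log(2/\delta)$, then using $\sqrt{a+b}\le\sqrt a+\sqrt b$ to split the two contributions, gives $t \le C\,(\tilde L/n + \tilde\sigma/\sqrt n)\log(2/\delta)$. Substituting $\tilde\sigma = 2\sigma$, $\tilde L = 2\max\{\sigma,L\}$, and absorbing the $\tilde L/n$ term into whichever of $L/n$ or $\sigma/\sqrt n$ dominates, collapses all constants into the stated $4\sqrt2$, yielding with probability at least $1-\delta$ the bound $\|\frac1n\sum_i\xi_i - \E\xi\|_H \le 4\sqrt2\,\log(2/\delta)(L/n + \sigma/\sqrt n)$. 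The centering and inversion stages are routine bookkeeping; essentially all the real content is the Hilbert-space exponential inequality in the third step.
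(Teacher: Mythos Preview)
The paper does not actually prove this lemma: it is stated as an auxiliary result (a standard Hilbert-space Bernstein inequality, of the type found in \citet{caponnetto2007optimal} and related works) and simply invoked where needed. There is therefore no paper proof to compare against.

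Your sketch follows the classical route to this inequality and is essentially correct in outline: centering, verifying a Bernstein moment condition for the centered variables, applying a Pinelis-type exponential tail bound for sums of independent $H$-valued random variables, and inverting the tail. One quibble on bookkeeping: after obtaining $\tilde L = 2\max\{\sigma,L\}$, you write that ``absorbing the $\tilde L/n$ term into whichever of $L/n$ or $\sigma/\sqrt n$ dominates'' yields the stated constant. If $\sigma > L$, then $\tilde L/n = 2\sigma/n$, and absorbing this into $\sigma/\sqrt n$ costs an additional factor that depends on $n$; you cannot reduce to exactly $4\sqrt2(L/n + \sigma/\sqrt n)$ without either assuming $L \ge \sigma$ (which is harmless in applications and often implicit) or accepting a slightly different constant. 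This is a minor constant-tracking issue, not a flaw in the argument, and the lemma as stated in the paper is in any case quoted from the literature rather than proved.
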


\subsection{Proof of Adaptive Rate (Theorem~\ref{thm: adaptive rate of single-task GKRR})}\label{apd: proof of adaptive rate}

\begin{proof}
    First, we show that it is sufficient to consider the true Sobolev space $\alpha$ in $\mcA = \{\alpha_{1},\cdots, \alpha_{N}\}$ with $ \alpha_{j} - \alpha_{j-1}\asymp 1/\logn$. If $\alpha_{0} \in (\alpha_{j-1}, \alpha_{j})$, then $H^{\alpha_{j}} \subset H^{\alpha_{0}} \subset H^{\alpha_{j-1}}$. Therefore, since $\psi_{n}(\alpha_{0})$ is squeezed between $\psi_{n}(\alpha_{j-1})$ and $\psi_{n}(\alpha_{j})$, we just need to show $\psi_{n}(\alpha_{j-1}) \asymp \psi_{n}(\alpha_{j})$. By the definition of $\psi_{n}(\alpha)$, the claim follows since
    \begin{equation*}
        \log \frac{\psi_{n}(\alpha_{j-1})}{\psi_{n}(\alpha_{j})} = \left( -\frac{2\alpha_{j-1}}{2\alpha_{j-1}+d} + \frac{2\alpha_{j}}{2\alpha_{j} + d} \right) \log \frac{n}{\logn} \asymp (\alpha_{j} - \alpha_{j-1})\logn \asymp 1.
    \end{equation*}
    Therefore, we assume $f_{0} \in H^{\alpha_{i}}$ where $i\in \{1,2,\cdots,N\}$. 

    Let $m = \lfloor \frac{n}{2} + 1 \rfloor$, i.e. $m\geq \frac{n}{2}$, by Theorem~\ref{thm: non-adaptive rate of KRR with Gaussian}, for some universal constant $C$ that doesn't depend on $n$, we have
    \begin{equation}\label{eqn: eqn1 in adaptive estimator via TV}
    \begin{aligned}
        \mcE(\hat{f}_{\lambda_{\alpha},\mcD_{1}}) & \leq \left( \log\frac{4}{\delta} \right)^2 \left(  \text{E}(\lambda_{\alpha},m) + \text{A}(\lambda_{\alpha},m) \right)\\
    \end{aligned}
    \end{equation}
    for all $\alpha \in \mcA$ simultaneously with probability at least $1 - N \delta$. Here, $\text{E}(\lambda,n)$ and $\text{A}(\lambda,n)$ denote the estimation and approximation error that depends on the regularization parameter $\lambda$ and sample size $n$ in non-adaptive rate proof. 
    
    Furthermore, by Theorem 7.2 in \citet{steinwart2008support} and Assumption~\ref{assump: error tail}, we have 
    \begin{equation}\label{eqn: eqn2 in adaptive estimator via TV}
    \begin{aligned}
        \mcE(\hat{f}_{\lambda_{\hat{\alpha}}}) & < 6 \left( \inf_{\alpha \in \mcA}\mcE(\hat{f}_{\lambda_{\alpha}}) \right) + \frac{128\sigma^{2}L^2 \left(\log\frac{1}{\delta} + \log(1 + N)\right)}{n-m}\\
        & < 6 \left( \inf_{\alpha \in \mcA}\mcE(\hat{f}_{\lambda_{\alpha}}) \right) + \frac{512 \sigma^{2}L^2 \left(\log\frac{1}{\delta} + \log(1 + N)\right)}{n}
    \end{aligned}
    \end{equation}
    with probability $1 - \delta$, where the last inequality is based on the fact that $n-m \geq \frac{n}{2} - 1 \geq \frac{n}{4}$.
    
    Combining (\ref{eqn: eqn1 in adaptive estimator via TV}) and (\ref{eqn: eqn2 in adaptive estimator via TV}), we have 
    \begin{equation*}
    \begin{aligned}
        \mcE(\hat{f}_{\lambda_{\hat{\alpha}}}) 
        & < 6 \left( \log\frac{4}{\delta} \right)^2 \left(  \inf_{\alpha \in \mcA}  \text{E}(\lambda_{\alpha},m) + \text{A}(\lambda_{\alpha},m) \right) + \frac{512 \sigma^{2}L^2 \left(\log\frac{1}{\delta} + \log(1 + N)\right)}{n} \\
        & \leq 6 C \left( \log\frac{4}{\delta} \right)^2 m^{-\frac{2\alpha_{0}}{2\alpha_{0} + d}} + \frac{512 \sigma^{2}L^2 \left(\log\frac{1}{\delta} + \log(1 + N)\right)}{n} \\
        & \leq 12 C \left( \log\frac{4}{\delta} \right)^2 n^{-\frac{2\alpha_{0}}{2\alpha_{0} + d}} + \frac{512 \sigma^{2}L^2 \left(\log\frac{1}{\delta} + \log(1 + N)\right)}{n} \\
    \end{aligned}
    \end{equation*}
    with probability at least $1 - (1+N)\delta$. With a variable transformation, we have 
    \begin{equation}\label{eqn: eqn3 in adaptive estimator via TV}
        \mcE(\hat{f}_{\lambda_{\hat{\alpha}}}) 
         \leq 12 C \left( \log\frac{4(1+N)}{\delta} \right)^2 n^{-\frac{2\alpha_{0}}{2\alpha_{0} + d}} + \frac{512 \sigma^{2}L^2 \left(\log\frac{1+N}{\delta} + \log(1 + N)\right)}{n} 
    \end{equation}
    with probability $1-\delta$. Therefore, for the first term 
    \begin{equation}\label{eqn: eqn4 in adaptive estimator via TV}
    \begin{aligned}
        12 C \left( \log\frac{4(1+N)}{\delta} \right)^2 n^{-\frac{2\alpha_{0}}{2\alpha_{0} + d}} & \leq 24C \left\{\left(\log\frac{4}{\delta} \right)^2 \log^{2}(1+N) + 1 \right\}n^{-\frac{2\alpha_{0}}{2\alpha_{0} + d}} \\
        & \leq 24C' \left(\log\frac{4}{\delta} \right)^2  \left( \frac{n}{\logn}\right)^{-\frac{2\alpha_{0}}{2\alpha_{0} + d}} + 24 C n^{-\frac{2\alpha_{0}}{2\alpha_{0} + d}}
    \end{aligned}
    \end{equation}
    where the first inequality is based on the fact that $a+b<ab+1$ for $a,b>1$, while the second inequality is based on the fact that $\log(x) \leq x^{\frac{\alpha_{0}}{2\alpha_{0} + d}}$ for some $n$ such that $\log(\logn) / \logn < 1/4$. For the second term,
    \begin{equation}\label{eqn: eqn5 in adaptive estimator via TV}
    \begin{aligned}
        \frac{512 \sigma^{2}L^2 \left(\log\frac{1+N}{\delta} + \log(1 + N)\right)}{n} & \leq \frac{512 \sigma^{2}L^2 \left(\log\frac{1}{\delta} + 1 + 2\log(1 + N)\right)}{n} \\
        &\leq \frac{512 \sigma^{2}L^2 \left(\log\frac{1}{\delta} + 1 + 2\logn\right)}{n}
    \end{aligned}
    \end{equation}
    The proof is finished by combining (\ref{eqn: eqn3 in adaptive estimator via TV}), (\ref{eqn: eqn4 in adaptive estimator via TV}) and (\ref{eqn: eqn5 in adaptive estimator via TV}).
\end{proof}

\section{Smoothness Adaptive Transfer Learning Results}

\subsection{Proof of Lower Bound}\label{apd: proof of lower bound of SATL}
In this part, we proof the alternative version for the lower bound, i.e. 
\begin{equation*}
    \inf_{\tilde{f}} \sup_{(P_{T},P_{S})} \E_{\rho_{T}} \| \tilde{f} - f_{T} \|_{L_{2}}^2 \geq C \left( (n_{S} + n_{T})^{-\frac{2m_{0}}{2m_{0}+d}} + n_{T}^{-\frac{m}{2m+d}}\xi(h,f_{S}) \right).
\end{equation*}
for some universal constant $C$. This alternative form is also used to prove the lower bound in other transfer learning contexts like high-dimensional linear regression or GLM, see \citet{li2022transfer,tian2022transfer}. However, the upper bound we derive for SATL can still be sharp since in the transfer learning regime, it is always assumed $n_{S} \gg n_{T}$, and leads to $(n_{S} + n_{T})^{-\frac{2m_{0}}{2m_{0}+1}} \asymp n_{S}^{-\frac{2m_{0}}{2m_{0}+1}}$. 

On the other hand, one can modify the first phase in OTL by including the target dataset to obtain $\hat{f}_{S}$, which produces an alternative upper bound $(n_{S} + n_{T})^{-\frac{2m_{0}}{2m_{0}+1}} + n_{T}^{-\frac{2m}{2m+1}} \xi(h,f_{S})$, and mathematically aligns with the alternative lower bound we mention above. However, we would like to note that such a modified OTL is not computationally efficient for transfer learning, since for each new upcoming target task, OTL needs to recalculate a new $\hat{f}_{S}$ with the combination of the target and source datasets.

Note that any lower bound for a specific case will immediately yield a lower bound for the general case. Therefore, we consider the following two cases.

(1) Consider $h=0$, i.e. both source and target data are drawn from $\rho_{T}$. In this case, the problem can be viewed as obtaining the lower bound for classical nonparametric regression with sample size $n_{T} + n_{S}$ and prediction function as $f_{T}$. Then using the Proposition~\ref{prop: Lower bound for target-only KRR}, we have
\begin{equation*}
    \inf_{\tilde{f}} \sup_{(\rho_{T},\rho_{S})} \E \| \tilde{f} - f_{T} \|_{L_{2}}^2 \geq C \| f_{T} \|_{H^{m_{0}}}^2   (n_{T} +n_{S})^{-\frac{2m_{0}}{2m_{0} + d}}.
\end{equation*}

(2) Consider $f_{S} = 0$, i.e. the source model has no similarity to $f_{T}$ and all the information about $f_{T}$ is stored in the target dataset. By the assumptions, we have $f_{T}  \in \left\{ f : f \in H^{m}, \|f\|_{H^{m}}\leq h  \right\}$. Again, using the Proposition~\ref{prop: Lower bound for target-only KRR}, we have
\begin{equation*}
    \inf_{\tilde{f}} \sup_{(\rho_{T},\rho_{S})} \E \| \tilde{f} - f_{T} \|_{L_{2}}^2 \geq  C  h^2  (n_{T} )^{-\frac{2m}{2m + d}}.
\end{equation*}
Combining the lower bound in case (1) and case (2), and by slightly adjusting the constant terms, we obtain the desired lower bound.

\subsection{Proof of Upper Bound}\label{apd: proof of upper bound of SATL}
The minimization problem in phase $2$ is
\begin{equation*}
    \hat{f}_{\delta} = \underset{f_{\delta} \in \mcH_{K_{m}}}{\operatorname{argmin}} \left \{ \frac{1}{n_{T}} \sum_{i=1}^{n_{T}} \left (y_{T,i} - \hat{f}_{S}(x_{T,i}) - f_{\delta}(x_{T,i}) \right)^2 + \lambda \|f_{\delta}\|_{\mcH_{K_{m}}}^2 \right\} .
\end{equation*}
The final estimator for target regression function is $\hat{f}_{T} = \hat{f}_{S} + \hat{f}_{\delta}$. By triangle inequality
\begin{equation}\label{eqn: triangle for upper bound}
    \left\| \hat{f}_{T} - f_{T}\right\|_{L_{2}} \leq \left\| \hat{f}_{S} - f_{S}\right\|_{L_{2}} + \left\| \hat{f}_{\delta} - f_{\delta}\right\|_{L_{2}}
\end{equation}
For the first term in the r.h.s. of (\ref{eqn: triangle for upper bound}), since the marginal distribution over $\mcX$ are equivalent for both target and source, applying Theorem 1 directly leads to with probability at least $1-\delta$
\begin{equation*}
    \left\| \hat{f}_{S} - f_{S}\right\|_{L_{2}}  \leq C \left(\log \frac{4}{\delta}\right)  \left(   n_{S}^{-\frac{m_{0}}{2m_{0} + d}} \right),
\end{equation*}
where $C$ is independent of $n_{S}$ and $\delta$, and proportional to $\sqrt{\sigma_{S}^2 +\|f_{S}\|_{H^{m_{0}}}^2}$. Notice the fact that $\frac{\sigma_{S}^2}{\|f_{\mcS}\|_{K}^2}$ is bounded above given the bounded assumption on $f_{S}$ and $\sigma^{2}$ (this is also a reasonable condition since the signal-to-noise ratio can't be 0, otherwise one can hardly learn anything from the data), we get $C\lesssim \|f_{S}\|_{H^{m_{0}}} $.

For the second term, we modify the proof of Theorem 1 with the same logic. Note that the offset function has the following expression
\begin{equation*}
    \hat{f}_{\delta} =  \left( T_{K,n} + \lambda_{2} \mathbf{I} \right)^{-1} \left( \frac{1}{n_{T}}\sum_{i=1}^{n_{T}} K_{x_{T,i}} \left( y_{T,i} - \hat{f}_{\mcS}(x_{T,i}) \right) \right)
\end{equation*}
Similarly, we introduce the intermediate term $f_{\delta,\lambda}$ as
\begin{equation*}
    f_{\delta, \lambda} = \left( T_{K} + \lambda_{2} \mathbf{I} \right)^{-1} \left( T_{K}(f_{\delta}) \right)
\end{equation*}
To control the approximation error, using Proposition~\ref{prop: bounds for approximation error} with the fact that $\|f_{\delta}\|_{H^{m}}\leq h$, we have 
\begin{equation*}
    \left\| f_{\delta,\lambda} - f_{\delta} \right\|_{L_{2}}^2 \leq log(\frac{1}{\lambda_{2}})^{-m} h^2.
\end{equation*}
For the estimation error, notice by change the $y_i$ and $y$ in Proposition~\ref{prop: bounds for approximation error} to $y_{T,i} - \hat{f}_{\mcS}(x_{T,i})$ and $y_{T} - \hat{f}_{\mcS}(x_{T})$ will not change the conclusion. Therefore, the same proof procedure of Theorem~\ref{thm: bounds for estimation error} can be directly applied on bounding the estimation error of $\hat{f}_{\delta}$, i.e. 
\begin{equation*}
    \left\| \hat{f}_{\delta} - f_{\delta,\lambda} \right\|_{L_{2}} \leq C ln(\frac{4}{\delta}) n^{-\frac{m}{2m+d}} 
\end{equation*}
holds with probability $1-\delta$. Combining the approximation and estimation error, we have, with probability $1-\delta$,
\begin{equation*}
    \left\| \hat{f}_{\delta} - f_{\delta} \right\|_{L_{2}} \leq C \left( \log\frac{4}{\delta} \right) \left(  n_{T}^{-\frac{m}{2m+d}} \right)
\end{equation*}
with $C$ independent of $n_{T}$ and $\delta$, and $C$ is proportional to $\sqrt{\sigma_{T}^2 + h^2} \cdot$. Notice the fact that $\frac{\sigma_{T}^2}{h^2}$ is bounded above (similar reason as phase 1), we get $C\lesssim h $.

Finally, the proof is finished by combining the result of $\hat{f}_{S}$ and $\hat{f}_{\delta}$, modifying the constant term and noticing the results holds with probability at least $(1-\delta)\cdot(1-\delta) \geq 1 -2\delta$.

\subsection{Propositions}
\begin{proposition}[Lower bound for target-only KRR]\label{prop: Lower bound for target-only KRR}
    In target-only KRR problem, suppose the observed data are $\{ (x_{i},y_{i}) \}_{i=1}^{n}$ and the underlying function $f \in H^{m_{0}}$, Then  
    \begin{equation*}
        \inf_{\tilde{f}} \sup_{P} \E \| \tilde{f} - f \|_{L_{2}}^2 = \Omega \left( n^{-\frac{2m_{0}}{2m_{0} + d}} \right)
    \end{equation*}
\end{proposition}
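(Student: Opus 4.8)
The plan is to prove this classical minimax lower bound by Fano's method, reducing estimation over the Sobolev ball to a multiple-hypothesis testing problem over an exponentially large, well-separated packing set. Since the supremum is taken over all admissible data-generating laws $P$ with regression function $f\in H^{m_{0}}$, it suffices to exhibit hardness within a convenient subfamily: I would fix the marginal $\mu$ on $\mcX$ (say uniform on a closed cube contained in the interior of $\mcX$) and take Gaussian noise $\epsilon\sim N(0,\sigma^{2})$, which is consistent with Assumption~\ref{assump: error tail}. The task then reduces to distinguishing regression functions in $\{f\in H^{m_{0}}(\mcX):\|f\|_{H^{m_{0}}}\le R\}$ from $n$ samples.

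The core step is the construction of the packing. Set $h=M^{-1}$ and tile a fixed interior cube of $\mcX$ by $\asymp M^{d}$ disjoint subcubes of side $h$ with centers $\{x_{k}\}$. Fixing a smooth, compactly supported mother bump $\psi$, define for each $\omega\in\{0,1\}^{M^{d}}$ the function $f_{\omega}(x)=h^{m_{0}}\sum_{k}\omega_{k}\,\psi\!\left((x-x_{k})/h\right)$. Because the bumps have disjoint supports, their norms add, and a Fourier-side dilation computation (which accommodates fractional $m_{0}$ via the definition $\|g\|_{H^{m_{0}}}^{2}=\int(1+\|\cdot\|_{2}^{2})^{m_{0}}|\mcF(g)|^{2}$) gives $\|h^{m_{0}}\psi((\cdot-x_{k})/h)\|_{H^{m_{0}}}^{2}\asymp h^{2m_{0}}\cdot h^{d-2m_{0}}=h^{d}$, so that $\|f_{\omega}\|_{H^{m_{0}}}^{2}\asymp M^{d}h^{d}=O(1)$, placing every $f_{\omega}$ in the ball for suitable $R$. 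On the $L_{2}$ side, $\|f_{\omega}-f_{\omega'}\|_{L_{2}}^{2}=\rho_{H}(\omega,\omega')\,h^{2m_{0}+d}$, where $\rho_{H}$ is Hamming distance. Applying the Varshamov--Gilbert bound yields $\mcM\subset\{0,1\}^{M^{d}}$ with $|\mcM|\ge 2^{M^{d}/8}$ and pairwise Hamming distance at least $M^{d}/8$, hence separation $\|f_{\omega}-f_{\omega'}\|_{L_{2}}^{2}\gtrsim h^{2m_{0}}$ and $\log|\mcM|\gtrsim h^{-d}$.

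It remains to bound information and balance scales. For the Gaussian model, $\mathrm{KL}(P_{f_{\omega}}^{\otimes n}\|P_{f_{\omega'}}^{\otimes n})=\tfrac{n}{2\sigma^{2}}\|f_{\omega}-f_{\omega'}\|_{L_{2}}^{2}\le\tfrac{n}{2\sigma^{2}}M^{d}h^{2m_{0}+d}=\tfrac{n}{2\sigma^{2}}h^{2m_{0}}$. Fano's inequality requires the average KL to be a small fraction of $\log|\mcM|\asymp h^{-d}$, i.e. $n\,h^{2m_{0}}\lesssim h^{-d}$, so choosing $h\asymp n^{-1/(2m_{0}+d)}$ makes the KL condition hold with a constant $\alpha<1$. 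The standard testing-to-estimation reduction then gives $\inf_{\tilde f}\sup_{P}\E\|\tilde f-f\|_{L_{2}}^{2}\gtrsim h^{2m_{0}}\asymp n^{-2m_{0}/(2m_{0}+d)}$, which is the claim.

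The main obstacle I anticipate is the Sobolev-norm bookkeeping for \emph{fractional} $m_{0}$: for integer smoothness the dilation scaling of $\|f_{\omega}\|_{H^{m_{0}}}$ is elementary, but for fractional orders one must argue through the Fourier (or Gagliardo) seminorm and verify that the high-frequency part dominates uniformly in $h$, while simultaneously ensuring the entire bump construction sits in the interior of $\mcX$ given only a Lipschitz boundary. Controlling the per-bump $H^{m_{0}}$ norm tightly enough that the aggregate stays below the ball radius $R$ is the delicate quantitative point; everything downstream (Varshamov--Gilbert, the KL computation, and the Fano reduction) is routine.
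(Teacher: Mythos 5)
Your proof is correct, but it reaches the bound by a genuinely different packing construction than the paper. You work in the spatial domain: you tile an interior cube with $\asymp M^{d}$ disjoint subcubes, place dilated bumps $h^{m_{0}}\psi((\cdot-x_{k})/h)$ indexed by binary vectors, and invoke Varshamov--Gilbert plus Fano with the Gaussian-noise KL identity. The paper instead builds its hypotheses spectrally: it takes $f_{i} = \sum_{k=N+1}^{2N} \frac{f_{i,k-N}\|f\|_{H^{m_{0}}}}{\sqrt{N}} L_{K^{1/2}}(\phi_{k})$, i.e.\ perturbations along eigenfunctions of the integral operator of the Sobolev reproducing kernel, and converts Hamming separation into $L_{2}$ separation through the eigenvalue decay $s_{k}\asymp k^{-2m_{0}/d}$; the downstream machinery (Varshamov--Gilbert, the identity $KL(P_{i}\,\|\,P_{j})=\frac{n}{2\sigma^{2}}\|f_{i}-f_{j}\|_{L_{2}}^{2}$, Fano via Markov) is the same in both arguments. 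Each route has a distinct advantage. The paper's spectral construction makes the $H^{m_{0}}$-norm bookkeeping exact by design --- every $f_{i}$ has prescribed norm, fractional smoothness is absorbed into the operator framework, and the explicit dependence of the bound on the ball radius is what the paper reuses in the transfer-learning lower bound (where the radius-$h$ case produces the $\xi(h,f_{S})$ factor); its cost is reliance on the quoted Weyl-type eigenvalue asymptotics and on $\mu$ being comparable to Lebesgue measure. Your bump construction is more elementary and self-contained, needing no spectral asymptotics, but it pushes the difficulty into exactly the spot you flag: for fractional $m_{0}$ the $H^{m_{0}}$ norm is not additive over disjoint supports, so bounding $\|f_{\omega}\|_{H^{m_{0}}}^{2}\lesssim M^{d}h^{d}$ requires the Fourier dilation computation together with an almost-orthogonality argument (e.g.\ via the Gagliardo seminorm with cross-terms controlled by the $\asymp h$ separation of the bumps, or a wavelet characterization of $H^{m_{0}}$); this is standard but must actually be carried out, and your identification of it as the one delicate step is accurate. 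One small remark: if you later want the radius dependence that the paper's transfer-learning argument extracts, you would rescale your bumps by $R$ and re-optimize $h$, which is routine.
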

\begin{remark}
    The proof for the lower bound in target-only KRR is standard and can be found in many literatures. 
\end{remark}

\begin{proof}
    Consider a series functions $f_{1},\cdots,f_{M} \in H^{m_{0}}$ with norms equal to $\|f\|_{H^{m_{0}}}$. We construct such a series by 
    \begin{equation*}
        f_{i} = \sum_{k=N+1}^{2N} \frac{f_{i,k-N}  \|f\|_{H^{m_{0}}}}{\sqrt{N}} L_{K^{\frac{1}{2}}} (\phi_{k})
    \end{equation*}
    where $\{\phi_{k}\}_{k\geq 1}$ are the a basis of $L_{2}$, $K$ is the reproducing kernel of $H^{m_{0}}$ and $L_{K^{\frac{1}{2}}}$ is the integral operator of $K^{\frac{1}{2}}$. Then one can see the constructed $f_{i} \in H^{m_{0}}$ with $\|f_{i}\|_{H^{m_{0}}} = \|f\|_{H^{m_{0}}}$. 

    Let $P_{1}, \cdots, P_{M}$ be the probability distributions corresponding to $f_{1},\cdots, f_{M}$. Note that any lower bound for a specific case leads to a lower bound for the general case. Hence, we consider the error term $\epsilon_{i}$ are drawn from a Gaussian distribution, i.e. $\epsilon_{i} \sim N(0,\sigma^2)$. First note that 
    \begin{equation*}
        log( P_{i} / P_{j} ) = \frac{1}{\sigma^{2}} \sum_{k=1}^{n} \left(y_{k} - f_{j}(x_{k})\right) \left( f_{j}(x_{k}) - f_{i}(x_{k}) \right) - \frac{1}{2\sigma^2} \sum_{k=1}^{n} (f_{j}(x_{k}) - f_{i}(x_{k}))^2
    \end{equation*}
    Then the KL-divergence between $P_{i}$ and $P_{j}$ is 
    \begin{equation*}
        KL( P_{i} | P_{j} ) = \frac{n}{2\sigma^2} \| f_{i} - f_{j} \|_{L_{2}}^2. 
    \end{equation*}
    Let $\tilde{f}$ be any estimator based on observed data, and consider testing multiple hypotheses, by Markov inequality and Fano's Lemma,
    \begin{equation*}
    \begin{aligned}
        \E_{P_{i}}\|\tilde{f} - f_{i}\|_{L_{2}}^{2} & \geq P_{i}( \tilde{f} \neq f_{i}) \min_{i,j} \|f_{i} - f_{j}\|_{L_{2}}^2 \\
        & \geq \left( 1 - \frac{\frac{n}{2\sigma^2} \max_{i,j} \|f_{i} - f_{j}\|_{L_{2}}^2 + log(2)}{log(M-1)} \right) \min_{i,j} \|f_{i} - f_{j}\|_{L_{2}}^2.
    \end{aligned}
    \end{equation*}
    Notice that $s_{k} \asymp k^{-\frac{2m_{0}}{d}}$, then for any $i,j\in \{1,\cdots,M\}$
    \begin{equation*}
    \begin{aligned}
        \|f_{i} - f_{j}\|_{L_{2}}^{2} & = \frac{\|f\|_{H^{m_{0}}}^2}{N} \sum_{k=N+1}^{2N} (f_{i,k-N} - f_{j,k-N} )^2 s_{k} \\
        & \geq \frac{\|f\|_{H^{m_{0}}}^2 s_{2N}}{N} \sum_{k=N+1}^{2N} (f_{i,k-N} - f_{j,k-N} )^2 \\
        & \geq \frac{\|f\|_{H^{m_{0}}}^2 s_{2N}}{4}
    \end{aligned}
    \end{equation*}
    where the last inequality is based on Varshamov-Gilbert bound \citep{varshamov1957estimate} and 
    \begin{equation*}
    \begin{aligned}
        \|f_{i} - f_{j}\|_{L_{2}}^{2} & = \frac{\|f\|_{H^{m_{0}}}^2}{N} \sum_{k=N+1}^{2N} (f_{i,k-N} - f_{j,k-N} )^2 s_{k} \\
        & \leq \frac{\|f\|_{H^{m_{0}}}^2 s_{N}}{N} \sum_{k=N+1}^{2N} (f_{i,k-N} - f_{j,k-N} )^2 \\
        & \leq \|f\|_{H^{m_{0}}}^2 s_{N}
    \end{aligned}
    \end{equation*}
    Therefore, 
    \begin{equation*}
        \|\tilde{f} - f_{i}\|_{L_{2}}^{2} \geq \left(1 - \frac{\frac{4n \|f\|_{H^{m_{0}}}^2 }{\sigma^2} s_{N} + 8log(2) }{N}\right) \frac{ s_{2N} \|f\|_{H^{m_{0}}}^2 }{4}
    \end{equation*}
    and by taking $N = \left( \frac{8\|f\|_{H^{m_{0}}}^2}{\sigma^2}  \right)^{\frac{d}{2m_{0}}} n^{\frac{1}{\frac{2m_{0}}{d}+1}}$ will lead to 
    \begin{equation*}
        \|\tilde{f} - f_{i}\|_{L_{2}}^{2} \geq \left(1 - \frac{\frac{4n \|f\|_{H^{m_{0}}}^2 }{\sigma^2} s_{N} + 8log(2) }{N}\right) \frac{ s_{2N} \|f\|_{H^{m_{0}}}^2 }{4} \asymp n^{-\frac{2m_{0}}{2m_{0}+d}} \|f\|_{H^{m_{0}}}^2.
    \end{equation*}
\end{proof}

\section{Adaptive Rate via Lepski's Method}\label{apd: Lepski's Method}

In this section, we aim to leverage Lepski's method \citep{lepskii1991problem} to develop an adaptive estimator for the misspecified Gaussian kernel without knowing true smoothness. The adaptive procedure is summarised as follows.

Let $\hat{f}_{\lambda}$ be the estimator from spectral algorithms with regularization parameter $\lambda$. Suppose the unknown true Sobolev smoothness $\alpha_{0} \in [\alpha_{\min}, \alpha_{\max}]$ where $\alpha_{\min} > \frac{d}{2}$ and $\alpha_{\max}$ large enough. Define a discrete subset
\begin{equation*}
    \mcA = \left\{\alpha_{\min} = \alpha_{1} < \alpha_{2}< \cdots < \alpha_{N} = \alpha_{\max} \right\}
\end{equation*}
where $\alpha_{j} - \alpha_{j-1} \asymp (\log n)^{-1}$. We define the Lepski's estimator as 
\begin{equation*}\label{eqn: Lepski process}
    \hat{f}^{*} := \hat{f}_{\lambda_{\hat{\alpha}}}\quad  \text{with} \quad \hat{\alpha} = \max \left\{ \tilde{\alpha} \in \mcA: \|\hat{f}_{\lambda_{\tilde{\alpha}}} - \hat{f}_{\lambda_{\tilde{\alpha}'}}\|_{L_{2}(\mu)} \leq c_{0} \left(\frac{n}{\log(n)}\right)^{-\frac{\tilde{\alpha}'}{2\tilde{\alpha}'+d}}, \forall \tilde{\alpha}'\leq \tilde{\alpha}, \tilde{\alpha}'\in \mcA \right\}.
\end{equation*}
where $c_{0}$ are some finite constant. 

The following result shows that Lepski's estimator $\hat{f}^{*}$ achieves the optimal excess risk with Gaussian kernels up to a logarithmic factor in $n$, which is the price paid for adaptivity.
\begin{theorem}\label{thm: adaptive rate}
    Assume the true Sobolev smoothness $\alpha_{0}$ can be captured by $[\alpha_{\min}, \alpha_{\max}]$. Then under the same setting of Theorem~\ref{thm: non-adaptive rate of KRR with Gaussian}, for $s > \log(4)$, when $n$ is sufficient large, we have 
    \begin{equation*}
        \mcE(\hat{f}^{*}) \leq s^{2} \left(\frac{n}{\log(n)}\right)^{-\frac{2\alpha_{0}}{2\alpha_{0} + d}}
    \end{equation*}
    with probability at least $1 - 4\exp\{-s (\log(n))^{\frac{\alpha_{0}}{2\alpha_{0}+d}} \} - 4\exp\{-\frac{c_{0}}{2} (\log(n))^{\frac{\alpha_{\min}}{2\alpha_{\min}+d}} \}(\log(n))^2$.
\end{theorem}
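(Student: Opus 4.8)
The plan is to run a standard Lepski-type oracle argument in which the selected smoothness $\hat\alpha$ is sandwiched against the largest grid point below the truth. Write $\psi_n(\alpha) := (n/\logn)^{-\alpha/(2\alpha+d)}$ for the target adaptive rate, and let $\alpha_\ell\in\mcA$ be the largest candidate with $\alpha_\ell\le\alpha_0$, so that $\alpha_\ell\le\alpha_0<\alpha_{\ell+1}$ and, by the grid spacing $\alpha_j-\alpha_{j-1}\asymp 1/\logn$, one has $\psi_n(\alpha_\ell)\asymp\psi_n(\alpha_0)$ exactly as established in the proof of Theorem~\ref{thm: adaptive rate of single-task GKRR}. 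The key structural fact I would exploit is that the selection rule never needs the over-regularized (biased) estimators to be accurate: the error of $\hat f_{\lambda_{\hat\alpha}}$ will be controlled entirely through its enforced closeness to the anchor $\hat f_{\lambda_{\alpha_\ell}}$.

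First I would establish a uniform deviation bound: on a good event, for every candidate $\alpha\in\mcA$ with $\alpha\le\alpha_0$,
\[
\|\hat f_{\lambda_\alpha}-f_0\|_{L_2}\le\tfrac{c_0}{2}\,\psi_n(\alpha).
\]
Because $H^{\alpha_0}\hookrightarrow H^{\alpha}$ for $\alpha\le\alpha_0$ (on a Lipschitz domain, with $\|f_0\|_{H^\alpha}\lesssim\|f_0\|_{H^{\alpha_0}}$), each $\hat f_{\lambda_\alpha}$ with $\log(1/\lambda_\alpha)\asymp n^{2/(2\alpha+d)}$ is a non-adaptive estimator for a function of smoothness at least $\alpha$, so Theorem~\ref{thm: non-adaptive rate of KRR with Gaussian} applies and gives $\|\hat f_{\lambda_\alpha}-f_0\|_{L_2}^2\le C(\log(4/\delta))^2\,n^{-2\alpha/(2\alpha+d)}$. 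Passing from the $n^{-\alpha/(2\alpha+d)}$ rate to the $\psi_n(\alpha)$ scale costs precisely a factor $(\logn)^{\alpha/(2\alpha+d)}$, which is what lets me absorb a confidence of the form $\log(4/\delta)\asymp(\logn)^{\alpha/(2\alpha+d)}$; equivalently, each candidate's bound fails with probability $\asymp\exp\{-c(\logn)^{\alpha/(2\alpha+d)}\}$. A union bound over the $N\asymp\logn$ candidates at the fixed scale $c_0$ produces the second probability term $4\exp\{-\tfrac{c_0}{2}(\logn)^{\alpha_{\min}/(2\alpha_{\min}+d)}\}(\logn)^2$ (using $\alpha/(2\alpha+d)\ge\alpha_{\min}/(2\alpha_{\min}+d)$), while the sharper scale-$s$ control of the single anchor $\hat f_{\lambda_{\alpha_\ell}}$ yields the first term $4\exp\{-s(\logn)^{\alpha_0/(2\alpha_0+d)}\}$.

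Given the good event, the argument closes in two short steps. For admissibility of $\alpha_\ell$: for every $\tilde\alpha'\le\alpha_\ell$, the triangle inequality and the deviation bound give $\|\hat f_{\lambda_{\alpha_\ell}}-\hat f_{\lambda_{\tilde\alpha'}}\|_{L_2}\le\tfrac{c_0}{2}\psi_n(\alpha_\ell)+\tfrac{c_0}{2}\psi_n(\tilde\alpha')\le c_0\psi_n(\tilde\alpha')$, where the last step uses that $\psi_n$ is decreasing and $\alpha_\ell\ge\tilde\alpha'$; hence $\alpha_\ell$ satisfies the selection rule and $\hat\alpha\ge\alpha_\ell$. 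For the final bound: since $\hat\alpha\ge\alpha_\ell$, the pair $(\hat\alpha,\alpha_\ell)$ enters the defining constraint for $\hat\alpha$, so $\|\hat f_{\lambda_{\hat\alpha}}-\hat f_{\lambda_{\alpha_\ell}}\|_{L_2}\le c_0\psi_n(\alpha_\ell)$, and then
\[
\|\hat f_{\lambda_{\hat\alpha}}-f_0\|_{L_2}\le\|\hat f_{\lambda_{\hat\alpha}}-\hat f_{\lambda_{\alpha_\ell}}\|_{L_2}+\|\hat f_{\lambda_{\alpha_\ell}}-f_0\|_{L_2}\lesssim s\,\psi_n(\alpha_\ell)\asymp s\,\psi_n(\alpha_0),
\]
where the scale-$s$ anchor bound supplies the factor $s$. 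Squaring and using $\psi_n(\alpha_\ell)\asymp\psi_n(\alpha_0)$ gives $\mcE(\hat f^{*})\le s^2(n/\logn)^{-2\alpha_0/(2\alpha_0+d)}$.

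The main obstacle is the bookkeeping in the uniform deviation step. One must verify that Theorem~\ref{thm: non-adaptive rate of KRR with Gaussian} genuinely applies at every candidate $\alpha\le\alpha_0$ (the embedding $H^{\alpha_0}\hookrightarrow H^\alpha$ and the resulting uniform control of $\|f_0\|_{H^\alpha}$), and one must track two distinct confidence scales simultaneously: the fixed scale $c_0$ needed for admissibility across all $O(\logn)$ grid points, and the free scale $s$ needed for the single anchor. Reconciling these so that their union yields exactly the stated failure probability, with the $(\logn)^2$ polynomial prefactor arising from the union over the grid, is where the care lies; the behaviour of the over-regularized candidates above $\alpha_0$, by contrast, requires no separate treatment.
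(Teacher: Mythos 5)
Your proposal is correct and takes essentially the same two-scale Lepski argument as the paper: the grid reduction via $\psi_{n}(\alpha_{j-1}) \asymp \psi_{n}(\alpha_{j})$, the non-adaptive bound of Theorem~\ref{thm: non-adaptive rate of KRR with Gaussian} applied at every candidate below $\alpha_{0}$ through the embedding $H^{\alpha_{0}} \hookrightarrow H^{\alpha}$, and the two confidence scales (fixed $c_{0}$ for the union over the grid, free $s$ for the single anchor) all coincide with the paper's proof. The only difference is bookkeeping: you rule out under-selection deterministically on a uniform good event (showing $\hat{\alpha} \geq \alpha_{\ell}$ via admissibility of $\alpha_{\ell}$), whereas the paper decomposes over the events $\{\hat{\alpha} = \alpha_{j}\}$ and bounds $\mbP(\mcE_{j})$ for each $j$ below the truth --- these are contrapositive formulations of the same step, and your version in fact needs only a single union of size $O(\logn)$, so the stated $(\logn)^{2}$ prefactor in the failure probability is slack you do not actually require.
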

\begin{remark}
When $\mu$ is unknown, one can get around the problem by using the balancing principle \citep{de2010adaptive}. This approach controls the empirical squared and the RKHS norm between two non-adaptive estimators in (\ref{eqn: Lepski process}) and the results generated from these alternative norms are combined to produce an adaptive estimator with the desired excess risk.
\end{remark}
\begin{proof}
Modify Theorem \ref{thm: non-adaptive rate of KRR with Gaussian} a bit, we get 
\begin{equation*}
    \mbP\left( \|\hat{f}_{\lambda_{\alpha_{0}}} - f_{0} \|_{L_{2}}^2 n^{\frac{2\alpha_{0}}{2\alpha_{0} + d}} \gtrsim s^{2} \right) \leq 4\exp(-s), \quad \forall s \geq \log(4),
\end{equation*}
and then
\begin{equation}\label{eqn: prob inequality with logn}
    \mbP\left( \|\hat{f}_{\lambda_{\alpha_{0}}} - f_{0} \|_{L_{2}}^2 \left(\frac{n}{\logn}\right)^{\frac{2\alpha_{0}}{2\alpha_{0} + d}} \gtrsim s^{2} \right) \leq 4\exp\left(-s (\logn)^{\frac{\alpha_{0}}{2\alpha_{0} + d}} \right).
\end{equation}
To simplify the notation, we define $\lambdat = \log(\frac{1}{\lambda})^{-\frac{1}{2}}$ and 
\begin{equation*}
    \lambdat_{\alpha} = \left( \frac{n}{\logn} \right)^{-\frac{1}{2\alpha + d}} \quad \text{and} \quad \psi_{n}(\alpha) = \left(\lambdat_{\alpha}\right)^{2\alpha} = \left( \frac{n}{\logn} \right)^{-\frac{2\alpha}{2\alpha + d}}.
\end{equation*}
Following these definitions, the Lepski's estimator can be reformulated as 
\begin{equation*}
    \hat{f}^{*} := \hat{f}_{\lambdat_{\hat{\alpha}}}\quad  \text{with} \quad \hat{\alpha} = \max \left\{ \tilde{\alpha} \in \mcA: \|\hat{f}_{\lambdat_{\tilde{\alpha}}} - \hat{f}_{\lambdat_{\tilde{\alpha}'}}\|_{L_{2}} \leq c_{0} \left(\frac{n}{\log(n)}\right)^{-\frac{\tilde{\alpha}'}{2\tilde{\alpha}'+d}}, \forall \tilde{\alpha}'\leq \tilde{\alpha}, \tilde{\alpha}'\in \mcA \right\}.
\end{equation*}
We remark here that since $\lambdat$ one-to-one corresponds to $\lambda$, the definition of Lepski's estimator $\hat{f}^{*}$ doesn't change.
Next, we aim to prove 
\begin{equation*}
    \sup_{\tilde{\alpha} \in \mcA} \sup_{f_{0}\in H^{\alpha}} \frac{\left\| \hat{f}^{*} - f_{0} \right\|_{L_{2}}^{2}}{\psi_{n}(\alpha_{0})}  \lesssim s^2 
\end{equation*}
with high-probability.

First, we show that it is sufficient to consider the true Sobolev space $\alpha_{0}$ in $\mcA$. If $\alpha_{0} \in (\alpha_{j-1}, \alpha_{j})$, then $H^{\alpha_{j}} \subset H^{\alpha_{0}} \subset H^{\alpha_{j-1}}$. Therefore, since $\psi_{n}(\alpha_{0})$ is squeezed between $\psi_{n}(\alpha_{j-1})$ and $\psi_{n}(\alpha_{j})$, we just need to show $\psi_{n}(\alpha_{j-1}) \asymp \psi_{n}(\alpha_{j})$. By the definition of $\psi_{n}(\alpha)$, the claim follows since
\begin{equation*}
    \log \frac{\psi_{n}(\alpha_{j-1})}{\psi_{n}(\alpha_{j})} = \left( -\frac{2\alpha_{j-1}}{2\alpha_{j-1}+d} + \frac{2\alpha_{j}}{2\alpha_{j} + d} \right) \log \frac{n}{\logn} \asymp (\alpha_{j} - \alpha_{j-1})\logn \asymp 1.
\end{equation*}
Therefore, we assume $f_{0} \in H^{\alpha_{i}}$ where $i\in \{1,2,\cdots,N\}$. Let $\mcE_{j}$ be the event that $\hat{\alpha} = \alpha_{j}$. We have 
\begin{equation*}
    \left\| \hat{f}^{*} - f_{0} \right\|_{L_{2}}^2 \psi_{n}(\alpha_{i})^{-1} = \sum_{j=1}^{N} \left\| \hat{f}_{\lambdat_{\alpha_{j}}} - f_{0} \right\|_{L_{2}}^2  \psi_{n}(\alpha_{i})^{-1} \mathbf{1}_{\mcE_{j}}.
\end{equation*}
For $j \geq i$, on the event $\mcE_{j}$, it holds that 
\begin{equation*}
    \left\|\hat{f}_{\lambdat_{\alpha_{j}}}- \hat{f}_{\lambdat_{\alpha_{i}}} \right\|_{L_{2}}^2  \psi_{n}(\alpha_{i})^{-1} \leq c_{0}^2
\end{equation*}
by the definition of the estimator. Hence
\begin{equation*}
\begin{aligned}
    &\sum_{j=i}^{N} \left\| \hat{f}_{\lambdat_{\alpha_{j}}} - f_{0} \right\|_{L_{2}}^2  \psi_{n}(\alpha_{i})^{-1} \mathbf{1}_{\mcE_{j}}\\
    = & \sum_{j=i}^{N} \left\| \hat{f}_{\lambdat_{\alpha_{j}}} - \hat{f}_{\lambdat_{\alpha_{i}}} + \hat{f}_{\lambdat_{\alpha_{i}}}- f_{0} \right\|_{L_{2}}^2  \psi_{n}(\alpha_{i})^{-1} \mathbf{1}_{\mcE_{j}}\\
    \leq & \sum_{j=i}^{N} \left\{2c_{0}^2 + 2 \left\|  \hat{f}_{\lambdat_{\alpha_{i}}}- f_{0} \right\|_{L_{2}}^2  \psi_{n}(\alpha_{i})^{-1}  \right\}  \mathbf{1}_{\mcE_{j}}\\
    \leq & 2c_{0}^2 + \left\|\hat{f}_{\lambdat_{\alpha_{i}}}- f_{0} \right\|_{L_{2}}^2  \psi_{n}(\alpha_{i})^{-1}.
\end{aligned}
\end{equation*}
Thus, following the result from (\ref{eqn: prob inequality with logn}), we have
\begin{equation*}
    \sum_{j=i}^{N} \left\| \hat{f}_{\lambdat_{\alpha_{j}}} - f_{0} \right\|_{L_{2}}^2  \psi_{n}(\alpha_{i})^{-1} \mathbf{1}_{\mcE_{j}} \lesssim s^{2}
\end{equation*}
with probability at least $1 -4\exp\{ -s (\logn)^{\frac{\alpha_{i}}{2\alpha_{i} + d}} \}$.

Now, we turn to $j < i$. By the definition of $\mcE_{j}$, there exists $\alpha'\in \mcA$ with $\alpha' < \alpha_{i}$ such that $\|\hat{f}_{\lambdat_{\alpha_{i}}} - \hat{f}_{\lambdat_{\alpha'}}\|_{L_{2}}^2 \psi_{n}(\alpha')^{-1} > c_{0}^2$. Hence, by triangle inequality, we have either $\|\hat{f}_{\lambdat_{\alpha_{i}}} -f_{0}\|_{L_{2}}^2 \psi_{n}(\alpha')^{-1} > c_{0}^2/4$ or $\| \hat{f}_{\lambdat_{\alpha'}} - f_{0}\|_{L_{2}}^2 \psi_{n}(\alpha')^{-1} > c_{0}^2/4$. Then, we have 
\begin{equation}\label{eqn: prob of event E_j}
    \mbP \left(\mcE_{j}\right)  \leq \sum_{\ell=1}^{i-1} \left( 
    \mbP\left( \|\hat{f}_{\lambdat_{\alpha_{i}}} -f_{0}\|_{L_{2}}^2 \psi_{n}(\alpha_{\ell})^{-1} > c_{0}^2/4 \right) + \mbP\left(\| \hat{f}_{\lambdat_{\alpha_{\ell}}} - f_{0}\|_{L_{2}}^2 \psi_{n}(\alpha_{\ell})^{-1} > c_{0}^2/4 \right)  \right).
\end{equation}
Since for $\ell < i$, $f_{0}\in H^{\alpha_{i}} \subset H^{\alpha_{\ell}}$ leads to 
\begin{equation*}
    \mbP\left( \|\hat{f}_{\lambdat_{\alpha_{i}}} -f_{0}\|_{L_{2}}^2 \psi_{n}(\alpha_{\ell})^{-1} > c_{0}^2/4 \right) \leq \mbP\left( \|\hat{f}_{\lambdat_{\alpha_{\ell}}} -f_{0}\|_{L_{2}}^2 \psi_{n}(\alpha_{\ell})^{-1} > c_{0}^2/4 \right),
\end{equation*}
it is sufficient to focus on the concentration of $\hat{f}_{\lambdat_{\ell}}$ to $f_{0}$. Setting $s = c_{0}/2$ in (\ref{eqn: prob inequality with logn}), we have 
\begin{equation*}
    \mbP\left( \|\hat{f}_{\lambdat_{\alpha_{\ell}}} -f_{0}\|_{L_{2}}^2 \psi_{n}(\alpha_{\ell})^{-1} > c_{0}^2/4 \right) \leq 4\exp\left\{ - \frac{c_{0}}{2} (\logn)^{\frac{\alpha_{\ell}}{2\alpha_{\ell} + d}}\right\}.
\end{equation*}
Thus,
\begin{equation*}
    \mbP(\mcE_{j}) \leq 8 \exp\left\{ - \frac{c_{0}}{2} (\logn)^{\frac{\alpha_{\min}}{2\alpha_{\min} + d}} \right\}.
\end{equation*}
Combining the above result with (\ref{eqn: prob of event E_j}) and noting that on $\mcE_{j}^{c}$, $\mathbf{1}_{\mcE_{j}} = 0$, it yields that 
\begin{equation*}
    \sum_{j=1}^{i-1} \left\| \hat{f}_{\lambdat_{\alpha_{j}}} - f_{0} \right\|_{L_{2}}^2  \psi_{n}(\alpha_{i})^{-1} \mathbf{1}_{\mcE_{j}} \lesssim s^{2}
\end{equation*}
with probability at least $1 - 4\exp\{ -s (\logn)^{\frac{\alpha_{i}}{2\alpha_{i} + d}} \} - 4\exp\{ -\frac{c_{0}}{2} (\logn)^{\frac{\alpha_{\min}}{2\alpha_{\min} + d}} \} (\logn)^2$. 

Finally, the proof is finished by combining the results for summation over $j\geq i$ and summation over $j< i$. 
\end{proof}

\section{Additional Simulation Results}\label{apd: simulation}
Reproducible codes for generating all simulation results in the main paper and the following sections are available in the supplemental material.

\subsection{Additional Results for Target-Only KRR with Gaussian kernels}
In Section~\ref{subsec: simulation for Adaptivity of Gaussian KRR}, we only present the best lines with the optimal $C$. In this part, we report the generalization error decay curve for different $C$. We report the results in Figure~\ref{fig: adaptive and nonadaptive rate for target-only KRR with all slope}. Each subfigure contains 7 different lines that center around the optimal line. One can see even with different $C$, the empirical error decay curves are still aligned with the theoretical ones.

\begin{figure}[ht]
    \centering
    \includegraphics[width = \textwidth]{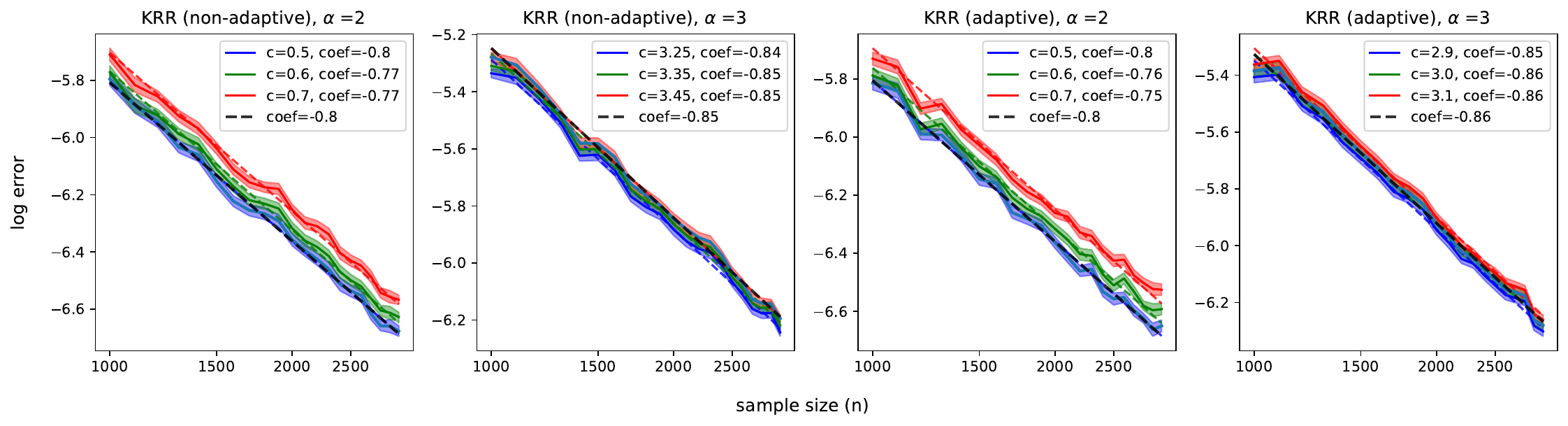}
    \caption{Error decay curves of target-only KRR based on Gaussian kernel, both axes are in log scale. The curves with different colors correspond to different $C$ and denote the average logarithmic generalization errors over 100 trials. The dashed black lines denote the theoretical decay rates.
    }
    \label{fig: adaptive and nonadaptive rate for target-only KRR with all slope}
\end{figure}

\subsection{Additional Results for TL Algorithm Comparison}

\paragraph{Implementation of Finite Basis Expansion:}

Denote the finite basis estimator (FBE) for a regression function as 
\begin{equation*}
    \hat{f}_{M}(x) = \sum_{j=1}^{M} \beta_{j} B_{j}(x)
\end{equation*}
where $B_{j}$ are given finite basis or spline functions with a different order and $M$ denotes the truncation number which generally controls the variance-bias trade-off of the estimator. Then the transfer learning procedure proposed in \citet{wang2016nonparametric} can be summarized by the following $4$ steps.
\begin{enumerate}
    \item Estimate $f_{S}$ using the FBE and source data, output $\hat{f}_{S,M_{1}}$
    \item Produce the pseudo label $\hat{y}_{T,i}$ using $\hat{f}_{S,M_{1}}$ and $x_{T,i}$, obtain the offset estimation as $y_{T,i} - \hat{y}_{T,i}$.
    \item Estimate the offset function using the FBE with $\{(y_{T,i} - \hat{y}_{T,i}, x_{T,i})\}_{i=1}^{N_{T}}$, output $\hat{f}_{\delta,M_{2}}$.
    \item Return $\hat{f}_{S,M_{1}} + \hat{f}_{\delta,M_{2}}$ as the estimator for $f_{T}$.
\end{enumerate}

\paragraph{Regularization Selection in SATL:}
In target-only KRR results, for all $n$, we fixed the constant $C$ and reported the best generalization error decay curves in Figure~\ref{fig: adaptive and nonadaptive rate for target-only KRR with the best slope} and other error decay curves for other $C$s in Figure~\ref{fig: adaptive and nonadaptive rate for target-only KRR with all slope}. In SATL, one can also conduct a similar tuning strategy and select the best performer $C$. However, this can be computationally not sufficient. For example, if one has $40$ candidate for $C$, than there would be total $40^2$ constants combinations in two-step transfer learning process. Such a problem also appeared in FBE approaches where one needs to tune the optimal $M$ (number of basis or the degree of Bspline). 

Therefore, for each $\alpha$, we determine the constant $C$ in $\exp\{-Cn^{-\frac{1}{2\alpha + d}}\}$ via following cross-validation (CV) approach.
We consider the largest sample size in the current setting, i.e. largest $n_{S}$ while estimating the source model and largest $n_{T}$ while estimating the offset, and the estimate $C$ is obtained by the classical K-fold CV, then the estimated $\hat{C}$ is used for all sample size in the experiments.

\paragraph{Additional Results for different basis in FBE:}
When comparing our SATL with the TL algorithm proposed in \citet{wang2016nonparametric}, we only present the result of using B-spline in the main paper. In this part, we give a detailed description of the implementation for the comparison and provide additional simulation results based on other basis.

\begin{figure}[ht]
    \centering
    \includegraphics[width = 0.8\textwidth]{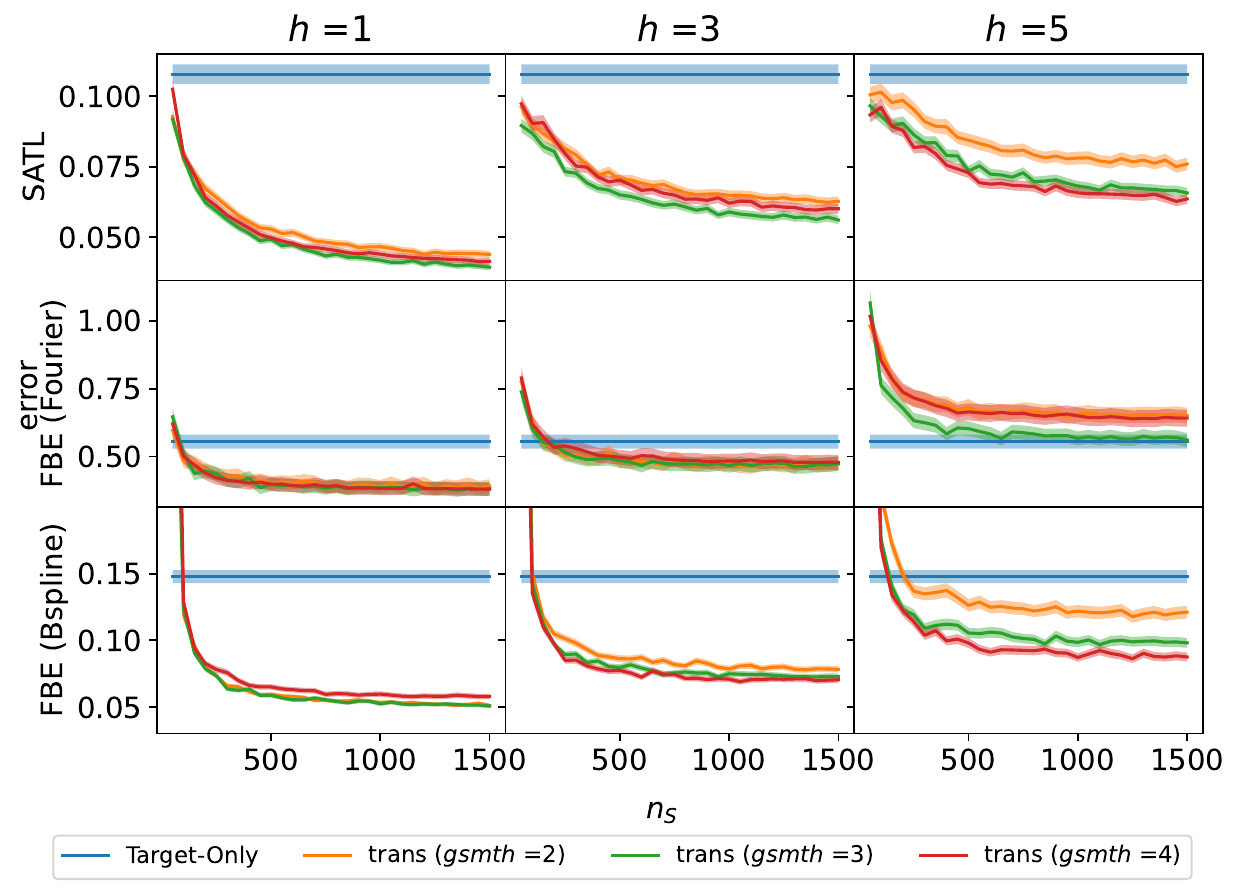}
    \caption{Generalized error for fixed $n_{T}$ scenario under different $h$ and smoothness of $f_{T}-f_{S}$. Each row represents SATL, FBE-based-TL with Fourier basis and B-spline respectively.}
    \label{fig: KRR, BE, Bspline for fixed target size}
\end{figure}

In our implementation, we consider the finite basis as (1) Fourier basis $B_{j}(x) = \sqrt{2}cos(\pi *k*x)$ (which was used in \citet{wang2016nonparametric}) and (2) $B_{j}$ being the $j$-th order B-spline. In \citet{wang2016nonparametric}, the authors use $m_{1} = m_{2}=500$ but we notice this will hugely degrade the algorithm performance. Therefore, we use CV to select $m_{1}$ and $m_{2}$ to optimize the algorithm performance. We now present the generalization error for all three candidates under the fixed $n_{T}$ scenario in Figure~\ref{fig: KRR, BE, Bspline for fixed target size}. Focusing on the Fourier Basis row, we can see the error is much higher than our proposed GauKK-TL and B-spline approach. Such a performance is expected as the fixed basis one employed doesn't match the underlying structure of the source and target function.

\end{document}